\documentclass[10pt,journal,compsoc]{IEEEtran}

\usepackage{amsmath,amssymb,amsfonts}
\usepackage{algorithmic}
\usepackage{algorithm}
\usepackage{graphicx}
\usepackage{textcomp}
\usepackage{xcolor}
\usepackage{booktabs}
\usepackage{multirow}
\usepackage{hyperref}
\usepackage{cleveref}
\usepackage{bm}
\usepackage{subcaption}

\newcommand{\vect}[1]{\mathbf{#1}}
\newcommand{\mat}[1]{\mathbf{#1}}
\newcommand{\R}{\mathbb{R}}

\newcommand{\nobsp}{NObSP}

\usepackage{booktabs}

\usepackage{amsthm}
\usepackage{tikz}
\usetikzlibrary{shapes.geometric, arrows.meta, positioning, calc, fit, backgrounds}

\newtheorem{definition}{Definition}[section]
\newtheorem{proposition}{Proposition}[section]
\newtheorem{theorem}{Theorem}[section]

\usepackage{xcolor}

\begin{document}

\title{NObSP: Functional Decomposition of Neural Networks via Oblique Subspace Projections}

\author{Alexander~Caicedo, V\'ictor~De~La~Hoz, and~Santiago~Alf\'erez
\IEEEcompsocitemizethanks{\IEEEcompsocthanksitem A. Caicedo is with the Department of Electronic Engineering, Pontificia Universidad Javeriana, Bogot\'a, Colombia.
\protect\\
E-mail: acaicedod@javeriana.edu.co
\IEEEcompsocthanksitem V. De La Hoz is with Pontificia Universidad Javeriana, Bogot\'a, Colombia.
\IEEEcompsocthanksitem S. Alf\'erez is with the Department of Mathematics, Universitat Polit\`ecnica de Catalunya, Barcelona, Spain.
\protect\\
E-mail: santiago.alferez@upc.edu
\IEEEcompsocthanksitem This work is part of a research project funded by the Ministry of Science and Innovation of Spain, with reference PID2023-146261OB-I00.
}}

\IEEEtitleabstractindextext{

\begin{abstract}
Understanding how deep neural networks make decisions remains a fundamental challenge. We present \nobsp{} (Nonlinear Oblique Subspace Projections), a framework that decomposes predictions into explicit per feature contribution functions and an interaction residual. \nobsp{} exploits the linear final layer of a trained network and uses oblique projections in sample space to reduce double counting when learned feature subspaces overlap, thereby supporting both local explanations and global functional analysis. We establish connections to functional ANOVA and the Kolmogorov-Arnold representation theorem and derive an efficient partial regression algorithm for out of sample evaluation. For convolutional networks, \nobsp{}-CAM produces class activation maps without backward passes after a one time calibration. Experiments on tabular and vision benchmarks show faithfulness comparable to established attribution methods. On a synthetic benchmark with known component functions, \nobsp{} obtains a Function Reproduction Score of 0.989, compared with 0.966 for KernelSHAP and 0.922 for Integrated Gradients. On TinyImageNet, contribution vector embeddings improve mean nearest neighbor class purity from 0.654 for raw activations to 0.713 and reduce mean neighbor distance by more than half. These results indicate that \nobsp{} complements scalar attribution methods by recovering functional contribution profiles with separable positive and negative evidence.
\end{abstract}

\begin{IEEEkeywords}
Neural network interpretability, explainable AI, oblique subspace projections, functional decomposition, class activation maps, ANOVA decomposition, feature attribution.
\end{IEEEkeywords}

\vspace{0.5em}
\begin{center}
\footnotesize\itshape
This work has been submitted to the IEEE for possible publication. Copyright may be transferred without notice, after which this version may no longer be accessible.
\end{center}}

\maketitle

\IEEEdisplaynontitleabstractindextext

\IEEEpeerreviewmaketitle

\section{Introduction}
\label{sec:introduction}

\IEEEPARstart{T}{he} deployment of deep neural networks in high risk domains including medical diagnosis, autonomous systems, financial decisions, and legal reasoning, has created an urgent need for interpretability methods that explain how these models arrive at their predictions~\cite{doshi2017towards,lipton2018mythos}. Regulatory frameworks such as the European Union's General Data Protection Regulation increasingly require explanations for automated decisions~\cite{goodman2017european}. Yet neural networks remain largely opaque; while we can observe their inputs and outputs, the internal computations that transform one to the other are difficult to interpret.
Existing interpretability methods face fundamental limitations. Attribution methods such as SHAP~\cite{lundberg2017unified} and LIME~\cite{ribeiro2016why} provide scalar importance scores, but knowing that feature $x_k$ contributes a specific amount to the prediction reveals far less than knowing its functional relationship $g_k(x_k)$. Functional decomposition methods such as Partial Dependence Plots (PDP)~\cite{friedman2001greedy} and Accumulated Local Effects (ALE) ~\cite{apley2020visualizing} recover functional forms but struggle with correlated features, accumulate numerical errors, and lack efficient out-of-sample extension, PDP requires re-averaging over training samples for each new query while ALE produces binned approximations that do not naturally extrapolate beyond the observed feature range. For convolutional networks, Class Activation Mapping methods~\cite{selvaraju2017grad,muhammad2020eigen} highlight spatial regions but provide no insight into the functional relationship between pixel values and predictions. Most critically, existing methods are designed for either local or global interpretation, requiring practitioners to employ multiple techniques for comprehensive understanding.

This paper presents \nobsp{} (Nonlinear Oblique Subspace Projections), a mathematically principled framework that addresses these limitations by decomposing neural network predictions into interpretable functional components. The main idea behind this decomposition is architectural; in a trained neural network, all nonlinearity resides in how inputs transform to the penultimate layer representation $\mathbf{z} = f_{\text{NN}\to L-1}(\mathbf{x})$, while the final layer acting on this latent variable, $\hat{y} = \mathbf{w}^\top \mathbf{z} + b$, is linear. This means the prediction vector $\hat{\vect{y}} = \mat{Z}\vect{w} + b\vect{1}$ is linear in the representations, so oblique projection decomposition in sample space $\R^N$ is tractable. The subspaces $\mathcal{V}_k$ traced out by each isolated input feature encode the learned nonlinear transformations, passing an input where only feature $k$ is active through the network reveals exactly how the network processes that feature.

\nobsp{} decomposes predictions into additive functional components:
\begin{equation}
\hat{y}(\mathbf{x}) = \sum_{k=1}^{d} g_k(x_k) + G_{\text{int}}(\mathbf{x}) + b,
\label{eq:nobsp_decomp}
\end{equation}
where $g_k(x_k)$ represents the nonlinear contribution of feature $k$, and $G_{\text{int}}$ captures interaction effects. The critical challenge is that learned subspaces are generally not orthogonal, the network may use overlapping representations for different features, a phenomenon known as superposition~\cite{elhage2022superposition}. Classical ANOVA decomposition requires orthogonality, but \nobsp{} resolves this through oblique subspace projections~\cite{behrens1994signal}, which project onto a target subspace along, not perpendicular to, a reference subspace. This isolates the unique contribution of each feature while properly accounting for representational overlap.

The relationship between \nobsp{} and existing methods indicates its position in the interpretability landscape. Both \nobsp{} and SHAP connect to functional ANOVA decomposition. SHAP distributes interaction effects among features via Shapley values, yielding scalars that answer \emph{how much does feature $k$ contribute?} \nobsp{} instead keeps functional terms separate, yielding complete functions that answer \emph{how does feature $k$ contribute across its range?} These perspectives are complementary. Compared to Partial Dependence Plots, \nobsp{} avoids extrapolation into unlikely feature combinations; compared to Accumulated Local Effects, it does not rely on derivative estimation and accumulation. Unlike gradient based CAM methods that operate per sample, \nobsp{} provides a unified framework for both local explanations, individual predictions, and global analysis, feature effects across the dataset.

Beyond attribution, \nobsp{} contribution vectors $\mathbf{g}(\mathbf{x}) = [g_1(x_1), \ldots, g_d(x_d)]^\top$ induce a geometrically meaningful embedding space. Preliminary experiments show that this embedding improves mean class purity in nearest neighbor retrieval (0.71 vs 0.65 for raw activations on TinyImageNet, with median purity 1.0), suggesting that \nobsp{} captures the classifier's decision geometry rather than merely the feature space. This property enables applications in uncertainty quantification and similarity based explanation retrieval.

This paper makes the following contributions:

\begin{enumerate}
\item \textbf{Theoretical Framework}: We establish formal connections between \nobsp{}, the Kolmogorov-Arnold representation theorem, and functional ANOVA decomposition, stated under explicit assumptions.

\item \textbf{Efficient Algorithm}: We introduce a partial regression formulation that reduces computational complexity from $O(N^2)$ to $O(d_z)$ for out-of-sample evaluation, where $d_z$ is the penultimate layer dimension, enabling practical application to large-scale models.

\item \textbf{\nobsp{}-CAM}: We extend \nobsp{} to convolutional neural networks, producing class activation maps grounded in the oblique projection framework.

\item \textbf{Embedding Geometry Analysis}: We provide preliminary evidence that \nobsp{} contribution vectors form a useful retrieval embedding, improving neighborhood class purity by 6 percentage points over raw activations and reducing mean neighbor distance by more than half on TinyImageNet, without requiring ground truth labels at inference.

\item \textbf{Comprehensive Evaluation}: We benchmark \nobsp{} on both tabular and vision tasks, showing faithfulness comparable to established attribution methods while adding a complementary form of interpretability unavailable from scalar attributions, explicit per feature contribution functions.
\end{enumerate}

The remainder of this paper is organized as follows. Section~\ref{sec:related} reviews related work. Section~\ref{sec:theory} presents the theoretical foundations. Section~\ref{sec:method} describes the \nobsp{} framework and its extension to convolutional neural networks, including the embedding geometry analysis in Subsection~\ref{sec:embedding}. Section~\ref{sec:experiments} details experimental setup. Section~\ref{sec:results} presents results. Section~\ref{sec:discussion} discusses findings and limitations. Section~\ref{sec:conclusion} concludes.

\section{Related Work}
\label{sec:related}

This section reviews the interpretability landscape, positioning \nobsp{} relative to existing approaches. Table~\ref{tab:method_comparison} summarizes the key differences across methods.

\begin{table}[t]
\centering
\caption{Comparison of interpretability methods. \nobsp{} provides functional forms with unified local and global interpretation while accounting for overlap between learned feature subspaces.}
\label{tab:method_comparison}
\footnotesize
\setlength{\tabcolsep}{4pt}
\begin{tabular}{lccccc}
\toprule
Method & Output & Local & Global & Functional & Correlation \\
\midrule
LIME & Scalar & \checkmark & -- & -- & -- \\
SHAP & Scalar & \checkmark & Agg. & -- & \checkmark \\
IG & Scalar & \checkmark & -- & -- & -- \\
PDP & Curve & -- & \checkmark & \checkmark & -- \\
ALE & Curve & -- & \checkmark & \checkmark & \checkmark \\
GradCAM & Heatmap & \checkmark & -- & -- & N/A \\
\textbf{\nobsp{}} & \textbf{Function} & \checkmark & \checkmark & \checkmark & \checkmark \\
\bottomrule
\end{tabular}
\end{table}

\subsection{Gradient Based Attribution Methods}

Gradient based methods leverage the differentiability of neural networks to assign importance scores to input features. Saliency Maps~\cite{simonyan2014deep} compute the gradient of the output with respect to the input, measuring local sensitivity but suffering from gradient saturation and providing only first-order information. Integrated Gradients~\cite{sundararajan2017axiomatic} addresses some limitations by accumulating gradients along a path from a baseline to the input, satisfying desirable axioms including completeness and sensitivity. DeepLIFT~\cite{shrikumar2017learning} propagates activation differences through the network with improved computational efficiency. However, all these methods share baseline dependency and provide scalar attributions rather than functional forms.

For convolutional networks, Class Activation Mapping (CAM)~\cite{zhou2016learning} exploits global average pooling to produce spatial heatmaps. GradCAM~\cite{selvaraju2017grad} generalizes this to arbitrary architectures by using gradients to compute channel importance weights. Subsequent variants include GradCAM++~\cite{chattopadhay2018grad}, which uses weighted combinations of positive gradients, Score-CAM~\cite{wang2020score}, which perturbs inputs with upsampled activation maps, and Eigen-CAM~\cite{muhammad2020eigen}, which uses principal component analysis on feature maps. All gradient based methods share a fundamental limitation, they provide point attributions or spatial heatmaps but not the functional relationship $g_k(x_k)$ between input values and predictions.

\subsection{Perturbation Based Attribution Methods}

Perturbation based methods measure feature importance by observing prediction changes when inputs are modified. LIME~\cite{ribeiro2016why} fits a sparse linear model around a specific input, weighting samples by proximity and encouraging sparsity. While model agnostic, LIME provides only local linear approximations that may poorly represent nonlinear behaviour.

SHAP~\cite{lundberg2017unified} computes Shapley values from cooperative game theory, satisfying desirable properties including local accuracy, missingness, and consistency. The relationship between SHAP and \nobsp{} merits careful analysis. Both connect to functional ANOVA decomposition but in different ways. SHAP distributes interaction effects among participating features according to Shapley values, producing a single scalar per feature that aggregates main effects and allocated interactions. \nobsp{} instead keeps these terms separate, $g_k(x_k)$ captures the main effect as a function, while interaction terms remain distinct. Thus, SHAP answers ``how much does feature $k$ contribute?'' while \nobsp{} answers ``how does feature $k$ contribute as a function of its value?''

\subsection{Functional Decomposition Methods}

Functional decomposition methods aim to recover the relationship between individual features and predictions, moving beyond scalar attributions.

Partial Dependence Plots (PDP)~\cite{friedman2001greedy} estimate the marginal effect of feature $k$ by averaging predictions over the empirical distribution of other features. While intuitive, PDP has critical limitations. When features are correlated, averaging over the marginal distribution includes combinations that may never occur in the data, potentially producing misleading extrapolations. Additionally, PDP lacks efficient out-of-sample extension, evaluating a new point requires re-averaging over all training samples, demanding $O(N)$ model evaluations per query.

Individual Conditional Expectation (ICE) plots~\cite{goldstein2015peeking} address heterogeneity by displaying curves for individual observations rather than averaging. However, ICE inherits the extrapolation problem of PDP and can produce cluttered visualizations.

Accumulated Local Effects (ALE)~\cite{apley2020visualizing} addresses the correlation problem by accumulating local derivatives using conditional expectations. By avoiding extrapolation into unlikely feature combinations, ALE handles correlated features more appropriately than PDP. However, the accumulation of finite differences introduces numerical errors, the method exhibits sensitivity near points with discontinuous derivatives, and it produces binned approximations that do not naturally extrapolate beyond the observed feature range.

Functional ANOVA decomposition~\cite{hooker2007generalized} provides the theoretical foundation for these approaches. When features are correlated, the standard definition becomes problematic: conditional expectations conflate the effect of one feature with effects of correlated features. \nobsp{} offers advantages over these functional methods: unlike PDP, it does not average over marginal distributions and thus avoids extrapolation; unlike ALE, it does not rely on derivative estimation and accumulation; and unlike standard ANOVA implementations, it uses oblique rather than orthogonal projections to reduce double counting when learned feature subspaces overlap. Furthermore, \nobsp{} precomputes coefficient vectors during calibration, enabling efficient $O(d_z)$ out-of-sample evaluation with a single forward pass.

\subsection{Interpretable by Design Models}

An alternative to post-hoc interpretation is to constrain model architecture for inherent interpretability. Generalized Additive Models (GAMs)~\cite{hastie1990generalized} restrict the learned function to additive form, guaranteeing interpretability but sacrificing flexibility to model interactions. Neural Additive Models (NAMs)~\cite{agarwal2021neural} implement GAMs with neural networks, learning each univariate function as a separate subnetwork, and GA$^2$M models~\cite{lou2012intelligible} extend additive models with selected pairwise interactions.

The fundamental trade-off is between interpretability and flexibility. Interpretable by design models constrain what can be learned, potentially sacrificing predictive performance. \nobsp{} avoids this trade-off by extracting interpretable decompositions from unconstrained networks that have already learned arbitrary functions.

\subsection{Subspace and Projection Methods}

The mathematical foundation of \nobsp{} lies in subspace projection theory from signal processing. Oblique projections were introduced by Behrens and Scharf~\cite{behrens1994signal} for signal separation when signal and interference subspaces are non-orthogonal. Given a target subspace $\mathcal{V}_k$ and reference subspace $\mathcal{V}_{(k)}$ with $\mathcal{V}_k \cap \mathcal{V}_{(k)} = \{\vect{0}\}$, the oblique projector uniquely decomposes any vector into components lying in each subspace. Unlike orthogonal projection, which minimizes distance to the subspace, oblique projection moves parallel to the reference subspace until reaching the target.

Caicedo et al.~\cite{caicedo2019nonlinear} extended oblique projections to nonlinear function decomposition in Least Squares Support Vector Machines. The present work extends this framework to deep neural networks, where the penultimate layer representation plays the role of the feature map, and introduces computational improvements that reduce complexity from $O(N^2)$ to $O(d_z)$.

\subsection{Evaluation Methods}

Standardized evaluation of attribution methods remains challenging. The Quantus framework~\cite{hedstrom2023quantus} provides a comprehensive toolkit implementing multiple metrics across categories including faithfulness, robustness, and complexity.

Faithfulness Correlation measures the correlation between feature attribution importance and the actual change in model prediction when features are progressively removed. Higher values indicate that the method correctly identifies which features matter most for the prediction. Sparseness measures how concentrated the attributions are, with higher values indicating more focused explanations that highlight specific regions rather than diffusing attention across the input. We employ Faithfulness Correlation for tabular attributions and Sparseness in both domains; for class activation maps we adopt a region level deletion protocol (Section~\ref{sec:exp_vision}), because pixel level perturbation is mismatched to the spatial granularity of CAMs. No single evaluation metric is neutral across attribution families, so we report several and state each metric's bias where relevant.

\section{Theoretical Foundations}
\label{sec:theory}

This section establishes the mathematical foundations of \nobsp{}, connecting oblique subspace projections to functional ANOVA decomposition and neural network interpretation.

\subsection{Oblique Subspace Projections}

Oblique projections generalize orthogonal projections to handle non-orthogonal subspaces. Rather than projecting perpendicular to the target subspace, oblique projection moves parallel to a reference subspace until reaching the target, isolating the component that lies in the target and \textit{cannot be explained} by the reference.

\begin{definition}[Oblique Projector]
\label{def:oblique}
Let $\mathcal{V} = \mathcal{V}_k \oplus \mathcal{V}_{(k)}$ be a direct sum decomposition of vector space $\mathcal{V}$, where $\mathcal{V}_k$ is the target subspace and $\mathcal{V}_{(k)}$ is the reference subspace. The oblique projector onto $\mathcal{V}_k$ along $\mathcal{V}_{(k)}$ is the linear operator $\mat{P}_{k/(k)}$ satisfying:
\begin{enumerate}
    \item $\mat{P}_{k/(k)} \vect{v} \in \mathcal{V}_k$ for all $\vect{v} \in \mathcal{V}$ (range condition),
    \item $\mat{P}_{k/(k)} \vect{v} = \vect{v}$ for all $\vect{v} \in \mathcal{V}_k$ (identity on target),
    \item $\mat{P}_{k/(k)} \vect{v} = \vect{0}$ for all $\vect{v} \in \mathcal{V}_{(k)}$ (null on reference).
\end{enumerate}
\end{definition}

The third condition distinguishes oblique from orthogonal projection: vectors in the reference subspace project to zero, even if they have nonzero orthogonal projection onto the target. This ensures that shared components between subspaces are not double counted.

Given matrices $\mat{A}_k$ and $\mat{A}_{(k)}$ whose columns span $\mathcal{V}_k$ and $\mathcal{V}_{(k)}$ respectively, the oblique projector admits the closed form expression
\begin{equation}
\mat{P}_{k/(k)} = \mat{A}_k \left(\mat{A}_k^T \mat{Q}_{(k)} \mat{A}_k\right)^{\dagger} \mat{A}_k^T \mat{Q}_{(k)},
\label{eq:oblique_proj}
\end{equation}
where $\dagger$ denotes the Moore-Penrose pseudoinverse and
\begin{equation}
\mat{Q}_{(k)} = \mat{I} - \mat{A}_{(k)}\left(\mat{A}_{(k)}^T\mat{A}_{(k)}\right)^{\dagger}\mat{A}_{(k)}^T
\label{eq:q_matrix}
\end{equation}
is the orthogonal projector onto the null space of $\mathcal{V}_{(k)}$.

\subsection{Connection to Functional ANOVA}

The Hoeffding or Functional ANOVA decomposition~\cite{hoeffding1948class,hooker2007generalized} provides that any square-integrable function $f: \R^d \to \R$ of independent random variables admits a unique decomposition:
\begin{equation}
f(\vect{x}) = f_0 + \sum_{k=1}^{d} f_k(x_k) + \sum_{k<m} f_{km}(x_k, x_m) + \ldots + f_{1,\ldots,d}(\vect{x}),
\label{eq:anova}
\end{equation}
where components satisfy orthogonality constraints ensuring that main effects capture only variance due to individual variables.

The connection to oblique projections emerges when we represent functions by their evaluations over a sample. Given $N$ observations, the vector $\vect{y} = [f(\vect{x}^{(1)}), \ldots, f(\vect{x}^{(N)})]^T$ represents function evaluations in $\R^N$. The subspace $\mathcal{V}_k$ represents functions depending only on $x_k$, while $\mathcal{V}_{(k)}$ represents functions of all variables except $x_k$. Classical ANOVA uses orthogonal projections, requiring orthogonality among functional components. When representation subspaces are non-orthogonal, as commonly occurs in neural networks where learned representations exhibit overlap, oblique projections provide correct attribution.

\begin{theorem}[\nobsp{} as Generalized ANOVA]
\label{thm:anova}
Let $\vect{y} \in \R^N$ be function evaluations and $\{\mathcal{V}_k\}_{k=1}^d$ be subspaces representing single-variable functions. The oblique projection decomposition
\begin{equation}
\vect{y} = \sum_{k=1}^{d} \mat{P}_{k/(k)}\vect{y} + \vect{r}
\end{equation}
provides an ANOVA-like decomposition that (i) reduces to classical ANOVA when subspaces are orthogonal, (ii) correctly handles non-orthogonal subspaces via oblique projection, and (iii) provides unique attribution of main effects.
\end{theorem}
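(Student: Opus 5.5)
The plan is to make each of the three claims (i)--(iii) precise in terms of Definition~\ref{def:oblique} and the closed form~\eqref{eq:oblique_proj}--\eqref{eq:q_matrix}. I will then verify them by linear algebra in $\R^N$. Throughout, $\mathcal{V}_{(k)}=\sum_{m\neq k}\mathcal{V}_m$, spanned by $\mat{A}_{(k)}=[\mat{A}_1,\dots,\mat{A}_{k-1},\mat{A}_{k+1},\dots,\mat{A}_d]$. I assume the standing condition $\mathcal{V}_k\cap\mathcal{V}_{(k)}=\{\vect 0\}$ so that the direct sum in Definition~\ref{def:oblique} exists. The residual is defined as $\vect r=\vect y-\sum_k\mat P_{k/(k)}\vect y$, so the displayed identity holds by construction. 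The content of the theorem is therefore in the properties of the summands.

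\emph{Step 1 (projector properties).} I first check that \eqref{eq:oblique_proj} satisfies the three conditions of Definition~\ref{def:oblique}.

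\begin{itemize}
\item Range: the range lies in $\mathrm{col}(\mat A_k)$ by the left factor.
\item Null on reference: $\mat Q_{(k)}\mat A_{(k)}=\mat 0$, so every $\vect v\in\mathcal{V}_{(k)}$ is sent to zero.
\item Identity on target: for $\vect v=\mat A_k\vect c$ we get $\mat A_k(\mat A_k^T\mat Q_{(k)}\mat A_k)^\dagger(\mat A_k^T\mat Q_{(k)}\mat A_k)\vect c$. This equals $\mat A_k\vect c$ because $\mathcal{V}_k\cap\mathcal{V}_{(k)}=\{\vect 0\}$ gives $\ker(\mat Q_{(k)}\mat A_k)=\ker\mat A_k$. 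Hence $\mat A_k$ times the pseudoinverse-product is $\mat A_k$ times the projector onto $\mathrm{row}(\mat A_k)$, which is $\mat A_k$.
\end{itemize}

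This kernel identity is the main technical point. I expect it to be the only place needing care, because $\mat A_k$ may be rank-deficient (redundant penultimate units).

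\emph{Step 2, claim (i).} Suppose the $\mathcal{V}_k$ are mutually orthogonal. Then $\mat A_k^T\mat Q_{(k)}=\mat A_k^T$, since $\mat A_k^T\mat A_{(k)}=\mat 0$. The projector therefore collapses to $\mat A_k(\mat A_k^T\mat A_k)^\dagger\mat A_k^T$, the orthogonal projector onto $\mathcal{V}_k$. The decomposition becomes $\vect y=\sum_k\Pi_{\mathcal{V}_k}\vect y+\vect r$ with $\vect r\perp\bigoplus_k\mathcal{V}_k$. This is exactly the empirical (sample-space) Hoeffding decomposition into main effects plus a higher-order remainder: the main-effect components are mutually orthogonal, and the remainder is orthogonal to all of them. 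When constants are separated out as $f_0$, each $\mathcal{V}_k$ is taken centered, matching the orthogonality constraints under~\eqref{eq:anova}.

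\emph{Step 3, claims (ii) and (iii).} In the nonorthogonal case I show correctness by exact recovery. Suppose $\vect y=\sum_m\vect g_m+\vect r_0$ with $\vect g_m\in\mathcal{V}_m$ and $\vect r_0$ in a complement $\mathcal{W}$ satisfying $\mat Q_{(k)}$-orthogonality to $\mathcal{V}_k$, i.e.\ $\mat A_k^T\mat Q_{(k)}\vect r_0=\vect 0$. Then Step 1 gives $\mat P_{k/(k)}\vect y=\vect g_k$: the terms $\vect g_m$ with $m\neq k$ lie in $\mathcal{V}_{(k)}$ and are annihilated, and $\vect g_k$ is fixed. In contrast, the orthogonal projection would add $\Pi_{\mathcal{V}_k}\vect g_m\neq\vect 0$, which is the double counting. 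This is the precise sense of (ii), and in particular $\vect r=\vect r_0$.

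For (iii), uniqueness follows from the direct-sum hypothesis. Any vector of $\mathcal{V}_k\oplus\mathcal{V}_{(k)}$ has a unique splitting, and a linear operator satisfying the three conditions of Definition~\ref{def:oblique} is determined on $\mathcal{V}_k+\mathcal{V}_{(k)}$. Its action on the remaining directions is fixed by the choice $\mat A_k^T\mat Q_{(k)}$, i.e.\ by requiring $\ker\mat P_{k/(k)}\supseteq\mathcal{V}_k^{\perp_{Q}}$. Hence the main effect $\mat P_{k/(k)}\vect y$ is unique, independent of the spanning matrices $\mat A_k$ and $\mat A_{(k)}$ chosen. To confirm the independence, I replace $\mat A_k$ by $\mat A_k\mat T$ and check that the formula is invariant using the pseudoinverse identity on the range.
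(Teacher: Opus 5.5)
Your proposal is correct and follows the same route as the paper's proof (Supplementary A.2). For (i) you substitute $\mat{Q}_{(k)}\mat{A}_k=\mat{A}_k$ into the closed form, for (ii) you annihilate $\mathcal{V}_{(k)}$ through $\mat{Q}_{(k)}\mat{A}_{(k)}=\mat{0}$, and for (iii) you use uniqueness of the oblique projector under the direct-sum hypothesis; your version is in fact more careful than the paper's, which never checks the identity-on-target property (your $\ker(\mat{Q}_{(k)}\mat{A}_k)=\ker\mat{A}_k$ argument), asserts $\mat{P}_{k/(k)}\vect{y}=\vect{g}_k$ in (A.9) without the residual condition $\mat{A}_k^T\mat{Q}_{(k)}\vect{r}_0=\vect{0}$ that you make explicit (you need it for every $k$ to conclude $\vect{r}=\vect{r}_0$), and does not address how the closed form extends the projector beyond $\mathcal{V}_k\oplus\mathcal{V}_{(k)}$.
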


\begin{proof}[Proof sketch]
When subspaces are orthogonal, $\mat{Q}_{(k)}$ acts as identity on $\mathcal{V}_k$, and \eqref{eq:oblique_proj} reduces to the orthogonal projector. For non-orthogonal subspaces, Definition~\ref{def:oblique} ensures $\mat{P}_{k/(k)}$ extracts only the component uniquely attributable to $\mathcal{V}_k$. The full proof appears in Supplementary Material A.
\end{proof}

\subsection{Decomposition in Neural Networks}

Consider a neural network $f: \R^d \to \R^c$ with $L$ layers. The main idea is that all nonlinearity resides in the mapping to the penultimate layer $\vect{z} = f_{NN \to L-1}(\vect{x}) \in \R^{d_z}$, while the final layer is linear:
\begin{equation}
\hat{y}_j = \vect{w}_j^T \vect{z} + b_j,
\label{eq:linear_final}
\end{equation}
where $\vect{w}_j$ are the weights for output $j$ and $b_j$ is the bias term. This linearity enables a projection based decomposition of the prediction vector in sample space $\R^N$, with per sample evaluation through the penultimate representation.

For each input feature $k$, we construct subspaces by passing isolated inputs through the network. Define $\mat{X}_k$ as the input matrix with only feature $k$ active, others set to zero or a baseline value, and $\mat{X}_{(k)}$ with all features except $k$ active. The corresponding penultimate representations are:
\begin{align}
\mat{Z}_k &= f_{NN \to L-1}(\mat{X}_k), \\
\mat{Z}_{(k)} &= f_{NN \to L-1}(\mat{X}_{(k)}).
\end{align}
Each matrix $\mat{Z}_k \in \R^{N \times d_z}$ embeds the feature $k$ representations into sample space. Its column space $\mathcal{V}_k = \text{col}(\tilde{\mat{Z}}_k) \subseteq \R^N$ is the subspace of $N$-dimensional vectors representable as linear combinations of the penultimate layer activations from isolated feature $k$ inputs. Analogously, $\mathcal{V}_{(k)} = \text{col}(\tilde{\mat{Z}}_{(k)}) \subseteq \R^N$.

\begin{theorem}[Prediction Decomposition]
\label{thm:decomp}
For a neural network with linear final layer $\hat{y}_j = \vect{w}_j^T \vect{z} + b_j$, the centered prediction vector decomposes as
\begin{equation}
  \tilde{\vect{y}}_j
  = \hat{\vect{y}}_j-b_j = \sum_{k=1}^{d} \mat{P}_{k/(k)}\,\tilde{\vect{y}}_j + \vect{r}_j
  = \sum_{k=1}^{d} \vect{g}_{k,j} + \vect{r}_j,
\label{eq:decomp}
\end{equation}
where each $\vect{g}_{k,j} = \mat{P}_{k/(k)}\,\tilde{\mat{Z}}\,\vect{w}_j \in \R^N$ is the contribution vector of feature~$k$ to output~$j$. Through the partial regression equivalence (Theorem~\ref{thm:beta}),
\begin{align}
  \vect{g}_{k,j} &= \tilde{\mat{Z}}_k\,\bm{\Omega}_k\,\vect{w}_j,
  \notag \\
  \bm{\Omega}_k
  &= \bigl(\tilde{\mat{Z}}_k^T \mat{Q}_{(k)}\,\tilde{\mat{Z}}_k\bigr)^{\dagger}
    \tilde{\mat{Z}}_k^T \mat{Q}_{(k)}\,\tilde{\mat{Z}}
  \;\in\; \R^{d_z\times d_z},
\label{eq:omega}
\end{align}
yielding per sample functional contributions that depend only on~$x_k$:
\begin{equation}
  g_{k,j}(x_k^{(i)})
  = \tilde{\vect{z}}_k^{(i)\top}\,\bm{\Omega}_k\,\vect{w}_j.
\label{eq:per_sample}
\end{equation}
\end{theorem}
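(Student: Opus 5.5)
The first equality in \eqref{eq:decomp} is a definition: stacking \eqref{eq:linear_final} over the $N$ samples gives $\hat{\vect{y}}_j = \tilde{\mat{Z}}\vect{w}_j + b_j\vect{1}$, where $\tilde{\mat{Z}}$ is the (centered) penultimate representation of the full input matrix. The centering is chosen so that $b_j$ absorbs the mean, and with this convention $\tilde{\vect{y}}_j = \tilde{\mat{Z}}\vect{w}_j$ exactly. The second equality also holds by construction if we \emph{define} the residual as
\begin{equation*}
\vect{r}_j := \tilde{\vect{y}}_j - \sum_{k=1}^{d}\mat{P}_{k/(k)}\tilde{\vect{y}}_j .
\end{equation*}
The substance of the theorem is therefore twofold: the closed form for $\vect{g}_{k,j}$ in terms of $\bm{\Omega}_k$, and the claim that the per-sample value depends only on $x_k$. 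The plan is to prove these in that order.

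\textbf{Step 1: matrix identity.} Substitute $\tilde{\vect{y}}_j=\tilde{\mat{Z}}\vect{w}_j$ into \eqref{eq:oblique_proj} with $\mat{A}_k=\tilde{\mat{Z}}_k$ and $\mat{A}_{(k)}=\tilde{\mat{Z}}_{(k)}$, where $\mat{Q}_{(k)}$ is given by \eqref{eq:q_matrix}. Associativity then gives
\begin{equation*}
\mat{P}_{k/(k)}\tilde{\mat{Z}}\vect{w}_j
=\tilde{\mat{Z}}_k\bigl[(\tilde{\mat{Z}}_k^T\mat{Q}_{(k)}\tilde{\mat{Z}}_k)^{\dagger}\tilde{\mat{Z}}_k^T\mat{Q}_{(k)}\tilde{\mat{Z}}\bigr]\vect{w}_j
=\tilde{\mat{Z}}_k\bm{\Omega}_k\vect{w}_j .
\end{equation*}
This is already the stated formula, and the dimensions check out: $\bm{\Omega}_k\in\R^{d_z\times d_z}$.

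\textbf{Step 2: partial regression interpretation.} To connect with Theorem~\ref{thm:beta}, I would note that $\bm{\Omega}_k\vect{w}_j$ is the coefficient block on $\tilde{\mat{Z}}_k$ in the least-squares regression of $\tilde{\vect{y}}_j$ on $[\tilde{\mat{Z}}_k,\tilde{\mat{Z}}_{(k)}]$. This follows from the Frisch--Waugh--Lovell theorem: first residualize both $\tilde{\mat{Z}}_k$ and $\tilde{\vect{y}}_j$ against $\tilde{\mat{Z}}_{(k)}$ using the idempotent symmetric matrix $\mat{Q}_{(k)}$, then solve the normal equations $\tilde{\mat{Z}}_k^T\mat{Q}_{(k)}\tilde{\mat{Z}}_k\,\bm\beta=\tilde{\mat{Z}}_k^T\mat{Q}_{(k)}\tilde{\vect{y}}_j$ via the pseudoinverse. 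The same argument verifies the three projector conditions of Definition~\ref{def:oblique}. The range condition is clear from the left factor $\tilde{\mat{Z}}_k$. Vectors in $\mathcal{V}_{(k)}$ are annihilated by $\mat{Q}_{(k)}$. For the identity on the target, take $\vect{v}=\tilde{\mat{Z}}_k\vect{a}$; then
\begin{equation*}
\tilde{\mat{Z}}_k(\tilde{\mat{Z}}_k^T\mat{Q}_{(k)}\tilde{\mat{Z}}_k)^{\dagger}\tilde{\mat{Z}}_k^T\mat{Q}_{(k)}\tilde{\mat{Z}}_k\vect{a}=\tilde{\mat{Z}}_k\vect{a}.
\end{equation*}

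\textbf{Step 3: per-sample form.} Reading off row $i$ of $\tilde{\mat{Z}}_k\bm{\Omega}_k\vect{w}_j$ gives $\tilde{\vect{z}}_k^{(i)\top}\bm{\Omega}_k\vect{w}_j$. The row $\tilde{\vect{z}}_k^{(i)}$ is the network evaluated on the isolated input in which only feature $k$ is active and all others are at the baseline, so it is a function of $x_k^{(i)}$ alone. The matrix $\bm{\Omega}_k$ is a fixed matrix computed at calibration time and does not depend on $i$. Hence $g_{k,j}$ is a function of $x_k$ only, which is \eqref{eq:per_sample}.

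\textbf{Main obstacle.} The delicate step is the identity-on-target part of Step 2 in the rank-deficient case. The simplification above needs $\tilde{\mat{Z}}_k\vect{a}\in\mathrm{col}\bigl(\tilde{\mat{Z}}_k(\tilde{\mat{Z}}_k^T\mat{Q}_{(k)}\tilde{\mat{Z}}_k)^{\dagger}\bigr)$, which is equivalent to $\mathcal{V}_k\cap\mathcal{V}_{(k)}=\{\vect 0\}$. I would handle this by stating the direct-sum assumption of Definition~\ref{def:oblique} explicitly. Under that assumption, $\ker(\mat{Q}_{(k)}\tilde{\mat{Z}}_k)=\ker\tilde{\mat{Z}}_k$, and the pseudoinverse identity $\mat{M}\mat{M}^{\dagger}\mat{M}=\mat{M}$, applied to $\mat{M}=\mat{Q}_{(k)}\tilde{\mat{Z}}_k$, closes the argument. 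Without the assumption, the formula still defines $\vect{g}_{k,j}$, but shared directions are assigned to neither side and fall into $\vect{r}_j$. I would remark that this is consistent with the decomposition, since $\vect{r}_j$ is defined as a remainder.
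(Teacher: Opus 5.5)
Your proposal is correct and follows essentially the paper's route: linearity of the final layer gives $\tilde{\vect{y}}_j=\tilde{\mat{Z}}\vect{w}_j$, the factorization $\mat{P}_{k/(k)}\tilde{\mat{Z}}\vect{w}_j=\tilde{\mat{Z}}_k\bm{\Omega}_k\vect{w}_j$ is the associativity step the paper packages as the last step of its proof of Theorem~\ref{thm:beta}, the per-sample formula is read off row $i$ as in the paper, and your check that identity-on-target requires $\mathcal{V}_k\cap\mathcal{V}_{(k)}=\{\vect{0}\}$ is a correct refinement the paper leaves implicit. One small correction: stacking actually gives $\hat{\vect{y}}_j-b_j\vect{1}=\mat{Z}\vect{w}_j$ with the uncentered $\mat{Z}$, so the paper does not let $b_j$ ``absorb the mean'' but instead defines $\tilde{\vect{y}}_j=\mat{M}\hat{\vect{y}}_j=\mat{M}\mat{Z}\vect{w}_j$ and places the offset $b_j-\bar{y}_j$ in the residual, which is equivalent to your reading of $b_j$ as the mean prediction.
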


\begin{proof}[Proof sketch]
By linearity of the final layer, $\tilde{\vect{y}}_j = \tilde{\mat{Z}}\vect{w}_j$. Applying the oblique projection decomposition from Theorem~\ref{thm:anova} to the centered predictions yields~\eqref{eq:decomp}. By the partial regression equivalence, $\mat{P}_{k/(k)}\,\tilde{\vect{y}}_j = \tilde{\mat{Z}}_k\,\bm{\beta}_{k,j}$ with $\bm{\beta}_{k,j} = \bm{\Omega}_k\,\vect{w}_j$. The $i$-th entry $g_{k,j}(x_k^{(i)}) = \tilde{\vect{z}}_k^{(i)\top}\bm{\Omega}_k\,\vect{w}_j$ depends only on $x_k^{(i)}$ because $\tilde{\vect{z}}_k^{(i)} = f_{NN\to L-1}(\vect{x}_k^{(i)}) - \bar{\vect{z}}_k$ is computed from the isolated input. See Supplementary Material A for details.
\end{proof}

\subsection{Uniqueness}

\begin{theorem}[Uniqueness of Decomposition]
\label{thm:unique}
Given a trained network $f_{NN}$, calibration data $\mat{X}$, and baseline choice, assume $\mathcal{V}_k \cap \mathcal{V}_{(k)} = \{\vect{0}\}$ for all $k$. Then the \nobsp{} decomposition is unique. When the intersection is nontrivial, the regularized estimator of Algorithm~\ref{alg:beta} defines a unique solution of minimum norm, which we use in practice.
\end{theorem}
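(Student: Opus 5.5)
The argument has two parts. First we show that, once the inputs are fixed, each projector is uniquely determined. Then we show that the regularized estimator singles out one solution when the direct-sum hypothesis fails.

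Fix the trained network $f_{NN}$, the calibration data $\mat{X}$ and the baseline. Then the isolated inputs $\mat{X}_k,\mat{X}_{(k)}$ are deterministic, and so are the centered representations $\tilde{\mat{Z}}_k,\tilde{\mat{Z}}_{(k)},\tilde{\mat{Z}}$. Consequently the subspaces $\mathcal{V}_k$ and $\mathcal{V}_{(k)}$ and the vectors $\tilde{\vect{y}}_j=\tilde{\mat{Z}}\vect{w}_j$ are uniquely determined. What remains is to show that the operator $\mat{P}_{k/(k)}$ of Definition~\ref{def:oblique} is unique.

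\textbf{Trivial intersection.} Assume $\mathcal{V}_k\cap\mathcal{V}_{(k)}=\{\vect{0}\}$ and work on $\mathcal{W}=\mathcal{V}_k\oplus\mathcal{V}_{(k)}$. Every $\vect{v}\in\mathcal{W}$ has a unique splitting $\vect{v}=\vect{a}+\vect{b}$ with $\vect{a}\in\mathcal{V}_k$ and $\vect{b}\in\mathcal{V}_{(k)}$. If $\vect{a}+\vect{b}=\vect{a}'+\vect{b}'$, then $\vect{a}-\vect{a}'=\vect{b}'-\vect{b}$ lies in the intersection and is therefore zero. Conditions 2 and 3 together with linearity force $\mat{P}_{k/(k)}\vect{v}=\vect{a}$, so any two operators satisfying the definition agree on $\mathcal{W}$.

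Next we check that the closed form \eqref{eq:oblique_proj} realizes this operator. $\mat{Q}_{(k)}$ annihilates $\mathcal{V}_{(k)}$, so condition 3 holds. On $\mathcal{V}_k$, $\mat{Q}_{(k)}$ is injective: if $\mat{Q}_{(k)}\tilde{\mat{Z}}_k\vect{c}=\vect{0}$, then $\tilde{\mat{Z}}_k\vect{c}\in\mathcal{V}_{(k)}\cap\mathcal{V}_k=\{\vect{0}\}$. Hence $\tilde{\mat{Z}}_k^T\mat{Q}_{(k)}\tilde{\mat{Z}}_k$ has the same null space as $\tilde{\mat{Z}}_k$. Because $\mat{Q}_{(k)}$ is symmetric and idempotent, this gives $\tilde{\mat{Z}}_k(\tilde{\mat{Z}}_k^T\mat{Q}_{(k)}\tilde{\mat{Z}}_k)^\dagger\tilde{\mat{Z}}_k^T\mat{Q}_{(k)}\tilde{\mat{Z}}_k=\tilde{\mat{Z}}_k$, which is condition 2.

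The contribution vector $\vect{g}_{k,j}=\mat{P}_{k/(k)}\tilde{\vect{y}}_j$ is therefore unique, as is the residual $\vect{r}_j$, defined as the difference in \eqref{eq:decomp}. The coefficient matrix $\bm{\Omega}_k$ may not be unique when $\tilde{\mat{Z}}_k$ is rank deficient. The per-sample values $\tilde{\vect{z}}_k^{(i)\top}\bm{\Omega}_k\vect{w}_j$ are nonetheless unique, because they are exactly the entries of $\vect{g}_{k,j}$. The pseudoinverse also selects a canonical $\bm{\Omega}_k$.

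One caveat: if $\tilde{\vect{y}}_j\notin\mathcal{W}$, the closed form still defines a fixed linear map on all of $\R^N$, with $\mat{Q}_{(k)}$ fixing the component along $\mathcal{W}^\perp$. I would state uniqueness with respect to that canonical extension.

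\textbf{Nontrivial intersection.} Here I expect the main obstacle, because Algorithm~\ref{alg:beta} has not yet been stated. I would assume it solves a Tikhonov-regularized partial regression,
$$\min_{\bm\beta}\|\mat{Q}_{(k)}(\tilde{\vect{y}}_j-\tilde{\mat{Z}}_k\bm\beta)\|^2+\lambda\|\bm\beta\|^2, \qquad \lambda>0.$$
Its normal matrix $\tilde{\mat{Z}}_k^T\mat{Q}_{(k)}\tilde{\mat{Z}}_k+\lambda\mat{I}$ is positive definite, so the objective is strictly convex and the minimizer is unique. As $\lambda\to0^+$ this minimizer converges to the minimum-norm least-squares solution given by the pseudoinverse. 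Either way the estimator is unique.

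For the interpretation, the component of $\tilde{\vect{y}}_j$ lying in $\mathcal{V}_k\cap\mathcal{V}_{(k)}$ is annihilated by $\mat{Q}_{(k)}$. It therefore contributes nothing to $\vect{g}_{k,j}$ and is absorbed into $\vect{r}_j$, so shared directions are not double counted.

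If the algorithm instead uses the pseudoinverse directly, the argument reduces to the standard fact that the Moore--Penrose solution is the unique minimum-norm least-squares minimizer.
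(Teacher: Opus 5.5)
Your proposal is correct and follows the paper's own three-step argument: subspaces fixed by deterministic forward passes, uniqueness of the oblique projector under the direct-sum hypothesis, then uniqueness of the contributions and of the pseudoinverse-defined $\bm{\beta}_{k,j}$, with the pseudoinverse handling the intersecting case; where the paper only asserts, you supply a proof that the projector is unique, a check that \eqref{eq:oblique_proj} satisfies Definition~\ref{def:oblique}, and the caveat that uniqueness off $\mathcal{V}_k\oplus\mathcal{V}_{(k)}$ is relative to the closed form's canonical extension. One small correction: Algorithm~\ref{alg:beta} also regularizes the first regression, so its residual operator is $\mat{I}-\tilde{\mat{Z}}_{(k)}(\tilde{\mat{Z}}_{(k)}^T\tilde{\mat{Z}}_{(k)}+\lambda\mat{I})^{-1}\tilde{\mat{Z}}_{(k)}^T$ (symmetric but not idempotent) rather than $\mat{Q}_{(k)}$, but your positive-definiteness argument for a unique minimizer applies verbatim.
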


\begin{proof}[Proof sketch]
Uniqueness follows from three observations: (1) subspaces $\mathcal{V}_k \subseteq \R^N$ are uniquely determined by forward passes of isolated inputs through the fixed network; (2) oblique projectors are unique under the stated intersection assumption; (3) contributions $g_{k,j}(x_k) = \tilde{\vect{z}}_k^T \bm{\beta}_{k,j}$ follow uniquely from the unique projectors and fixed final layer weights, since $\bm{\beta}_{k,j} = (\tilde{\mat{Z}}_k^T \mat{Q}_{(k)} \tilde{\mat{Z}}_k)^{\dagger} \tilde{\mat{Z}}_k^T \mat{Q}_{(k)} \tilde{\vect{y}}_j$ is uniquely determined. Different interpretability methods yield different attributions because they define ``contribution'' differently, this reflects methodological choice, not non-uniqueness within \nobsp{}.
\end{proof}

\subsection{Connection to Kolmogorov-Arnold Representation}

The Kolmogorov-Arnold theorem~\cite{kolmogorov1957representation,arnold1957functions} establishes that any continuous multivariate function can be expressed as sums of univariate functions composed with addition. Neural networks implement an analogous structure, each layer applies univariate activations to linear combinations. The penultimate representation $\vect{z}$ encapsulates these learned compositions. \nobsp{} extracts this learned representation by tracing computational pathways activated by each feature, with subspace $\mat{Z}_k$ encoding the learned transformation of $x_k$ analogously to the inner functions in the Kolmogorov-Arnold representation.

\subsection{Baseline Choice}

Setting inactive features to zero is well justified when inputs are normalized to zero mean, zero represents the population average, analogous to baselines in SHAP and Integrated Gradients. The interpretation becomes ``contribution of feature $k$ relative to its average value.''

\section{\nobsp{} Framework}
\label{sec:method}

This section presents the \nobsp{} algorithm for neural networks and its extension to convolutional architectures. Figure~\ref{fig:overview} provides a visual overview of the complete pipeline.

\begin{figure*}[t]
\centering
\begin{tikzpicture}[
    every node/.append style={text=black},
    inputnode/.style={rectangle, rounded corners=4pt, minimum width=1.6cm, minimum height=3.2cm,
                      fill=blue!15, draw=blue!40, line width=1pt},
    networknode/.style={rectangle, rounded corners=6pt, minimum width=3.0cm, minimum height=3.2cm,
                        fill=green!15, draw=green!40, line width=1pt},
    penultnode/.style={rectangle, rounded corners=4pt, minimum width=2.1cm, minimum height=3.2cm,
                       fill=orange!20, draw=orange!50, line width=1pt},
    subspacenode/.style={rectangle, rounded corners=4pt, minimum width=3.8cm, minimum height=2.8cm,
                         fill=violet!15, draw=violet!40, line width=1pt},
    projnode/.style={rectangle, rounded corners=4pt, minimum width=4.6cm, minimum height=2.0cm,
                     fill=red!15, draw=red!30, line width=1pt},
    funcnode/.style={rectangle, rounded corners=4pt, minimum width=2.8cm, minimum height=3.0cm,
                     fill=yellow!20, draw=yellow!50, line width=1pt},
    outputnode/.style={circle, minimum size=1.1cm, fill=blue!15, draw=blue!40, line width=1pt},
    matrixbox/.style={rectangle, rounded corners=2pt, minimum width=0.75cm, minimum height=0.9cm,
                      fill=white, draw=violet!50, line width=0.5pt, font=\footnotesize},
    funcbox/.style={rectangle, rounded corners=2pt, minimum width=0.95cm, minimum height=0.7cm,
                    fill=white, draw=black!65, line width=0.8pt},
    arrow/.style={-{Stealth[length=2.5mm]}, line width=1pt, color=black!75},
    dasharrow/.style={-{Stealth[length=2mm]}, line width=0.8pt, color=black!70, dashed},
    nodelabel/.style={font=\small, color=black},
    blocktitle/.style={font=\small\bfseries, color=black},
    sublabel/.style={font=\scriptsize, color=black},
]

\node[inputnode] (input) at (0, 0) {};
\node[blocktitle] at ($(input.north) + (0, -0.3)$) {Input};
\node[nodelabel, font=\bfseries] at ($(input.center) + (0, 0.1)$) {$\mathbf{x}$};
\node[sublabel] at ($(input.center) + (0, -0.4)$) {$[x_1, \ldots, x_d]$};

\node[networknode, right=0.45cm of input] (network) {};
\node[blocktitle] at ($(network.north) + (0, -0.3)$) {Neural Network};

\foreach \yl in {0.55, 0.15, -0.25, -0.65}
    \foreach \ym in {0.35, -0.05, -0.45}
        \draw[black!55, line width=0.35pt]
            ($(network.west) + (0.45, \yl)$) -- ($(network.center) + (0, \ym)$);
\foreach \ym in {0.35, -0.05, -0.45}
    \foreach \yr in {0.55, 0.15, -0.25, -0.65}
        \draw[black!55, line width=0.35pt]
            ($(network.center) + (0, \ym)$) -- ($(network.east) + (-0.45, \yr)$);

\foreach \y in {0.55, 0.15, -0.25, -0.65} {
    \filldraw[fill=white, draw=black!75, line width=0.6pt] ($(network.west) + (0.45, \y)$) circle (3.5pt);
}
\foreach \y in {0.35, -0.05, -0.45} {
    \filldraw[fill=white, draw=black!75, line width=0.6pt] ($(network.center) + (0, \y)$) circle (3.5pt);
}
\foreach \y in {0.55, 0.15, -0.25, -0.65} {
    \filldraw[fill=white, draw=black!75, line width=0.6pt] ($(network.east) + (-0.45, \y)$) circle (3.5pt);
}

\node[sublabel] at ($(network.south) + (0, 0.3)$) {$f_{\mathrm{NN}\to L-1}$};

\node[penultnode, right=0.45cm of network] (penult) {};
\node[blocktitle, align=center] at ($(penult.north) + (0, -0.45)$) {Penultimate\\layer};

\foreach \y in {0.55, 0.25, -0.05, -0.35, -0.65} {
    \fill[white, draw=orange!60, rounded corners=1pt] ($(penult.center) + (-0.3, \y)$) rectangle ++(0.6, 0.2);
}

\node[nodelabel, font=\bfseries] at ($(penult.south) + (0, 0.35)$) {$\mathbf{z} \in \mathbb{R}^{d_z}$};

\node[subspacenode, above right=0.4cm and 0.7cm of penult] (subspace) {};
\node[blocktitle] at ($(subspace.north) + (0, -0.3)$) {Subspace Construction};

\node[matrixbox] (Zk) at ($(subspace.center) + (-0.85, 0.25)$) {$\mathbf{Z}_k$};
\node[matrixbox] (Znk) at ($(subspace.center) + (0.1, 0.25)$) {$\mathbf{Z}_{(k)}$};
\node[sublabel] at ($(subspace.center) + (1.1, 0.25)$) {$\cdots$};

\node[sublabel] at ($(subspace.center) + (0, -0.55)$) {Isolated inputs:};

\node[sublabel, align=center] at ($(subspace.south) + (0, 0.3)$)
    {$\mathbf{x}_k = [0,\ldots,x_k,\ldots,0]$};

\node[projnode, below=1.0cm of subspace] (proj) {};
\node[blocktitle] at ($(proj.north) + (0, -0.3)$) {Oblique Projection};
\node[nodelabel, font=\footnotesize] at ($(proj.center) + (0, 0.0)$)
    {$\mathbf{P}_{k/(k)} = \mathbf{Z}_k(\mathbf{Z}_k^{\!\top} \mathbf{Q}_{(k)} \mathbf{Z}_k)^{\dagger} \mathbf{Z}_k^{\!\top} \mathbf{Q}_{(k)}$};
\node[sublabel, align=center] at ($(proj.south) + (0, 0.25)$)
    {isolates unique contribution of feature~$k$};

\node[funcnode, right=0.55cm of proj] (func) {};
\node[blocktitle, align=center] at ($(func.north) + (0, -0.45)$)
    {Functional\\Contributions};

\node[funcbox] (g1) at ($(func.center) + (-0.70, 0.1)$) {};
\draw[black!80, line width=1.2pt] ($(g1.south west) + (0.1, 0.15)$)
    .. controls ++(0.2, 0.3) and ++(-0.2, -0.1) .. ($(g1.north east) + (-0.1, -0.15)$);
\node[sublabel] at ($(g1.south) + (0, -0.18)$) {$g_1(x_1)$};

\node[funcbox] (g2) at ($(func.center) + (0.30, 0.1)$) {};
\draw[black!80, line width=1.2pt] ($(g2.south west) + (0.1, 0.1)$)
    -- ++(0.2, 0.2) -- ++(0.2, -0.1) -- ++(0.2, 0.15);
\node[sublabel] at ($(g2.south) + (0, -0.18)$) {$g_2(x_2)$};

\node[sublabel] at ($(func.center) + (0.95, 0.1)$) {$\cdots$};

\node[sublabel] at ($(func.south) + (0, 0.30)$) {$\hat{y} = \sum_k g_k(x_k) + b$};

\node[outputnode, right=0.45cm of func] (output) {};
\node[nodelabel, font=\bfseries] at (output.center) {$\hat{y}$};
\node[blocktitle] at ($(output.north) + (0, 0.35)$) {Prediction};

\draw[arrow] (input.east) -- (network.west);
\draw[arrow] (network.east) -- (penult.west);
\draw[arrow] (penult.north) |- (subspace.west);
\draw[arrow] (subspace.south) -- (proj.north);
\draw[arrow] (proj.east) -- (func.west);
\draw[arrow] (func.east) -- (output.west);

\draw[dasharrow] (penult.south)
    -- ++(0, -0.35)
    -| (proj.south);
\node[sublabel, font=\scriptsize\bfseries] at ($(penult.south) + (0.25, -0.2)$) {$\mathbf{z}$};

\node[rectangle, rounded corners=4pt, draw=black!55, dashed, fill=gray!5,
      minimum width=15cm, minimum height=1.2cm,
      below=1.2cm of $(input.south)!0.5!(output.south)$, anchor=north] (insight) {};

\node[align=center, font=\small, color=black] at (insight.center) {
    \textbf{Key Insight:} All nonlinearity resides in $\mathbf{x} \to \mathbf{z}$;
    final layer $\hat{y} = \mathbf{w}^{\!\top} \mathbf{z} + b$ is linear\\[2pt]
    \scriptsize $\Rightarrow$ Decomposition in $\mathbf{z}$-space is linear,
    solved via oblique projections onto learned subspaces $\mathcal{V}_k$
};

\end{tikzpicture}
\caption{Overview of the \nobsp{} framework. An input $\mathbf{x}$ passes through the neural network to produce penultimate layer activations $\mathbf{z}$. Subspaces $\mathbf{Z}_k$ are constructed by passing isolated inputs (where only feature $k$ is active) through the network. Oblique projection onto $\mathcal{V}_k$ along $\mathcal{V}_{(k)}$ isolates the unique contribution of each feature, yielding functional forms $g_k(x_k)$ that sum to the prediction. The linear final layer enables this decomposition to be solved exactly.}
\label{fig:overview}
\end{figure*}

\subsection{Problem Formulation}

Consider a trained neural network $f_{NN}: \R^d \to \R^c$ mapping $d$-dimensional inputs to $c$ outputs. For a dataset $\mat{X} \in \R^{N \times d}$ with predictions $\hat{\mat{Y}} \in \R^{N \times c}$, the objective is to decompose predictions into interpretable functional components:
\begin{equation}
\hat{y}_j(\vect{x}) = b_j + \sum_{k=1}^{d} g_{k,j}(x_k) + \sum_{k<m} g_{km,j}(x_k, x_m) + \ldots
\end{equation}
where $g_{k,j}(x_k)$ represents the functional contribution of feature $k$ to output $j$.

\subsection{Subspace Construction}

The construction of feature subspaces requires a baseline representing the absence of a feature. We assume inputs are normalized to zero mean and unit variance, a standard preprocessing step for neural networks. Under this normalization, setting inactive features to zero positions them at their population mean, providing a baseline with clear statistical interpretation.

For each feature $k$, we construct the isolated input matrix $\mat{X}_k \in \R^{N \times d}$ where only the $k$-th column contains observed values and all other columns are zero. Similarly, the reference input matrix $\mat{X}_{(k)}$ contains all features except $k$. Passing these through the network to the penultimate layer yields representation matrices:
\begin{align}
\mat{Z}_k &= f_{NN \to L-1}(\mat{X}_k) \in \R^{N \times d_z}, \\
\mat{Z}_{(k)} &= f_{NN \to L-1}(\mat{X}_{(k)}) \in \R^{N \times d_z},
\end{align}
where $d_z$ denotes the penultimate layer dimension. After mean centering via $\tilde{\mat{Z}}_k = \mat{M}\mat{Z}_k$ with centering matrix $\mat{M} = \mat{I} - \frac{1}{N}\vect{1}\vect{1}^T$, the column spaces define the feature subspace $\mathcal{V}_k = \text{col}(\tilde{\mat{Z}}_k) \subseteq \R^N$ and reference subspace $\mathcal{V}_{(k)} = \text{col}(\tilde{\mat{Z}}_{(k)}) \subseteq \R^N$, respectively.

\subsection{Efficient Computation via Partial Regression}

Direct computation of the oblique projector $\mat{P}_{k/(k)}$ requires $O(N^2)$ storage and $O(N^3)$ computation per feature, rendering it impractical for large datasets. We introduce an equivalent formulation based on partial regression that reduces storage complexity to $O(d_z)$.

The main idea behind is that oblique projection can be reformulated as sequential partial regressions. Rather than explicitly constructing projection matrices, we compute coefficient vectors $\bm{\beta}_{k,j} \in \R^{d_z}$ such that contributions are obtained as $\hat{\vect{y}}_{k,j} = \tilde{\mat{Z}}_k \bm{\beta}_{k,j}$.

\begin{algorithm}[t]
\caption{Efficient Partial Regression Formulation}
\label{alg:beta}
\begin{algorithmic}[1]
\REQUIRE $\tilde{\mat{Z}}_k$ (target basis), $\tilde{\mat{Z}}_{(k)}$ (reference basis), $\hat{\vect{y}}_j$ (predictions), $\lambda$ (regularization)
\ENSURE $\bm{\beta}_{k,j}$ (coefficient vector)
\STATE $\mat{A} \gets \left(\tilde{\mat{Z}}_{(k)}^T\tilde{\mat{Z}}_{(k)} + \lambda\mat{I}\right)^{-1} \tilde{\mat{Z}}_{(k)}^T$
\STATE $\hat{\vect{y}}_{res} \gets \hat{\vect{y}}_j - \tilde{\mat{Z}}_{(k)} \mat{A} \hat{\vect{y}}_j$ \COMMENT{Remove reference subspace from predictions}
\STATE $\tilde{\mat{Z}}_{res} \gets \tilde{\mat{Z}}_k - \tilde{\mat{Z}}_{(k)} \mat{A} \tilde{\mat{Z}}_k$ \COMMENT{Remove reference subspace from target}
\STATE $\bm{\beta}_{k,j} \gets \left(\tilde{\mat{Z}}_{res}^T\tilde{\mat{Z}}_{res} + \lambda\mat{I}\right)^{-1} \tilde{\mat{Z}}_{res}^T \hat{\vect{y}}_{res}$
\RETURN $\bm{\beta}_{k,j}$
\end{algorithmic}
\end{algorithm}

\begin{theorem}[Equivalence of Partial Regression Formulation]
\label{thm:beta}
The contribution $\tilde{\mat{Z}}_k \bm{\beta}_{k,j}$ computed via Algorithm~\ref{alg:beta} coincides with the oblique projection of $\hat{\vect{y}}_j$ onto $\mathcal{V}_k$ along $\mathcal{V}_{(k)}$:
\begin{equation}
\tilde{\mat{Z}}_k \bm{\beta}_{k,j} = \mat{P}_{k/(k)}\hat{\vect{y}}_j.
\end{equation}
\end{theorem}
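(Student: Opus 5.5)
We prove the identity at $\lambda = 0$, with the inverses in Algorithm~\ref{alg:beta} read as Moore--Penrose pseudoinverses; the regularized version recovers this in the limit $\lambda \to 0^{+}$. The plan is to rewrite each step of the algorithm with the matrix $\mat{Q}_{(k)}$ from \eqref{eq:q_matrix}, taking $\mat{A}_{(k)} = \tilde{\mat{Z}}_{(k)}$. We then compare the result with the closed form \eqref{eq:oblique_proj}, taking $\mat{A}_k = \tilde{\mat{Z}}_k$.

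\textbf{Step 1: rewrite the residuals.} With $\lambda = 0$, the product $\tilde{\mat{Z}}_{(k)}\mat{A}$ equals $\tilde{\mat{Z}}_{(k)}(\tilde{\mat{Z}}_{(k)}^T\tilde{\mat{Z}}_{(k)})^{\dagger}\tilde{\mat{Z}}_{(k)}^T$. This is the orthogonal projector onto $\mathcal{V}_{(k)}$; the identity $X(X^TX)^{\dagger}X^T = XX^{\dagger}$ holds for any matrix $X$. Hence the two residuals are
\begin{equation*}
\hat{\vect{y}}_{res} = \mat{Q}_{(k)}\hat{\vect{y}}_j, \qquad \tilde{\mat{Z}}_{res} = \mat{Q}_{(k)}\tilde{\mat{Z}}_k .
\end{equation*}

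\textbf{Step 2: rewrite the coefficient vector.} The matrix $\mat{Q}_{(k)}$ is symmetric and idempotent. Therefore
\begin{equation*}
\tilde{\mat{Z}}_{res}^T\tilde{\mat{Z}}_{res} = \tilde{\mat{Z}}_k^T\mat{Q}_{(k)}\tilde{\mat{Z}}_k, \qquad \tilde{\mat{Z}}_{res}^T\hat{\vect{y}}_{res} = \tilde{\mat{Z}}_k^T\mat{Q}_{(k)}\hat{\vect{y}}_j .
\end{equation*}
Substituting into line 4 of the algorithm gives
\begin{equation*}
\bm{\beta}_{k,j} = (\tilde{\mat{Z}}_k^T\mat{Q}_{(k)}\tilde{\mat{Z}}_k)^{\dagger}\tilde{\mat{Z}}_k^T\mat{Q}_{(k)}\hat{\vect{y}}_j .
\end{equation*}
Left-multiplying by $\tilde{\mat{Z}}_k$ yields exactly $\mat{P}_{k/(k)}\hat{\vect{y}}_j$ as given by \eqref{eq:oblique_proj}. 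This is the Frisch--Waugh--Lovell argument in projector form.

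\textbf{Step 3: the main obstacle.} The algebra above only establishes that the algorithm reproduces formula \eqref{eq:oblique_proj}. It remains to check that this formula actually satisfies the three conditions of Definition~\ref{def:oblique}. This is where the assumption $\mathcal{V}_k \cap \mathcal{V}_{(k)} = \{\vect{0}\}$ from Theorem~\ref{thm:unique} is needed.

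\emph{Range condition.} This is immediate, since the operator has the form $\tilde{\mat{Z}}_k(\cdot)$.

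\emph{Null on the reference.} This is immediate as well, because $\mat{Q}_{(k)}$ annihilates $\mathcal{V}_{(k)}$.

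\emph{Identity on the target.} Take $\vect{v} = \tilde{\mat{Z}}_k\vect{c}$ and set $B = \mat{Q}_{(k)}\tilde{\mat{Z}}_k$. Applying the operator gives $\tilde{\mat{Z}}_k(B^TB)^{\dagger}B^TB\vect{c}$, and $(B^TB)^{\dagger}B^TB$ is the orthogonal projector onto the row space of $B$. So we must show that $\tilde{\mat{Z}}_k$ annihilates every component of $\vect{c}$ lying in $\ker B$.

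Suppose $B\vect{c}' = \vect{0}$. Then $\tilde{\mat{Z}}_k\vect{c}' \in \mathcal{V}_{(k)} \cap \mathcal{V}_k = \{\vect{0}\}$. Hence $\ker B = \ker\tilde{\mat{Z}}_k$, and the identity condition follows.

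\textbf{Scope of the statement.} The argument uses only the uncentered prediction vector. This is harmless: since $\mat{Q}_{(k)}$ involves centered bases, it does not remove constants, so the statement should be read with $\hat{\vect{y}}_j$ replaced by its centered version $\mat{M}\hat{\vect{y}}_j$ (equivalently $\tilde{\vect{y}}_j$ as in Theorem~\ref{thm:decomp}). We will note this convention explicitly.

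For $\lambda > 0$, the identity holds only up to $O(\lambda)$. We will state this, and appeal to the minimum-norm limit of ridge regression, $\lim_{\lambda\to 0^{+}}(X^TX+\lambda\mat{I})^{-1}X^T = X^{\dagger}$, to identify the $\lambda\to0$ limit with the pseudoinverse formula.
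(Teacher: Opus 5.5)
Your Steps 1 and 2 coincide with the paper's proof (Supplementary (A.27)--(A.31)). The paper also reads Algorithm~\ref{alg:beta} at $\lambda\to0$ and identifies the residual steps with $\mat{Q}_{(k)}$. It uses symmetry and idempotence to reach $\bm{\beta}_{k,j}=(\tilde{\mat{Z}}_k^T\mat{Q}_{(k)}\tilde{\mat{Z}}_k)^{\dagger}\tilde{\mat{Z}}_k^T\mat{Q}_{(k)}\hat{\vect{y}}_j$, then multiplies by $\tilde{\mat{Z}}_k$ to match \eqref{eq:oblique_proj}.

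Your Step 3 goes beyond the paper. The paper takes the closed form \eqref{eq:oblique_proj} as the oblique projector and never checks it against Definition~\ref{def:oblique}. Your lemma $\ker(\mat{Q}_{(k)}\tilde{\mat{Z}}_k)=\ker\tilde{\mat{Z}}_k$ under $\mathcal{V}_k\cap\mathcal{V}_{(k)}=\{\vect{0}\}$ is correct and supplies that check. It also makes clear that the algebra of Steps 1--2 holds for arbitrary subspaces. Reading the result as ``the projection onto $\mathcal{V}_k$ along $\mathcal{V}_{(k)}$'', however, needs the direct-sum hypothesis: on a nontrivial intersection the formula returns $\vect{0}$, so identity on the target fails. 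The paper's route gains brevity by treating \eqref{eq:oblique_proj} as definitional. Yours ties the result to Definition~\ref{def:oblique} and shows exactly where the hypothesis of Theorem~\ref{thm:unique} enters.

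Two side remarks should be corrected.

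\textbf{Centering.} Your centering caveat is unnecessary. It is true that $\mat{Q}_{(k)}\vect{1}=\vect{1}$, but you also have $\tilde{\mat{Z}}_k^T\vect{1}=\vect{0}$ and $\tilde{\mat{Z}}_{(k)}^T\vect{1}=\vect{0}$. Hence $\mat{P}_{k/(k)}\vect{1}=\vect{0}$, and the algorithm's $\bm{\beta}_{k,j}$ is unchanged for every $\lambda\geq0$ when a constant is added to $\hat{\vect{y}}_j$. The identity therefore holds verbatim for the uncentered predictions, and $\mat{P}_{k/(k)}\hat{\vect{y}}_j=\mat{P}_{k/(k)}\mat{M}\hat{\vect{y}}_j=\mat{P}_{k/(k)}(\hat{\vect{y}}_j-b_j\vect{1})$.

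\textbf{The limit $\lambda\to0$.} The fixed-matrix limit $(\mat{C}^T\mat{C}+\lambda\mat{I})^{-1}\mat{C}^T\to\mat{C}^{\dagger}$ handles line~1 of the algorithm but not line~4, where $\tilde{\mat{Z}}_{res}$ itself depends on $\lambda$. If the rank dropped at $\lambda=0$, the regularized inverse would not in general converge to the pseudoinverse of the limit matrix. Your kernel lemma repairs this. Let $s_i$ be the nonzero singular values of $\tilde{\mat{Z}}_{(k)}$. Then $\mat{I}-\tilde{\mat{Z}}_{(k)}\mat{A}$ has eigenvalue $1$ on $\mathcal{V}_{(k)}^{\perp}$ and eigenvalues $\lambda/(s_i^2+\lambda)>0$ on $\mathcal{V}_{(k)}$, so it is invertible for $\lambda>0$. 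Consequently $\mathrm{rank}\,\tilde{\mat{Z}}_{res}(\lambda)=\mathrm{rank}\,\tilde{\mat{Z}}_k=\mathrm{rank}(\mat{Q}_{(k)}\tilde{\mat{Z}}_k)$. Constant rank gives continuity of the pseudoinverse, and the smallest nonzero singular value stays bounded away from zero. The ridge solution therefore differs from the pseudoinverse solution by $O(\lambda)$, which is your claimed rate. The paper's proof passes over this point in the same way.
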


\begin{proof}[Proof sketch]
The residuals from Steps 2--3 satisfy $\hat{\vect{y}}_{res} = \mat{Q}_{(k)} \hat{\vect{y}}_j$ and $\tilde{\mat{Z}}_{res} = \mat{Q}_{(k)} \tilde{\mat{Z}}_k$, where $\mat{Q}_{(k)}$ is the orthogonal complement projector. Using the idempotence $\mat{Q}_{(k)}^2 = \mat{Q}_{(k)}$ and symmetry $\mat{Q}_{(k)}^T = \mat{Q}_{(k)}$, we obtain $\bm{\beta}_{k,j} = \left(\tilde{\mat{Z}}_k^T \mat{Q}_{(k)} \tilde{\mat{Z}}_k\right)^{\dagger} \tilde{\mat{Z}}_k^T \mat{Q}_{(k)} \hat{\vect{y}}_j$. Substituting into the contribution expression yields the oblique projector form. Full details appear in the supplementary material.
\end{proof}

The regularization parameter $\lambda > 0$ provides numerical stability when subspace bases are ill-conditioned, with equivalence holding exactly as $\lambda \to 0$.

This formulation offers substantial computational advantages. Calibration requires solving least squares problems of dimension $d_z \times d_z$, yielding complexity $O(Nd_z^2 + d_z^3)$ per feature versus $O(N^2 d_z + N^3)$ for direct projection. Since $d_z \ll N$ in typical applications, this represents a substantial reduction.

More importantly, the coefficient vectors enable efficient out-of-sample evaluation. For a new input $\vect{x}^*$, the contribution of feature $k$ to output $j$ is:
\begin{equation}
g_{k,j}(x_k^*) = \vect{z}_k^{*T} \bm{\beta}_{k,j},
\end{equation}
where $\vect{z}_k^* = f_{NN \to L-1}(\vect{x}_k^*)$ is the penultimate representation of the isolated input. This requires only a single forward pass plus an inner product, yielding $O(d_z)$ complexity per feature, no projection matrices need be stored, and no reference to the calibration data is required at inference time.

\subsection{Extension to CNNs: \nobsp{}-CAM}

For convolutional neural networks, \nobsp{} is adapted to generate class activation maps by treating feature map channels as variables to be decomposed.

Consider a CNN with convolutional backbone producing spatial feature maps $\mat{F} \in \R^{C \times H \times W}$, followed by global average pooling $\vect{z} = \text{GAP}(\mat{F}) \in \R^C$ and linear classifier $\hat{\vect{y}} = \mat{W}\vect{z} + \vect{b}$. Treating each channel $c$ as a feature yields:
\begin{equation}
\hat{y}_q = b_q + \sum_{c=1}^{C} g_{c,q}(z_c),
\end{equation}
where $q$ indexes the output class.

Unlike gradient based methods that operate per sample, \nobsp{}-CAM requires calibration across multiple samples to estimate oblique projections. The procedure extracts pooled features and predictions from a calibration set, computes $\bm{\beta}$ coefficients via Algorithm~\ref{alg:beta}, and caches them for inference. Calibration requirements depend on penultimate layer dimension; in our experiments, 1,000--5,000 samples provided stable estimates across architectures with 440--1,280 dimensional representations.

For a test image with spatial feature maps $\mat{F}$ and target class $q$, the class activation map is:
\begin{equation}
\text{CAM}_q(h, w) = \sum_{c=1}^{C} F_{c,h,w} \cdot \max(g_{c,q}(z_c), 0),
\label{eq:nobsp_cam}
\end{equation}
using positive coefficients only. This follows the convention established by GradCAM~\cite{selvaraju2017grad}, which applies ReLU to retain only features with positive influence on the class score.

The \nobsp{} framework naturally produces both positive and negative coefficients, where negative values indicate channels providing evidence against the target class. While this decomposition offers potential interpretive value, the present work focuses exclusively on positive contributions for direct comparability with existing methods. Analysis of negative evidence maps remains an avenue for future investigation.

Once calibrated, \nobsp{}-CAM inference requires only a single forward pass to obtain spatial features, followed by a weighted sum with cached coefficients, comparable to GradCAM's cost without requiring a backward pass.

The framework generalizes beyond the standard GAP architecture. For CNNs employing more complex classifiers, such as one or more fully connected layers after the convolutional backbone, \nobsp{} applies identically by extracting $\vect{z}$ from the penultimate layer of the classifier head rather than directly from pooled features. The only requirement is that the final layer mapping $\vect{z}$ to outputs remains linear, which holds for virtually all classification architectures.

Regarding calibration sample requirements, while the $d_z \times d_z$ matrix inversions in Algorithm~\ref{alg:beta} nominally suggest $N \geq d_z$ samples, the actual requirement is determined by the effective rank of the feature subspaces $\mathcal{V}_k$, not the ambient dimension $d_z$. Caicedo et al.~\cite{caicedo2019nonlinear} observed that decomposition error converges once the number of samples exceeds the maximum rank of the subspace matrices; in their experiments this was 45 samples despite much higher ambient dimensionality.  This phenomenon reflects the low intrinsic dimensionality of learned representations. Neural networks typically encode information on low-dimensional manifolds within the activation space. Consequently, regularization ($\lambda > 0$) combined with the inherently low-rank structure of learned subspaces enables stable coefficient estimation with far fewer samples than $d_z$. We employed calibration sets of 1,000--5,000 samples to ensure robustness across architectures, though substantially smaller sets may suffice when the effective subspace rank is low.

\subsection{Embedding Space and Similarity Retrieval}
\label{sec:embedding}

Beyond functional decomposition, the \nobsp{} framework induces a geometrically meaningful embedding space with applications for similarity based retrieval and uncertainty quantification. This subsection develops these connections.

\subsubsection{Contribution Vector Embedding}

For a given input $\vect{x}$, \nobsp{} decomposition produces a contribution vector:
\begin{equation}
\vect{g}(\vect{x}) = [g_1(x_1), g_2(x_2), \ldots, g_d(x_d)]^T \in \R^d,
\end{equation}
encoding how each input feature contributes to the prediction. This vector defines an embedding where geometry reflects the functional structure of predictions rather than raw feature similarity.

A key property derives from the continuity of contribution functions. For networks with continuous activations (sigmoid, tanh, GELU), the functions $g_k(x_k)$ are smooth; for piecewise linear activations (ReLU), they are piecewise linear and continuous. In either case, the mapping $\vect{x} \mapsto \vect{g}(\vect{x})$ preserves neighborhood structure.

\begin{proposition}[Continuity of \nobsp{} Embedding]
\label{prop:continuity}
Let $f_{NN}: \R^d \to \R$ be a neural network with continuous activation functions and linear output layer. Then \nobsp{} contribution functions $g_k: \R \to \R$ are continuous for all $k$, and consequently the embedding $\vect{g}: \R^d \to \R^d$ is continuous.
\end{proposition}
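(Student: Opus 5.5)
The plan is to make explicit that, after calibration, each contribution function is a fixed linear functional composed with a forward pass of the network, and then to use closure of continuous maps under composition. By Theorem~\ref{thm:beta} and the out-of-sample formula following it, once the calibration data, baseline and $\lambda$ are fixed, the coefficient vectors $\bm{\beta}_{k,j}\in\R^{d_z}$ are constants that do not depend on the query point. For a scalar output, dropping $j$, the contribution of feature $k$ at a value $t\in\R$ is
\begin{equation*}
g_k(t) = \bigl(f_{NN\to L-1}(t\,\vect{e}_k) - \bar{\vect{z}}_k\bigr)^{T}\bm{\beta}_k ,
\end{equation*}
where $\vect{e}_k$ is the $k$-th standard basis vector (the zero baseline of the subspace construction) and $\bar{\vect{z}}_k$ is the fixed calibration mean. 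If a nonzero baseline $\vect{x}_0$ is used, $t\vect{e}_k$ is replaced by $\vect{x}_0 + (t - x_{0,k})\vect{e}_k$, which is still affine in $t$.

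The key steps, in order, are as follows. First, the map $\iota_k: t\mapsto t\vect{e}_k$ from $\R$ to $\R^d$ is affine and therefore continuous. Second, $f_{NN\to L-1}:\R^d\to\R^{d_z}$ is continuous. It is a finite composition of affine maps (dense or convolutional layers, global average pooling, normalization layers with frozen statistics) and coordinatewise continuous activations, and each of these is continuous; a finite composition of continuous maps is continuous. Third, $\vect{v}\mapsto(\vect{v}-\bar{\vect{z}}_k)^T\bm{\beta}_k$ is affine from $\R^{d_z}$ to $\R$ and hence continuous. Composing the three maps shows that $g_k$ is continuous for every $k$. Finally, $\vect{g}(\vect{x}) = (g_1(x_1),\dots,g_d(x_d))$ can be written as $(g_1\circ\pi_1,\dots,g_d\circ\pi_d)$, with $\pi_k$ the $k$-th coordinate projection. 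Each component is continuous, and a map into $\R^d$ whose components are all continuous is continuous. For vector outputs the same argument applies to each $j$ separately.

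The main obstacle is to make sure that no step secretly depends on the query point in a way that could break continuity. This concerns above all the pseudoinverse in \eqref{eq:omega}, since the pseudoinverse is not continuous in its argument. The point to establish is that $\bm{\Omega}_k$, and hence $\bm{\beta}_k$, is computed once from the calibration matrices $\tilde{\mat{Z}}_k$ and $\tilde{\mat{Z}}_{(k)}$. It is therefore a constant with respect to the evaluation point $t$, so the possible discontinuity of $\dagger$ never enters. With $\lambda>0$ in Algorithm~\ref{alg:beta}, the inverses are even regular, although this is not needed for the argument.

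A secondary point is to state precisely which layer types count as continuous. Operations such as max pooling and batch normalization in inference mode are continuous. Stochastic layers such as dropout must be in evaluation mode, where they become deterministic.

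In the same way one can note the refinement mentioned in the text before the proposition. If the activations are $C^1$ or Lipschitz, for example ReLU, then the chain rule or the composition of Lipschitz bounds makes $g_k$ respectively $C^1$ or Lipschitz. Its Lipschitz constant is bounded by $\|\bm{\beta}_k\|$ times the product of the layer Lipschitz constants.
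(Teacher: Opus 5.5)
Your proposal is correct and takes the same route as the paper. The paper also writes $g_k$ as the forward pass on the isolated input, followed by centering with a fixed mean and an inner product with the fixed calibration quantity $\bm{\Omega}_k\vect{w}$, and concludes continuity from closure of continuous maps under composition. Your added points are useful clarifications the paper leaves implicit: the pseudoinverse is evaluated only once at calibration, so its discontinuity never enters, and continuity of $\vect{g}$ follows componentwise.
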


The proof follows from observing that $g_k(x_k) = \tilde{\vect{z}}_k^{\top} \bm{\Omega}_k \vect{w}$ is a composition of continuous mappings, the network forward pass on the isolated input, centering, and multiplication by the fixed matrices $\bm{\Omega}_k$ and $\vect{w}$. This continuity implies that \nobsp{} embedding provides a representation space where distance reflects functional similarity in how predictions are formed.

\subsubsection{Similarity Based Explanation Retrieval}

\nobsp{} embedding facilitates explanation through similarity. Given a test prediction, retrieve training samples whose contribution vectors are most similar.

\begin{definition}[\nobsp{} Similarity]
For inputs $\vect{x}$ and $\vect{x}'$ with contribution vectors $\vect{g}(\vect{x})$ and $\vect{g}(\vect{x}')$, \nobsp{} similarity is:
\begin{equation}
s(\vect{x}, \vect{x}') = \frac{\vect{g}(\vect{x})^T \vect{g}(\vect{x}')}{\|\vect{g}(\vect{x})\| \cdot \|\vect{g}(\vect{x}')\|}.
\end{equation}
\end{definition}

In a deployment setting, the target class $c$ used to compute contribution vectors is the model's predicted class $\hat{y}$, requiring no ground-truth labels. This makes the embedding computable at inference time for any new input.

For a test input $\vect{x}^*$, retrieving the $K$ nearest neighbors according to \nobsp{} similarity provides interpretive value. The prediction for $\vect{x}^*$ is formed similarly to predictions for the retrieved neighbors, offering exemplar based explanations aligned with human reasoning.

\subsubsection{Neighborhood Class Purity}

Inspecting neighbor class labels reveals whether a prediction is well supported by consistent training evidence or arises from an ambiguous region near decision boundaries.

\begin{definition}[Neighborhood Class Purity]
For test input $\vect{x}^*$ with predicted class $\hat{y}^*$ and $K$ nearest neighbors $\mathcal{N}_K(\vect{x}^*)$:
\begin{equation}
\rho(\vect{x}^*) = \frac{1}{K} \sum_{\vect{x}^{(i)} \in \mathcal{N}_K(\vect{x}^*)} \mathrm{I}[y^{(i)} = \hat{y}^*],
\end{equation}
where $y^{(i)}$ is the true label of training sample $\vect{x}^{(i)}$.
\end{definition}

A purity value $\rho(\vect{x}^*) \approx 1$ indicates the prediction is supported by consistent training evidence, while $\rho(\vect{x}^*) \approx 1/C$ (where $C$ is the number of classes) suggests the sample lies in a highly ambiguous region. This provides uncertainty quantification complementing softmax confidence, as predictions with high softmax confidence but low neighborhood purity signal potential overconfidence.

\subsubsection{Computational Considerations}

Similarity based retrieval requires computing distances between the test contribution vector and all training embeddings. For large training sets, approximate nearest neighbor methods (locality-sensitive hashing, hierarchical navigable small world graphs) provide efficient retrieval with sublinear complexity in $N$. Storage requirement is $O(Nd)$ for training embeddings, typically modest compared to storing training data. Embeddings are computed once after model training and cached for subsequent retrieval. For \nobsp{}-CAM, contribution vectors have dimension $C$ (number of channels), enabling efficient storage even for large-scale datasets.

\section{Experimental Setup}
\label{sec:experiments}

We evaluate \nobsp{} on both tabular and vision benchmarks to demonstrate its versatility across data modalities. For tabular data, we compare against established attribution methods on classification and regression tasks. For computer vision, we extend \nobsp{} to \nobsp{}-CAM and compare against gradient based class activation mapping methods.

\subsection{Tabular Experiments}
\label{sec:exp_tabular}

\subsubsection{Datasets}

We evaluate on seven classification and two regression datasets spanning diverse domains. Table~\ref{tab:tabular_datasets} summarizes the datasets. Feature counts reflect encoded features after preprocessing, numeric columns are standardized to zero mean and unit variance, categorical variables are one-hot encoded, and missing values are imputed with the median (numeric) or mapped to an explicit category (categorical).

\begin{table}[t]
\centering
\caption{Tabular benchmark datasets. All datasets use 80/20 train/test splits with three random seeds for statistical reliability.}
\label{tab:tabular_datasets}
\footnotesize
\setlength{\tabcolsep}{3pt}
\begin{tabular}{@{}lcccc@{}}
\toprule
Dataset & Samples & Features & Task & Domain \\
\midrule
\multicolumn{5}{@{}l}{\textit{Classification}} \\
Iris & 150 & 4 & 3-class & Botany \\
Breast Cancer & 569 & 30 & Binary & Medical \\
Wine & 178 & 13 & 3-class & Chemistry \\
Adult & 5,000 & 124 & Binary & Census \\
German Credit & 1,000 & 61 & Binary & Finance \\
Bank Marketing & 5,000 & 51 & Binary & Finance \\
Covertype & 10,000 & 54 & 7-class & Ecology \\
\midrule
\multicolumn{5}{@{}l}{\textit{Regression}} \\
Diabetes & 442 & 10 & Continuous & Medical \\
California Housing & 20,640 & 8 & Continuous & Economics \\
\bottomrule
\end{tabular}
\end{table}

\subsubsection{Model Architecture}

For classification, we use a feedforward network with two hidden layers (20 and 200 neurons), ReLU activations, and softmax output. For regression, hidden layers of 128 and 64 neurons with linear output. All models are trained with Adam (learning rate $3 \times 10^{-3}$, weight decay $10^{-4}$) for 35--40 epochs with early stopping.

\subsubsection{Comparison Methods}

We compare \nobsp{} against two established attribution methods:
\begin{itemize}
\item \textbf{Integrated Gradients (IG)}~\cite{sundararajan2017axiomatic}: Accumulates gradients along a path from baseline (training mean) to input using 25 integration steps.
\item \textbf{KernelSHAP}~\cite{lundberg2017unified}: Approximates Shapley values via weighted linear regression on feature coalitions, using 50 background samples.
\end{itemize}

For \nobsp{}, we calibrate on the training split and evaluate on the test split. To ensure comparability with baseline-perturbation metrics, we report $\Delta$-baseline attributions: $\text{attr}_k(\vect{x}) = g_k(x_k) - g_k(x_k^{\text{baseline}})$.

\subsubsection{Evaluation Metrics}

Standard perturbation faithfulness evaluates scalar attribution rankings, whereas \nobsp{} produces feature wise functional contributions. We therefore complement a conventional attribution metric with a function level metric for the setting where ground truth component functions are known.

\textbf{Attribution Faithfulness Correlation} measures the Spearman correlation between mean absolute attributions per feature and prediction change when each feature is set to baseline. Higher values indicate better alignment between attributed importance and actual feature influence. We report it for comparability with prior XAI literature; it evaluates scalar rankings, not functional recovery.

\textbf{Function Reproduction Score (FRS)} directly evaluates functional recovery on a synthetic additive benchmark with known components, $y = |x_1| + x_2^3 + e^{x_3} + \sin(2 x_4) + 0 \cdot x_5 + \eta$, where $x_5$ is a null feature (1{,}000 samples, noise level 0.01, 80/20 split, seeds 0--2, scores computed on 200 held out test samples per seed). Because additive decompositions are identifiable only up to per feature offsets, both the true component $f_k$ and the estimated contribution are centered feature wise before scoring, and the direct FRS for feature $k$ is the coefficient of determination of the estimate against the centered ground truth. We also report the normalized RMSE, the mean absolute attribution assigned to the null feature (false positive magnitude), and the reconstruction R$^2$ of summed contributions against the model output. Reconstruction measures additive output accounting, not recovery of the individual component functions.

\subsection{Vision Experiments}
\label{sec:exp_vision}

\subsubsection{Model Architectures}

We evaluate \nobsp{}-CAM across four CNN architectures representing diverse design philosophies. Table~\ref{tab:models} summarizes the architectures. The penultimate layer dimension directly affects calibration time for \nobsp{}-CAM.

\begin{table}[t]
\centering
\caption{CNN architectures evaluated for \nobsp{}-CAM.}
\label{tab:models}
\footnotesize
\setlength{\tabcolsep}{3pt}
\begin{tabular}{@{}lccc@{}}
\toprule
Model & $d_z$ & Parameters & Type \\
\midrule
ResNet-18~\cite{he2016deep} & 512 & 11.7M & Residual \\
MobileNetV2~\cite{sandler2018mobilenetv2} & 1280 & 3.4M & Inverted Residual \\
EfficientNet-B0~\cite{tan2019efficientnet} & 1280 & 4.7M & Compound Scaling \\
RegNetY-400MF~\cite{radosavovic2020designing} & 440 & 4.3M & Regularized Design \\
\bottomrule
\end{tabular}
\end{table}

\subsubsection{Datasets}

We evaluate on six vision datasets spanning general object recognition, fine-grained classification, and medical imaging. Table~\ref{tab:vision_datasets} summarizes the datasets.

\begin{table}[t]
\centering
\caption{Vision benchmark datasets. Training regime indicates whether models were trained from scratch or fine-tuned from ImageNet pretrained weights.}
\label{tab:vision_datasets}
\footnotesize
\setlength{\tabcolsep}{2pt}
\begin{tabular}{@{}lccccc@{}}
\toprule
Dataset & Classes & Train/Val & Resolution & Regime \\
\midrule
CIFAR-10 & 10 & 50K/10K & $32^2$ or $32^2 \to 128^2$* & Scratch \\
CIFAR-100 & 100 & 50K/10K & $32^2$ or $32^2 \to 128^2$* & Scratch \\
TinyImageNet & 200 & 100K/10K & $64^2 \to 224^2$ & Fine-tune \\
Imagenette & 10 & 9K/4K & $224^2$ & Fine-tune \\
STL-10 & 10 & 5K/8K & $96^2 \to 224^2$ & Fine-tune \\
DermaMNIST & 7 & 7K/1K & $28^2 \to 224^2$ & Fine-tune \\
\bottomrule
\end{tabular}
\end{table}

DermaMNIST~\cite{yang2023medmnist}, derived from the HAM10000 dataset~\cite{tschandl2018ham10000}, contains dermatoscopic images of pigmented skin lesions classified into seven diagnostic categories. This dataset provides a clinically relevant benchmark where interpretability is crucial for deployment.

For CIFAR-10 and CIFAR-100, models are trained from scratch. For larger resolution datasets and DermaMNIST, we fine-tune from ImageNet pretrained weights.

*The CIFAR resolution depends on the checkpoint architecture. Small-input ResNet-18 checkpoints are evaluated at native $32 \times 32$ resolution with a modified downsampling stem, whereas ImageNet-style compact backbones (MobileNetV2, EfficientNet-B0, RegNetY-400MF) use $128 \times 128$ resized inputs to avoid overly coarse final feature maps for CAM generation. All methods compared within a dataset and model row use the same checkpoint, preprocessing, target layer, and image indices, so method comparisons are unaffected by this choice.

Table~\ref{tab:model_accuracy} reports the accuracy of the trained checkpoints analyzed in the CAM benchmark. Accuracies are adequate for studying explanations of competent models; we make no competitive accuracy claims.

\begin{table}[t]
\centering
\caption{Top-1 validation accuracy of the checkpoints used in the CAM benchmark, as evaluated by the benchmark harness on held out validation subsets of 320--1{,}000 images per configuration.}
\label{tab:model_accuracy}
\footnotesize
\begin{tabular}{@{}lccc@{}}
\toprule
Model & CIFAR-10 & TinyImageNet & DermaMNIST \\
\midrule
ResNet-18 & 0.948 & 0.759 & 0.856 \\
MobileNetV2 & 0.852 & 0.763 & 0.834 \\
EfficientNet-B0 & 0.910 & 0.800 & 0.838 \\
RegNetY-400MF & 0.914 & 0.803 & 0.837 \\
\bottomrule
\end{tabular}
\end{table}

\subsubsection{Comparison Methods}

We compare \nobsp{}-CAM against three class activation mapping methods:
\begin{itemize}
\item \textbf{GradCAM}~\cite{selvaraju2017grad}: Computes channel weights using gradients of class score with respect to feature maps, applying ReLU to retain positive contributions.
\item \textbf{GradCAM++}~\cite{chattopadhay2018grad}: Uses weighted combinations of positive partial derivatives for improved multi-instance localization.
\item \textbf{Eigen-CAM}~\cite{muhammad2020eigen}: Applies PCA to feature maps, using the first principal component as importance weights.
\end{itemize}

Following GradCAM convention, we focus on positive contributions (\nobsp{}-CAM$^+$) for primary comparisons.

\subsubsection{Evaluation Metrics}

We evaluate on synchronized sample subsets of 200 images per dataset and model configuration, so that all CAM methods within a configuration are applied to identical image indices and method comparisons are paired.

\textbf{Region Deletion AUC} ranks heatmap patches by mean attribution, progressively masks the highest ranked patches with a zero baseline, and integrates the normalized drop in target class probability over 10 perturbation steps (patch size 32, 16, or 8 pixels for inputs of at least 224, 128, or 64 pixels, respectively). Higher values indicate that the highlighted regions carry more evidence for the target class. We adopt this region level metric because pixel level faithfulness correlation is mismatched to the spatial granularity of CAMs, which are coarse by construction; under that pixel level metric all methods score near zero. The faithfulness correlation results are reported in the supplementary material for completeness.

\textbf{Sparseness}, computed with the Quantus framework~\cite{hedstrom2023quantus} on synchronized subsets of 20--50 images per configuration, measures attribution concentration: $\text{S} = 1 - \|\vect{a}\|_1 / (\sqrt{n}\|\vect{a}\|_2)$. Higher values indicate more focused explanations. Sparseness describes the shape of an explanation, not its correctness.

\subsection{Implementation Details}
\label{sec:implementation}

All experiments use Python 3.10 with PyTorch 2.5.1 (CUDA 12.4), Captum 0.8.0, and Quantus 0.6.0 on a single NVIDIA RTX 3090 (24 GB). Sparseness and the faithfulness correlation are computed with Quantus; Region Deletion AUC and the tabular metrics are implemented in our benchmark harness. Vision calibration uses mixed precision (bf16) with a batched Cholesky solver. All experiments are repeated across three random seeds (0, 1, 2), and all regularized solves use $\lambda = 10^{-4}$. A reference implementation of NObSP is publicly available at \url{https://github.com/santialferez/nobsp_lib}.

For tabular experiments, \nobsp{} is calibrated on the training split and evaluated on the held out test split.

For vision experiments, \nobsp{}-CAM calibration employs the partial regression formulation (Algorithm~\ref{alg:beta}) with Cholesky solver and $\lambda = 10^{-4}$. Calibration is performed on validation sets (1{,}000--5{,}000 images per configuration) with coefficients cached for inference. We disclose that the images used for metric evaluation are drawn from the same validation split used for \nobsp{}-CAM calibration (the training split is consumed by model training); evaluation images are a synchronized subset, identical for all CAM methods, and the gradient based baselines require no calibration. This shared split could in principle favor \nobsp{}-CAM, so we note that its Region Deletion AUC results remain slightly below GradCAM, which bounds any such advantage. Table~\ref{tab:calibration_time} presents calibration times by architecture.

\begin{table}[t]
\centering
\caption{Calibration time by penultimate layer dimension. Once calibrated, inference requires only a forward pass plus matrix multiplication.}
\label{tab:calibration_time}
\footnotesize
\begin{tabular}{@{}lccc@{}}
\toprule
Model & Feature Dim & Median Time & Max Time \\
\midrule
RegNetY-400MF & 440 & $<$1s & $<$1s \\
ResNet-18 & 512 & 6.7s & $\sim$30s \\
EfficientNet-B0 & 1280 & 14min & 45min \\
MobileNetV2 & 1280 & 25min & 2.7h \\
\bottomrule
\end{tabular}
\end{table}

Calibration time scales with penultimate dimension: 440--512 dimensional models complete in seconds, while 1280-dimensional models require 14--25 minutes. This one-time cost amortizes over inference, where \nobsp{}-CAM requires only a forward pass followed by matrix multiplication, comparable to GradCAM without a backward pass.

\section{Results}
\label{sec:results}

This section presents quantitative and qualitative evaluation of \nobsp{} on tabular and vision benchmarks, along with embedding analysis demonstrating the geometric properties of contribution vectors.

\subsection{Tabular Data Results}
\label{sec:results_tabular}

\subsubsection{Attribution-Faithfulness Correlation}

Table~\ref{tab:tabular_faithfulness} presents attribution faithfulness correlation results. This metric measures alignment between attributed feature importances and actual prediction changes when features are masked.

\begin{table}[t]
\centering
\caption{Attribution Faithfulness Correlation on tabular datasets (higher is better). Values averaged across three seeds. Best per dataset in \textbf{bold}.}
\label{tab:tabular_faithfulness}
\footnotesize
\setlength{\tabcolsep}{4pt}
\begin{tabular}{@{}lccc@{}}
\toprule
Dataset & \nobsp{} & IG & KernelSHAP \\
\midrule
\multicolumn{4}{@{}l}{\textit{Classification}} \\
Iris & \textbf{1.000} & \textbf{1.000} & \textbf{1.000} \\
Breast Cancer & 0.946 & \textbf{0.981} & 0.922 \\
Wine & 0.916 & \textbf{0.991} & 0.980 \\
Adult & 0.861 & \textbf{0.986} & 0.819 \\
German Credit & 0.764 & \textbf{0.962} & 0.756 \\
Bank Marketing & 0.847 & \textbf{0.978} & 0.836 \\
Covertype & 0.853 & \textbf{0.954} & 0.871 \\
\midrule
\multicolumn{4}{@{}l}{\textit{Regression}} \\
Diabetes & 0.912 & \textbf{0.987} & 0.956 \\
California Housing & 0.878 & \textbf{0.991} & 0.934 \\
\bottomrule
\end{tabular}
\end{table}

Integrated Gradients achieves the highest faithfulness across datasets, expected given IG directly optimizes for baseline-to-input attribution. \nobsp{} achieves competitive performance, particularly on simpler datasets (Iris, Breast Cancer), and outperforms KernelSHAP on several classification datasets.

\subsubsection{Functional Decomposition Quality}

Beyond numerical metrics, \nobsp{} provides interpretable functional forms $g_k(x_k)$ revealing how each feature affects predictions across its range. Figure~\ref{fig:synthetic_decomposition} illustrates this on the synthetic additive regression with known ground truth $y = |x_1| + x_2^3 + e^{x_3} + \sin(2 x_4) + 0 \cdot x_5 + \eta$. \nobsp{} accurately recovers the absolute value, cubic, exponential, and sinusoidal components, while correctly assigning the null feature $x_5$ a negligible contribution.

Table~\ref{tab:frs} quantifies this recovery with the Function Reproduction Score. \nobsp{} attains the highest direct FRS (0.989, against 0.966 for KernelSHAP and 0.922 for Integrated Gradients), the lowest normalized error, and the smallest false positive attribution on the null feature. KernelSHAP and Integrated Gradients reconstruct the model output exactly (reconstruction R$^2$ of 1.000) because both are constructed around additive output accounting, yet their pointwise recovery of the true component functions is weaker. \nobsp{}'s reconstruction R$^2$ of 0.975 reflects the interaction residual that its main effect decomposition excludes by construction. Per feature FRS scores appear in the supplementary material. This function level evaluation is the primary quantitative evidence for \nobsp{}'s central claim, when the evaluation target is recovery of the true component functions, \nobsp{} recovers them more accurately than scalar attribution methods.

\begin{table}[t]
\centering
\caption{Function Reproduction Score on the synthetic additive benchmark (mean $\pm$ SD over seeds 0--2). Direct FRS R$^2$ measures pointwise recovery of the centered ground truth components; Null abs.\ is the mean absolute attribution on the null feature $x_5$; Recon.\ R$^2$ measures reconstruction of the model output by summed contributions. Bold marks the numerical best per column.}
\label{tab:frs}
\footnotesize
\setlength{\tabcolsep}{2pt}
\begin{tabular}{@{}lcccc@{}}
\toprule
Method & FRS R$^2$ $\uparrow$ & NRMSE $\downarrow$ & Null abs. $\downarrow$ & Recon. R$^2$ $\uparrow$ \\
\midrule
\nobsp{} & \textbf{0.989 $\pm$ 0.001} & \textbf{0.098 $\pm$ 0.002} & \textbf{0.009 $\pm$ 0.003} & 0.975 $\pm$ 0.003 \\
KernelSHAP & 0.966 $\pm$ 0.001 & 0.173 $\pm$ 0.002 & 0.014 $\pm$ 0.001 & \textbf{1.000 $\pm$ 0.000} \\
IG & 0.922 $\pm$ 0.006 & 0.261 $\pm$ 0.006 & 0.022 $\pm$ 0.003 & \textbf{1.000 $\pm$ 0.000} \\
\bottomrule
\end{tabular}
\end{table}

For real datasets, ground truth component functions are unavailable. As a diagnostic companion, the supplementary material reports a local conditional pairwise delta agreement analysis, which compares model output deltas with explanation deltas on observed pairs of samples differing primarily in one feature. Results there are mixed: \nobsp{} is strongest on Iris and Breast Cancer, while KernelSHAP attains the strongest agreement on most of the remaining datasets. We report it for transparency rather than as evidence of uniform superiority.

\begin{figure*}[t]
\centering
\includegraphics[width=\textwidth]{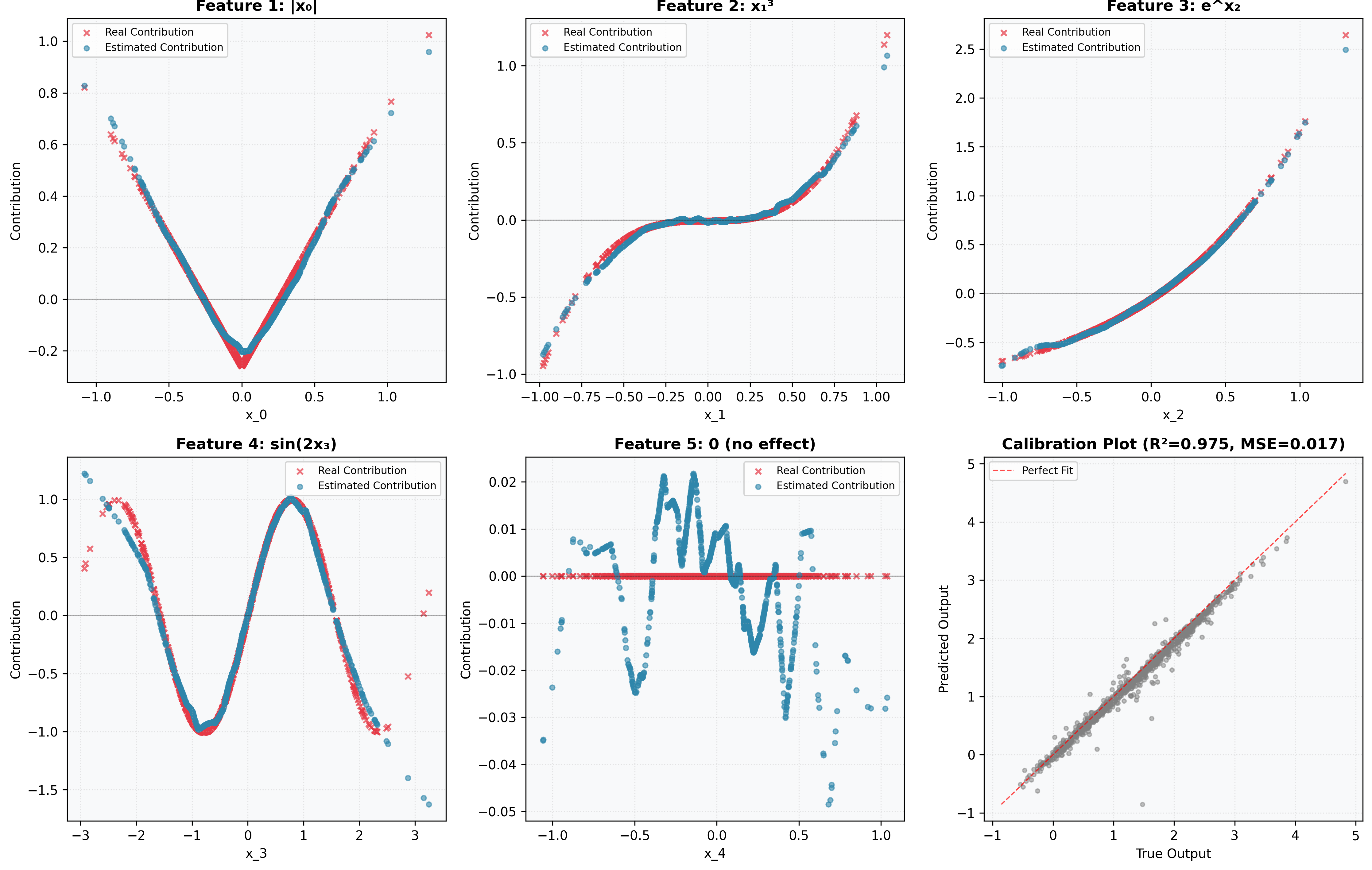}
\caption{Functional decomposition on the synthetic additive regression with known ground truth $y = |x_1| + x_2^3 + e^{x_3} + \sin(2 x_4) + 0 \cdot x_5 + \eta$. Panels show the recovered functional forms $g_k(x_k)$ against the true component functions. \nobsp{} recovers the absolute value, cubic, exponential, and sinusoidal components and correctly assigns the null feature $x_5$ a negligible contribution.}
\label{fig:synthetic_decomposition}
\end{figure*}

\subsection{Vision Results}
\label{sec:results_vision}

\subsubsection{Region Deletion AUC}

Table~\ref{tab:vision_region_deletion} presents Region Deletion AUC across model and dataset combinations, with 200 synchronized evaluation samples per configuration.

\begin{table}[t]
\centering
\caption{Region Deletion AUC on vision benchmarks (higher is better). The metric ranks heatmap patches by attribution, masks the highest ranked regions, and integrates the normalized drop in target class probability; higher values indicate that highlighted regions contain more target class evidence. Values are means over 200 synchronized image samples per row. Bold marks the numerical maximum per row, not statistical significance.}
\label{tab:vision_region_deletion}
\scriptsize
\setlength{\tabcolsep}{2pt}
\begin{tabular}{@{}llcccc@{}}
\toprule
Model & Dataset & GradCAM & GradCAM++ & Eigen-CAM & \nobsp{} \\
\midrule
ResNet-18 & CIFAR-10 & 0.629 & 0.610 & \textbf{0.647} & 0.611 \\
ResNet-18 & TinyImageNet & \textbf{0.718} & 0.699 & 0.632 & 0.697 \\
ResNet-18 & DermaMNIST & \textbf{0.582} & 0.552 & 0.534 & 0.549 \\
MobileNetV2 & CIFAR-10 & 0.601 & 0.604 & 0.580 & \textbf{0.614} \\
MobileNetV2 & TinyImageNet & \textbf{0.659} & \textbf{0.659} & 0.635 & 0.656 \\
MobileNetV2 & DermaMNIST & \textbf{0.553} & 0.550 & 0.533 & 0.535 \\
EfficientNet-B0 & CIFAR-10 & \textbf{0.551} & 0.549 & 0.521 & 0.540 \\
EfficientNet-B0 & TinyImageNet & \textbf{0.720} & 0.709 & 0.659 & 0.703 \\
EfficientNet-B0 & DermaMNIST & 0.561 & \textbf{0.563} & 0.559 & 0.552 \\
RegNetY-400MF & CIFAR-10 & 0.593 & \textbf{0.597} & 0.587 & 0.595 \\
RegNetY-400MF & TinyImageNet & \textbf{0.685} & 0.674 & 0.640 & 0.673 \\
RegNetY-400MF & DermaMNIST & \textbf{0.516} & 0.478 & 0.460 & 0.478 \\
\bottomrule
\end{tabular}
\end{table}

GradCAM is the strongest method overall on this metric (mean 0.614 over the subset, against 0.604 for GradCAM++, 0.600 for \nobsp{}-CAM, and 0.582 for Eigen-CAM). Under this region level metric, \nobsp{}-CAM achieves faithfulness comparable to GradCAM and GradCAM++, with differences that are small relative to variation across datasets and architectures: it attains the row maximum on MobileNetV2 with CIFAR-10 (0.614 vs 0.601 for GradCAM) and is close to GradCAM on the remaining CIFAR-10 rows and on MobileNetV2 with TinyImageNet. GradCAM remains stronger on most TinyImageNet and DermaMNIST rows. Because the same image indices are used for all methods within each configuration, method comparisons are paired; the corresponding bootstrap intervals and significance tests are summarized in Section~\ref{sec:results_stats}. The distinct contribution of \nobsp{}-CAM is therefore not superior deletion performance, but its calibrated decomposition into positive and negative evidence, requiring no backward pass at inference. Results for the full profile of six datasets and four backbones (24 configurations) appear in the supplementary material and show the same pattern.

\subsubsection{Sparseness}

Table~\ref{tab:vision_sparseness} presents sparseness results measuring attribution concentration.

\begin{table}[t]
\centering
\caption{Sparseness on vision benchmarks (higher indicates more focused attributions). Values are means over synchronized subsets of 20--50 images per row. Bold marks the numerical maximum per row, not statistical significance. Model 1 is ResNet-18, model 2 is EfficientNet-B0, and model 3 is RegNetY-400MF.}
\label{tab:vision_sparseness}
\footnotesize
\setlength{\tabcolsep}{2pt}
\begin{tabular}{@{}llcccc@{}}
\toprule
Model & Dataset & GradCAM & GradCAM++ & Eigen-CAM & \nobsp{} \\
\midrule
\multirow{3}{*}{1}
  & CIFAR-10 & 0.258 & 0.259 & \textbf{0.617} & 0.264 \\
  & TinyImageNet & 0.404 & 0.373 & \textbf{0.695} & 0.375 \\
  & DermaMNIST & 0.472 & 0.392 & \textbf{0.718} & 0.402 \\
\midrule
\multirow{3}{*}{2}
  & CIFAR-10 & 0.289 & 0.285 & \textbf{0.529} & 0.293 \\
  & TinyImageNet & 0.599 & 0.652 & \textbf{0.758} & 0.562 \\
  & DermaMNIST & 0.545 & 0.699 & \textbf{0.812} & 0.586 \\
\midrule
\multirow{3}{*}{3}
  & CIFAR-10 & 0.235 & 0.235 & \textbf{0.569} & 0.229 \\
  & TinyImageNet & 0.595 & 0.567 & \textbf{0.745} & 0.564 \\
  & DermaMNIST & 0.624 & 0.544 & \textbf{0.775} & 0.583 \\
\bottomrule
\end{tabular}
\end{table}

Eigen-CAM consistently produces the sparsest attributions, expected from its PCA-based approach concentrating attribution on dominant directions. GradCAM, GradCAM++, and \nobsp{}-CAM produce comparable sparseness, with \nobsp{}-CAM slightly less sparse on average. This reflects \nobsp{}-CAM's inclusion of contributions from multiple channels rather than focusing on a single dominant component. Read together with Table~\ref{tab:vision_region_deletion}, sparseness and faithfulness dissociate: Eigen-CAM is the sparsest method yet obtains the lowest Region Deletion AUC, so concentration of an explanation does not by itself indicate that the highlighted regions carry class evidence.

\subsubsection{Qualitative Analysis}

Figure~\ref{fig:cam_comparison} presents representative CAM comparisons. On general object recognition, all methods highlight similar discriminative regions, with \nobsp{}-CAM producing comparable localization to gradient based approaches.

\begin{figure*}[h!]
\centering
\includegraphics[width=\textwidth]{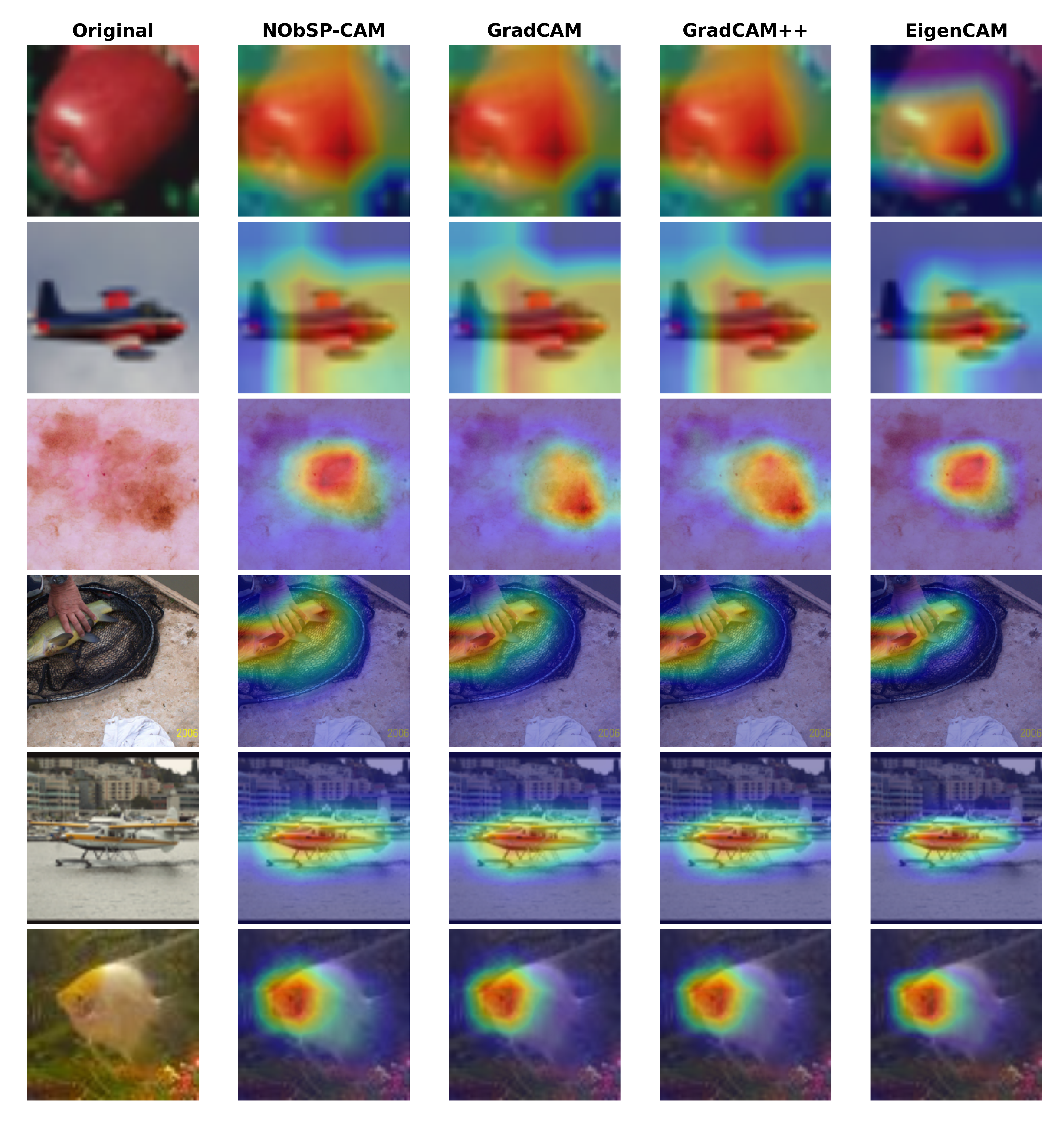}
\caption{Qualitative comparison of class activation maps across datasets (EfficientNet-B0). Each row shows a different sample with columns: original image, \nobsp{}-CAM (positive contributions), GradCAM, GradCAM++, and Eigen-CAM. Row labels indicate predicted class and confidence. All methods highlight similar discriminative regions.}
\label{fig:cam_comparison}
\end{figure*}

On medical imaging (DermaMNIST, Figure~\ref{fig:dermamnist_comparison}), \nobsp{}-CAM qualitatively suggests improved background suppression on these examples, highlighting lesion cores while reducing activation on surrounding healthy skin. A systematic localization evaluation with lesion masks remains future work.

\begin{figure}[t]
\centering
\includegraphics[width=\columnwidth]{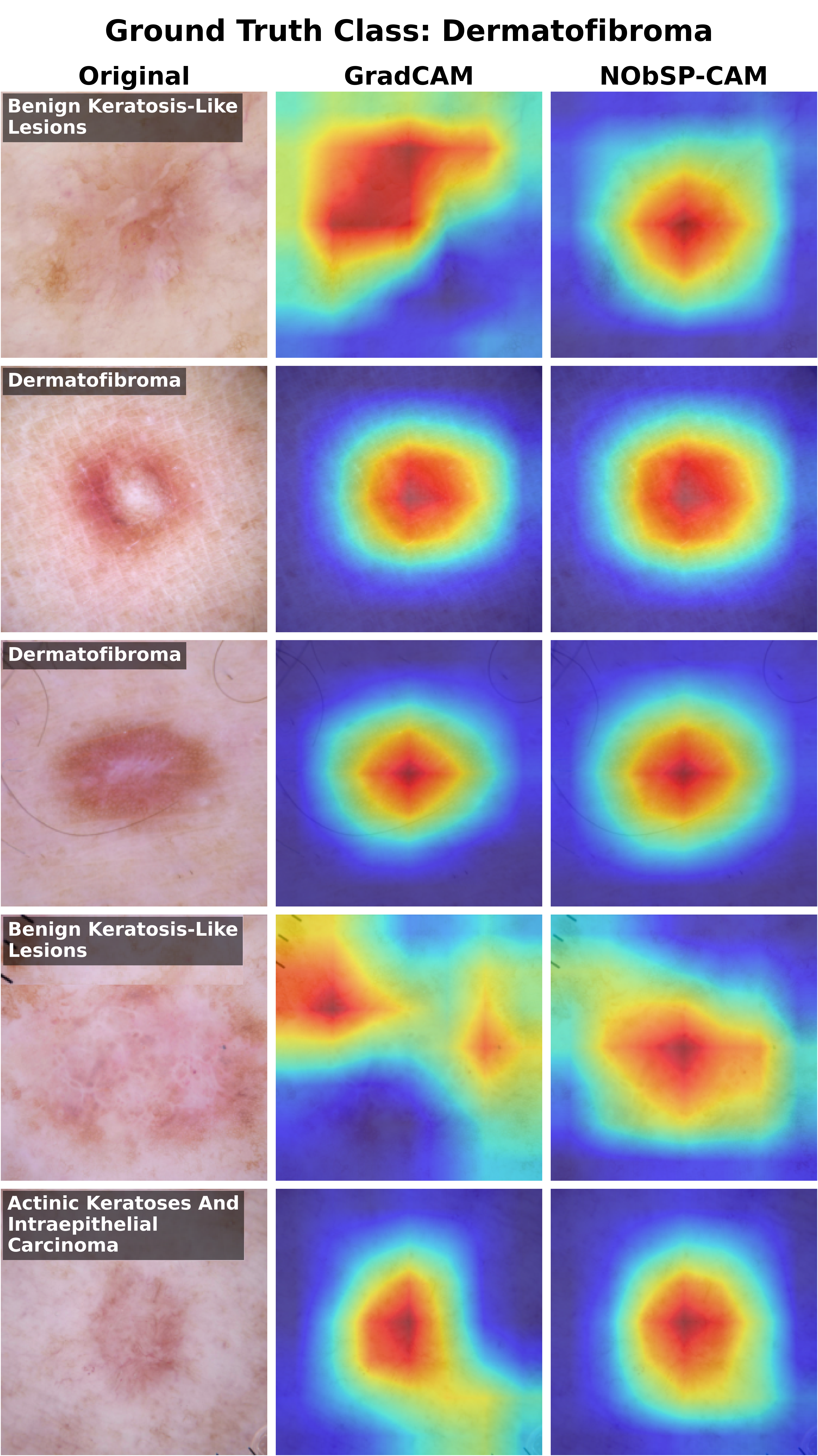}
\caption{CAM comparison on DermaMNIST dermatoscopic images. Columns: original image, GradCAM, \nobsp{}-CAM$^+$ (positive evidence). In these examples, \nobsp{}-CAM shows stronger background suppression, highlighting lesion cores while reducing activation on surrounding skin.}
\label{fig:dermamnist_comparison}
\end{figure}

\subsection{Embedding Analysis Results}
\label{sec:results_embedding}

We evaluate the \nobsp{} embedding space on TinyImageNet (200 classes, 224$\times$224) using ResNet-18, comparing contribution embeddings computed using the model's predicted class against raw penultimate layer activations. The training set (100,000 images) serves as the kNN database and the validation set (10,000 images) as queries, with $K=8$ neighbors and Euclidean distance after z-score followed by L2 normalization.

\subsubsection{Neighborhood Purity}

Table~\ref{tab:purity} presents purity@$K$ results. The \nobsp{} embedding improves mean class purity (0.713 vs 0.654 for raw activations at $K=8$) and reaches a median purity of 1.0, while reducing mean neighbor distance by more than half. This suggests that contribution vectors capture the classifier's decision geometry rather than merely feature space structure.

\begin{table}[t]
\scriptsize
\centering
\caption{Neighborhood retrieval on TinyImageNet ($K=8$). \nobsp{} 
embeddings computed using the predicted class improve purity and 
substantially reduce neighbor distance compared to raw activations.}
\label{tab:purity}
\begin{tabular}{@{}lccc@{}}
\toprule
Embedding & Purity@8 Mean & Purity@8 Median & Mean kNN Dist. \\
\midrule
Raw Activations & 0.654 & 0.875 & 0.855 \\
\nobsp{} (Pred) & \textbf{0.713} & \textbf{1.000} & \textbf{0.408} \\
\bottomrule
\end{tabular}
\end{table}

The \nobsp{} embedding improves mean purity by approximately 6 percentage points (0.654 to 0.713) while reducing mean kNN distance by more than half (0.855 to 0.408). The median purity of 1.0 indicates that the majority of validation queries achieve perfect same class neighborhoods; the lower mean reflects the minority of misclassified samples, whose contribution vectors are computed against the incorrect predicted class and consequently attract neighbors from that wrong class. This asymmetry is itself informative, low purity under \nobsp{} embedding signals likely misclassification, providing an unsupervised confidence measure that complements softmax scores.

Importantly, high neighborhood purity is only interpretable jointly
with prediction correctness. For correctly classified samples, high
purity provides trustworthy visual evidence supporting the
prediction (Figure~\ref{fig:ret_camel},~\ref{fig:ret_poncho}). For
confidently misclassified samples, however, high purity can
reinforce incorrect decisions by retrieving coherent evidence for
the wrong class (Figure~\ref{fig:ret_fence}), while activation
neighbors may preserve useful ambiguity. This asymmetry suggests
that \nobsp{} retrieval is most valuable as an auditing tool when
combined with complementary signals such as activation space
retrieval or softmax calibration.



\subsubsection{Qualitative Retrieval Analysis}

Figure~\ref{fig:retrieval_panels} presents retrieval panels for
representative boundary and outlier cases, comparing \nobsp{} and
activation neighbors side by side. Boundary cases are defined by
activation space purity in $[0.4, 0.7]$, identifying queries that
are ambiguous in the raw feature space; outlier cases are queries
whose class conditional centroid distance exceeds the 95th
percentile.

In panel~(\subref{fig:ret_camel}), an Arabian camel image is
correctly classified with high confidence (0.91). The \nobsp{}
neighbors are four camels at distances $d \leq 0.43$, providing
coherent visual evidence for the prediction. The activation neighbors
include African elephants, large animals in similar outdoor
settings, that share feature space structure but belong to the wrong
class. Panel~(\subref{fig:ret_poncho}) shows an analogous pattern
for a poncho (conf.\ 0.82): \nobsp{} retrieves four ponchos while
activation neighbors include vestments and fur coats, semantically
related garments from incorrect classes. In both cases, \nobsp{}
embedding isolates the decision relevant features, producing
neighbors that directly support the prediction.

Panel~(\subref{fig:ret_fence}) reveals an important limitation. A
picket fence is confidently misclassified as a space heater (0.90),
and \nobsp{} retrieves four space heaters, objects whose repeating
grid patterns visually resemble the fence slats. The \nobsp{}
embedding faithfully reflects the model's reasoning, but in doing so
reinforces an incorrect decision with coherent evidence. The
activation neighbors, by contrast, include two actual picket fences,
exposing an ambiguity that the \nobsp{} embedding suppresses. This
demonstrates that high \nobsp{} purity is only reliable as
supporting evidence when the underlying prediction is correct; for
confident misclassifications, activation space retrieval can provide
a complementary diagnostic.

Panels~(\subref{fig:ret_arch}) and~(\subref{fig:ret_seashore}) show
outlier cases. In~(\subref{fig:ret_arch}), a triumphal arch
misclassified as a suspension bridge (conf.\ 0.54) produces mixed
architectural neighbors in both spaces, bridges, arches, fountains,
viaducts, reflecting genuine visual ambiguity in the stone-arch
query. The low confidence combined with mixed neighborhoods
consistently signals an unreliable prediction.
In~(\subref{fig:ret_seashore}), an image labeled as rugby ball is
predicted as seashore (conf.\ 0.66); both \nobsp{} and activation
neighbors are dominated by beach scenes, consistent with the visual
content. The absence of rugby ball related images in either
embedding space suggests a potential labeling error rather than a
model failure, illustrating how retrieval can serve as an auditing
tool for dataset quality.

Together, these panels show that \nobsp{} retrieval enhances
interpretability by providing class consistent evidence when the
model is correct, while its behavior under misclassification, either
reinforcing errors or exposing uncertainty, provides complementary
diagnostic information about prediction reliability.

\begin{figure*}[!t]
\centering
\begin{subfigure}[t]{0.49\textwidth}
\centering
\includegraphics[width=\textwidth]{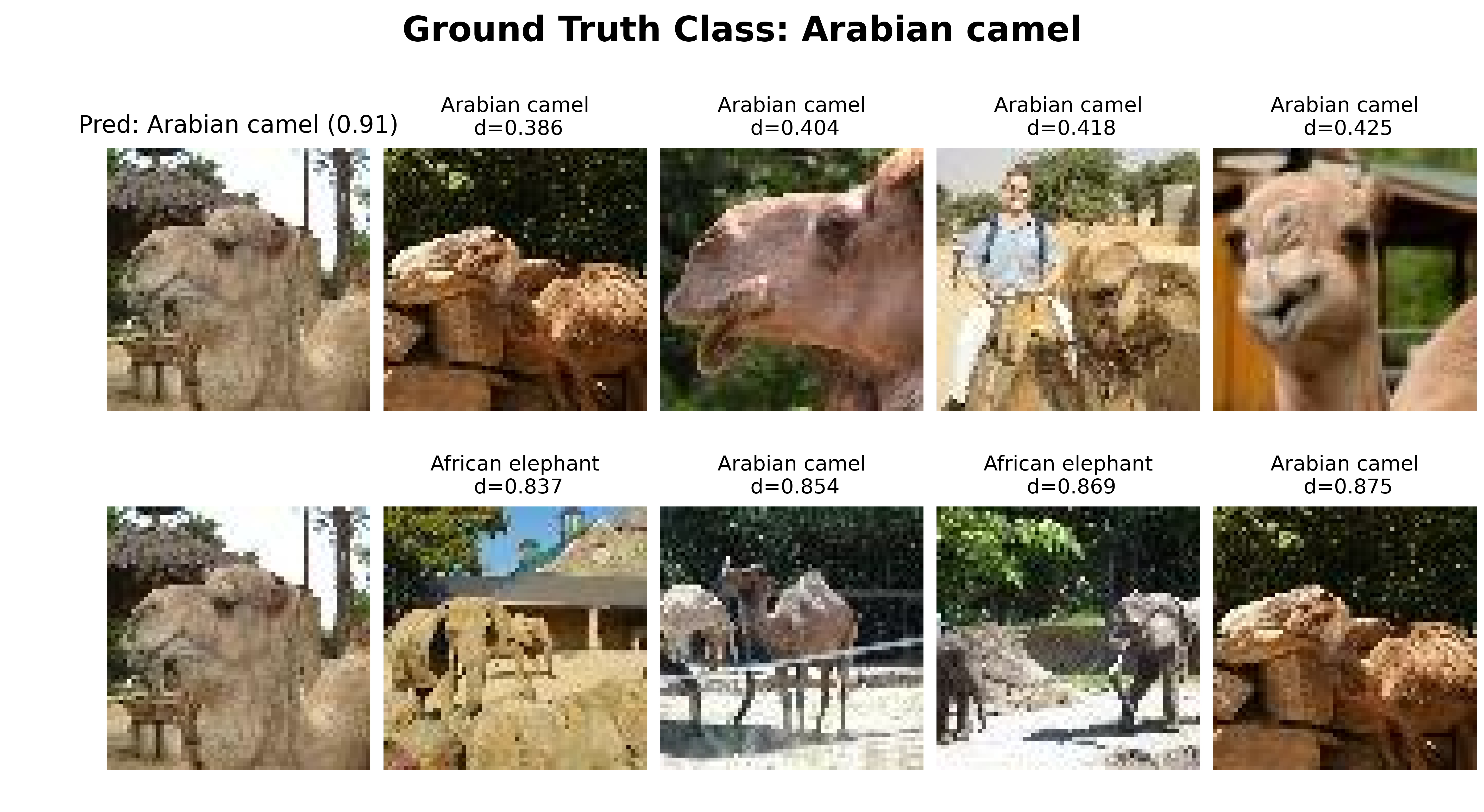}
\caption{}
\label{fig:ret_camel}
\end{subfigure}
\hfill
\begin{subfigure}[t]{0.49\textwidth}
\centering
\includegraphics[width=\textwidth]{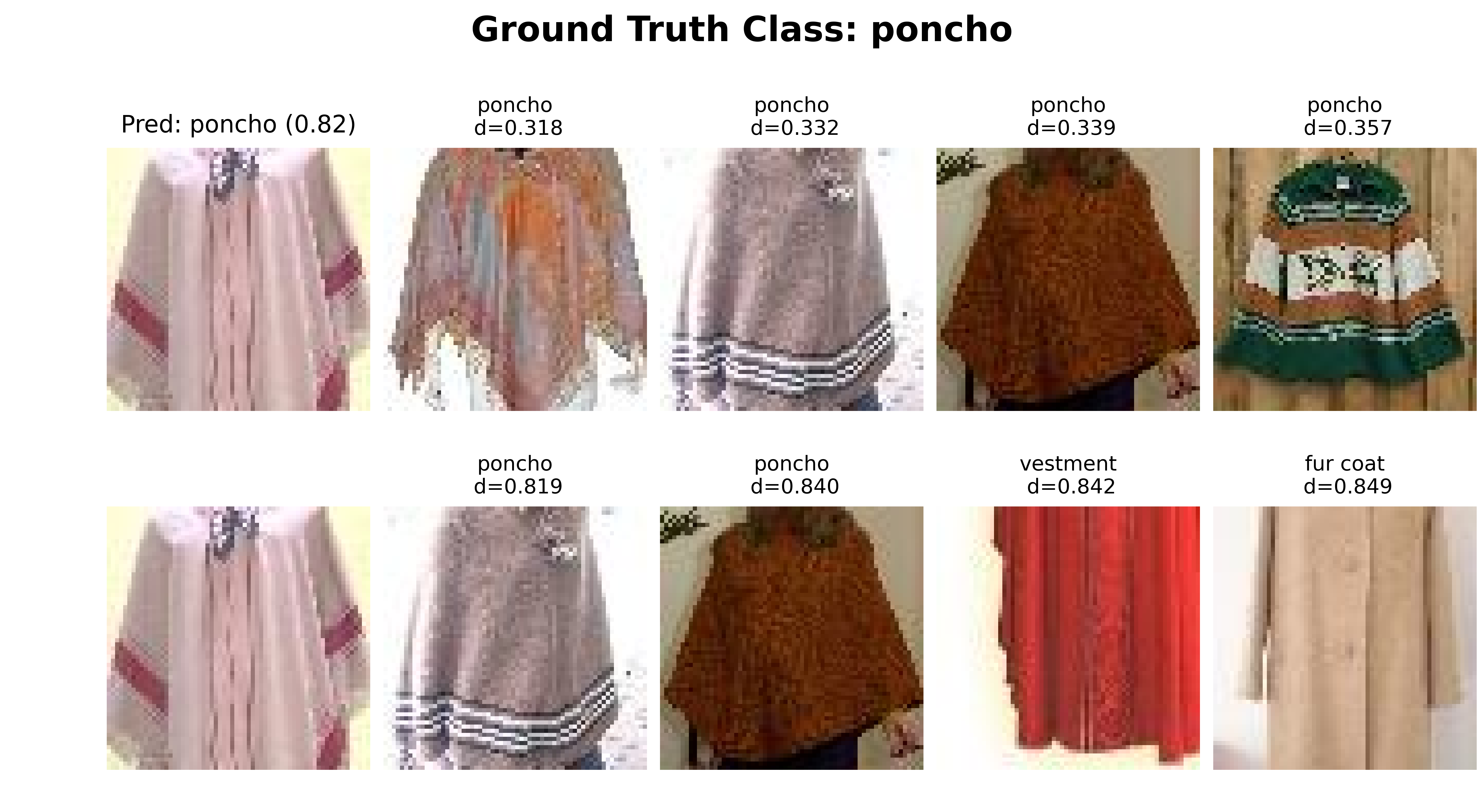}
\caption{}
\label{fig:ret_poncho}
\end{subfigure}

\vspace{3mm}

\begin{subfigure}[t]{0.49\textwidth}
\centering
\includegraphics[width=\textwidth]{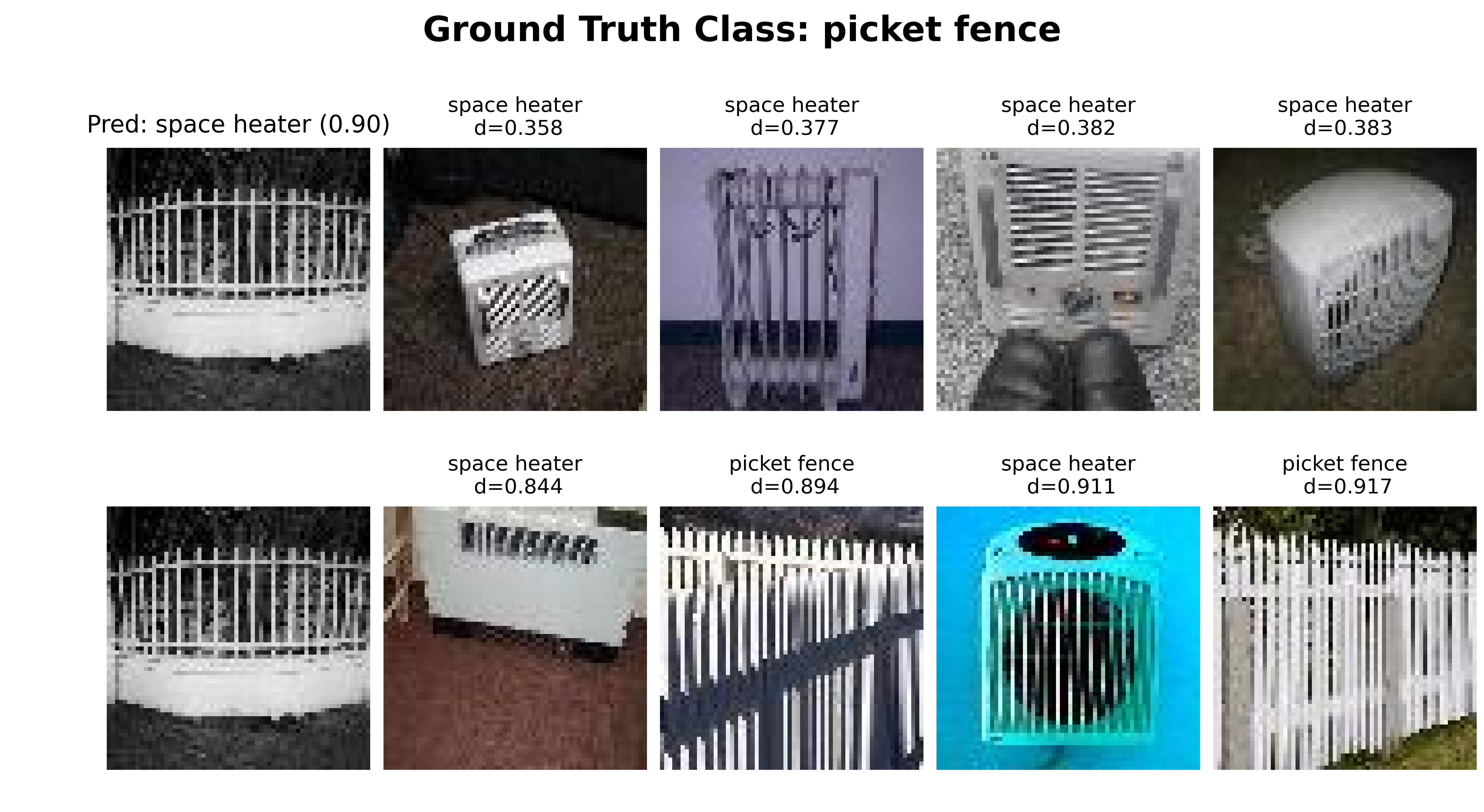}
\caption{}
\label{fig:ret_fence}
\end{subfigure}
\hfill
\begin{subfigure}[t]{0.49\textwidth}
\centering
\includegraphics[width=\textwidth]{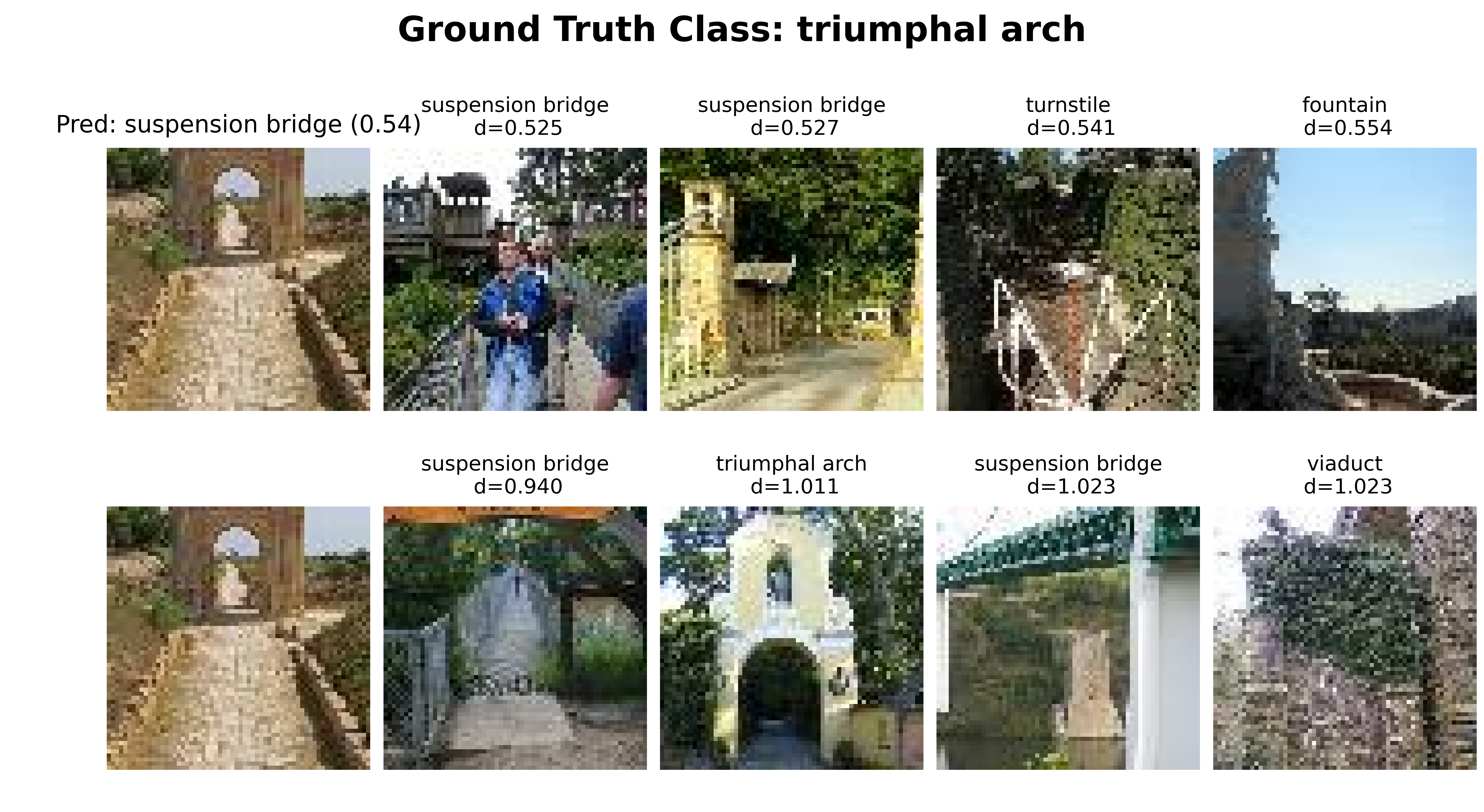}
\caption{}
\label{fig:ret_arch}
\end{subfigure}

\vspace{3mm}

\begin{subfigure}[t]{0.45\textwidth}
\centering
\includegraphics[width=\textwidth]{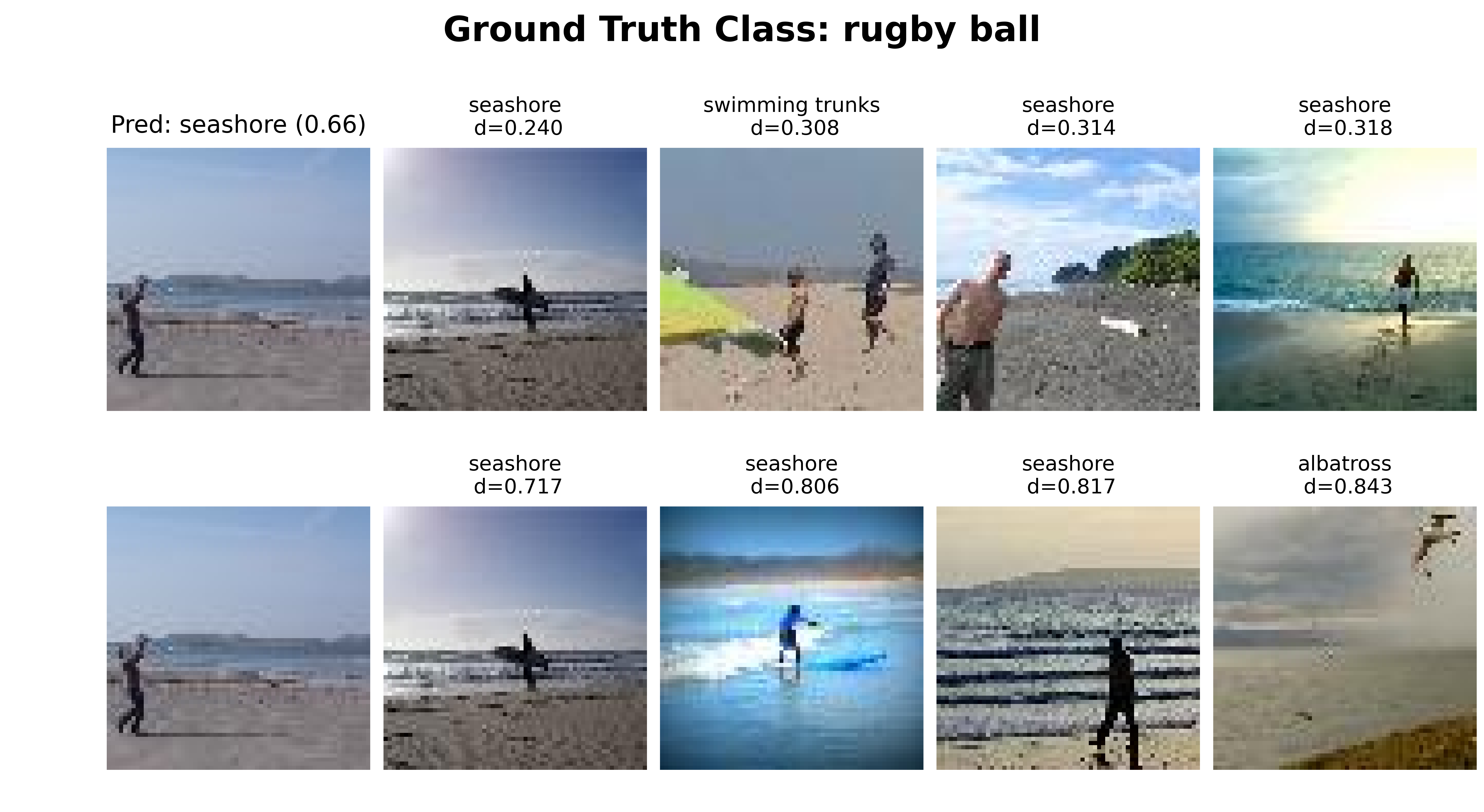}
\caption{}
\label{fig:ret_seashore}
\end{subfigure}

\caption{Similarity-based retrieval on TinyImageNet using \nobsp{}
embeddings (top row in each panel) vs.\ raw activations (bottom row).
Each panel shows a query image (left) with its predicted class and
confidence, followed by its four nearest neighbors labeled with
ground-truth class and Euclidean distance~$d$.
Panels~(a)--(b): boundary cases where \nobsp{} retrieves
class-consistent neighbors while activations return visually similar
but wrong-class images.
Panel~(c): a confident misclassification where \nobsp{} reinforces
the error while activations expose the ambiguity.
Panels~(d)--(e): outlier cases where both spaces reveal genuine
uncertainty, with~(e) suggesting a potential labeling error.}
\label{fig:retrieval_panels}
\end{figure*}

\subsection{Statistical Summary}
\label{sec:results_stats}

Region Deletion AUC comparisons are paired at the level of image and model evaluations, within each dataset and model configuration, all methods are evaluated on identical image indices. We report paired bootstrap confidence intervals (10,000 resamples) and sign flip randomization tests with Holm correction for the \nobsp{}-CAM minus GradCAM difference.

On the paper subset (Table~\ref{tab:vision_region_deletion}), the difference is not significant on any CIFAR-10 row for any of the four backbones (on MobileNetV2 the point estimate favors \nobsp{}-CAM, $+0.013$, 95\% CI $[0.0003, 0.026]$, corrected $p = 1$), nor on MobileNetV2 with TinyImageNet or EfficientNet-B0 with DermaMNIST. GradCAM is significantly ahead on the remaining six TinyImageNet and DermaMNIST rows (corrected $p \leq 0.008$), with the largest difference on RegNetY-400MF with DermaMNIST ($-0.038$, 95\% CI $[-0.051, -0.027]$).

Aggregating the full profile of six datasets and four backbones (24 configurations, 200 synchronized samples each, 4{,}784 paired image and model evaluations), the paired \nobsp{}-CAM minus GradCAM difference is $-0.011$ with 95\% CI $[-0.012, -0.009]$ and Holm corrected sign flip $p = 0.0003$. This small GradCAM advantage is statistically detectable because $n$ is large, but it is minor in absolute Region Deletion AUC units and comparable to typical row to row variation. Sparseness differences are large and consistent in direction: Eigen-CAM is the sparsest method in every configuration, while \nobsp{}-CAM is slightly less sparse than GradCAM, consistent with its richer channel decomposition.

\section{Discussion}
\label{sec:discussion}

\subsection{Why \nobsp{} Works}

A natural question is, why does \nobsp{} recover meaningful functional contributions? The main idea is that the neural network has already learned the nonlinear transformations. When the network is well trained, the penultimate representation $\vect{z} = f_{NN \to L-1}(\vect{x})$ encodes how the model processes each input feature. \nobsp{} does not discover these transformations; it reads them out from the learned representation.

This explains our empirical observation that \nobsp{} recovers true functional forms when the model is well-trained but produces less meaningful results for poorly trained models. This is not a limitation, it is the correct behavior for a method that extracts the learned function rather than imposing external structure. The quality of \nobsp{} decomposition directly reflects the quality of the learned underlying model.

The oblique projection framework is essential when learned representations exhibit overlap. Neural networks commonly represent multiple features in shared dimensions, a phenomenon known as superposition~\cite{elhage2022superposition}. Classical ANOVA decomposition assumes orthogonality and fails under superposition. Oblique projections isolate the unique contribution of each feature by projecting along the space of other features, correctly handling representational overlap.

An important practical finding is that samples with low \nobsp{} neighborhood purity under predicted class targeting reliably correspond to misclassified inputs. This positions the embedding as an unsupervised error-detection tool. At inference time, without any labels, low purity flags predictions that warrant human review or additional verification.

\subsection{Connection to Mechanistic Interpretability}

The emerging field of mechanistic interpretability seeks to reverse engineer neural network computations by identifying interpretable features and circuits~\cite{olah2020zoom,conmy2023towards,wang2022interpretability}, decomposing internal representations with sparse dictionary learning~\cite{cunningham2023sparse,bricken2023monosemanticity}, and interpreting feed-forward layers as key-value memories~\cite{geva2021transformer}. \nobsp{} offers a complementary perspective. While mechanistic interpretability typically operates at the neuron or attention head level, \nobsp{} provides input-output level interpretability through functional decomposition. The subspaces $\mathcal{V}_k$ can be viewed as encoding learned features, while the contribution functions $g_k(x_k)$ describe how these features influence predictions. Extending \nobsp{} to transformer architectures, where the residual stream provides natural subspace structure, remains an interesting direction for future work.

\subsection{Metric Considerations}

No single XAI evaluation metric is method neutral. Deletion based metrics favor perturbation stable scalar attributions and may penalize functional decompositions; completeness and reconstruction metrics favor methods designed to satisfy summation constraints, such as Integrated Gradients and SHAP; CAM metrics depend strongly on heatmap granularity and normalization, as the contrast between pixel level faithfulness correlation and Region Deletion AUC in Section~\ref{sec:results_vision} illustrates. We therefore report conventional metrics for comparability, and evaluate \nobsp{}'s central claim with function level metrics whenever ground truth is available.

This positioning is reflected plainly in the results. \nobsp{} does not dominate scalar attribution metrics. Integrated Gradients attains the best tabular faithfulness, KernelSHAP attains the strongest local pairwise agreement on several real datasets, and GradCAM is slightly ahead on Region Deletion AUC. \nobsp{}'s contribution is a different explanation object, explicit per feature contribution functions with separable positive and negative evidence, whose functional recovery can be tested directly in known function settings; there it is the most accurate of the compared methods (Table~\ref{tab:frs}).

\subsection{Limitations}

Several limitations merit consideration.

\textbf{Calibration requirement.} Unlike gradient methods that operate per-sample, \nobsp{} requires calibration on multiple samples to estimate oblique projections. While this is a one time cost that amortizes over inference, it prevents immediate application to new models without calibration data.

\textbf{Linear final layer assumption.} The theoretical framework assumes a linear mapping from penultimate layer to outputs. While this holds for standard classification and regression heads, architectures with nonlinear output layers would require adaptation.

\textbf{Baseline choice.} Setting inactive features to zero assumes normalized inputs where zero represents the population mean. For non-normalized data or domains where zero has different semantics, alternative baselines may be needed.

\textbf{Higher-order interactions.} Computing interaction terms $g_{km}(x_k, x_m)$ requires $\binom{d}{2}$ additional subspace constructions. For high-dimensional inputs, exhaustive interaction analysis becomes computationally prohibitive, though targeted analysis of specific feature pairs remains tractable. Diagnostics based on the size of the interaction residual proved difficult to interpret in preliminary experiments and are left to future work.

\textbf{Negative contributions.} The \nobsp{} framework naturally produces both positive and negative contribution maps, where negative values indicate evidence against the target class. While potentially informative, interpretation of negative contributions requires further study. The present work focuses exclusively on positive contributions for direct comparability with established methods; systematic analysis of negative evidence maps remains an avenue for future investigation.

\subsection{Retrieval as a Diagnostic Tool}

The embedding analysis reveals a nuanced relationship between
neighborhood purity and prediction reliability. Low \nobsp{} purity
reliably flags uncertain or incorrect predictions, as mixed class
neighbors indicate the query lies near a decision boundary. However,
high purity does not guarantee correctness. When the model is
confidently wrong, the contribution embedding retrieves coherent
evidence for the incorrect class
(Figure~\ref{fig:ret_fence}), potentially masking the error. This
asymmetry means that \nobsp{} purity and activation space purity
provide complementary diagnostics, the former reflects the
classifier's decision geometry while the latter reflects
feature space structure. Developing joint confidence measures that
combine both signals is a promising direction for future work.
Additionally, retrieval can serve as a dataset auditing tool.
When both embedding spaces agree that a sample's visual content is
inconsistent with its label (Figure~\ref{fig:ret_seashore}), this
flags potential annotation errors for human review.

\subsection{Future Directions}

Several extensions merit exploration. First, the \nobsp{} embedding space enables similarity based retrieval that improves class purity over raw activations; leveraging this for active learning, prototype selection, or dataset curation could yield practical benefits, particularly when combined with activation space retrieval for joint confidence estimation. Second, features with consistently small contributions across their range are natural candidates for pruning; investigating the connection between \nobsp{} contributions and network compression is promising. Third, extending \nobsp{} to sequence models and transformers, where attention patterns and residual streams provide rich subspace structure, could broaden applicability to language and multimodal domains.

\section{Conclusion}
\label{sec:conclusion}

We presented \nobsp{}, a mathematically principled framework for neural network interpretability based on oblique subspace projections. The key insight is that all nonlinearity resides in the mapping to the penultimate layer, while the final layer is linear, enabling a projection based decomposition into per feature main effects with an explicit residual capturing interactions.

Our contributions include: (1) a theoretical framework connecting oblique projections to functional ANOVA decomposition, with uniqueness guarantees under explicit assumptions; (2) an efficient algorithm reducing complexity from $O(N^2)$ to $O(d_z)$ for out-of-sample evaluation; (3) \nobsp{}-CAM, extending the framework to convolutional networks with class activation maps grounded in the oblique projection framework; and (4) preliminary evidence that \nobsp{} contribution vectors form a useful retrieval embedding, improving neighborhood purity from 0.65 to 0.71 over raw activations and halving mean neighbor distances, without requiring ground truth labels.

Experiments across tabular and vision benchmarks show that \nobsp{} achieves faithfulness comparable to established attribution methods while providing a complementary form of interpretability unavailable from scalar attributions, explicit per feature contribution functions. Unlike methods designed for either local or global interpretation, \nobsp{} provides unified analysis, the same decomposition yields both instance specific explanations and dataset wide functional relationships.

As neural networks are deployed in high risk domains requiring transparency and accountability, mathematically grounded interpretability methods become essential. \nobsp{} offers a mathematical framework to understanding how learned representations transform inputs into predictions.


\bibliographystyle{IEEEtran}

\begin{IEEEbiography}[{\includegraphics[width=1in,height=1.25in,clip,keepaspectratio]{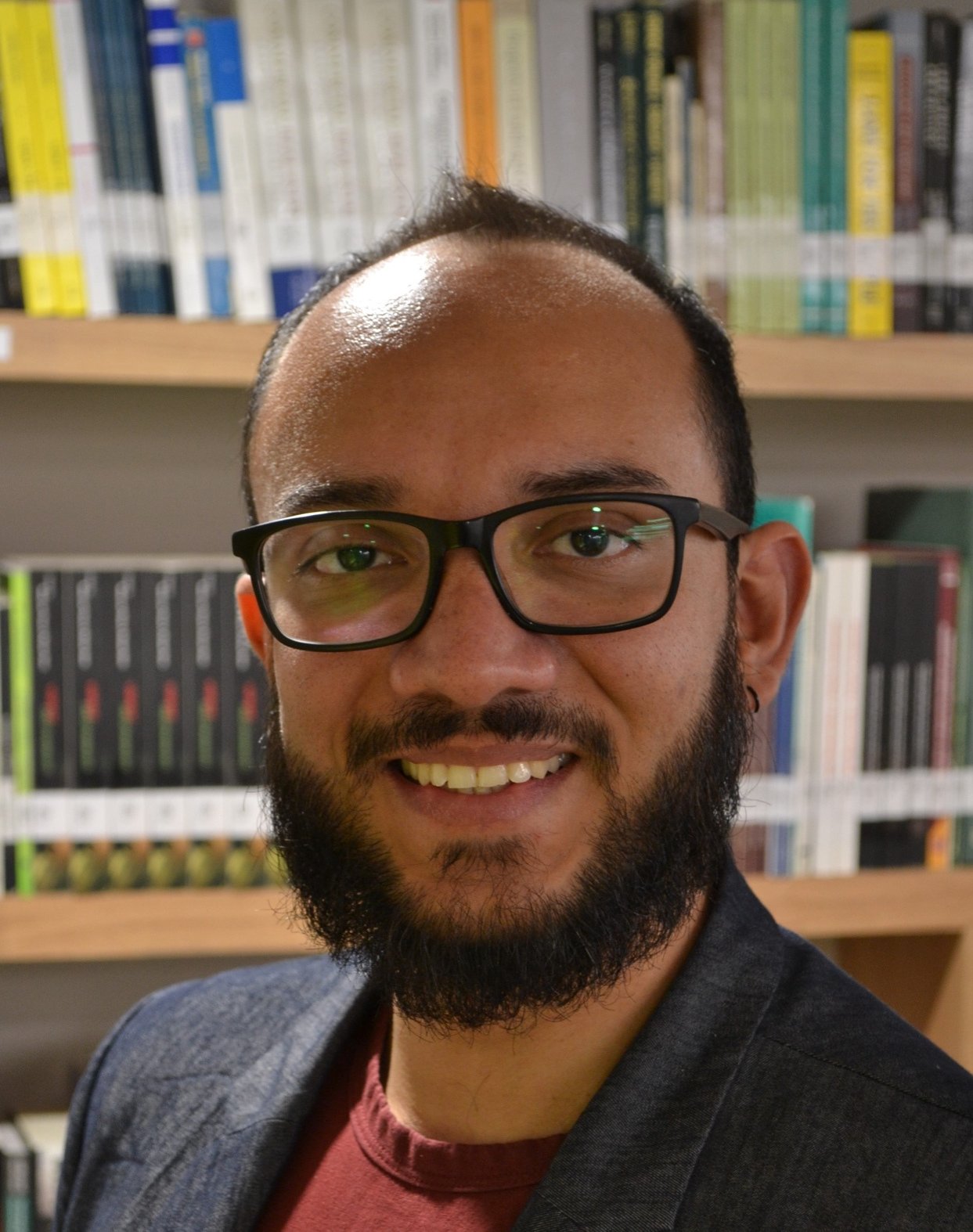}}]{Alexander Caicedo}
Alexander Caicedo received the B.Eng. degree (Hons.) in Electronics Engineering and the M.Sc. degree in Industrial Control Engineering from the University of Ibagué, Colombia. He earned his Ph.D. degree in Engineering from KU Leuven, Belgium. He is currently an Associate Professor with the Department of Electronic Engineering at Pontificia Universidad Javeriana, Bogotá. His research interests lie at the intersection of biomedical signal processing, machine learning, and advanced data analytics. His work is primarily focused on the development of interpretable and explainable artificial intelligence models designed to bridge the gap between complex algorithmic outputs and transparent decision-making in practical settings. Additionally, he is dedicated to the theoretical advancement of machine learning architectures, specifically aiming to optimize training efficiency and reduce computational resource requirements to promote the democratization of high-performance AI. Through his research, he seeks to integrate sophisticated signal processing techniques with robust learning frameworks to enhance diagnostic and prognostic capabilities across diverse application domains.
\end{IEEEbiography}

\begin{IEEEbiography}[{\includegraphics[width=1in,height=1.25in,clip,keepaspectratio]{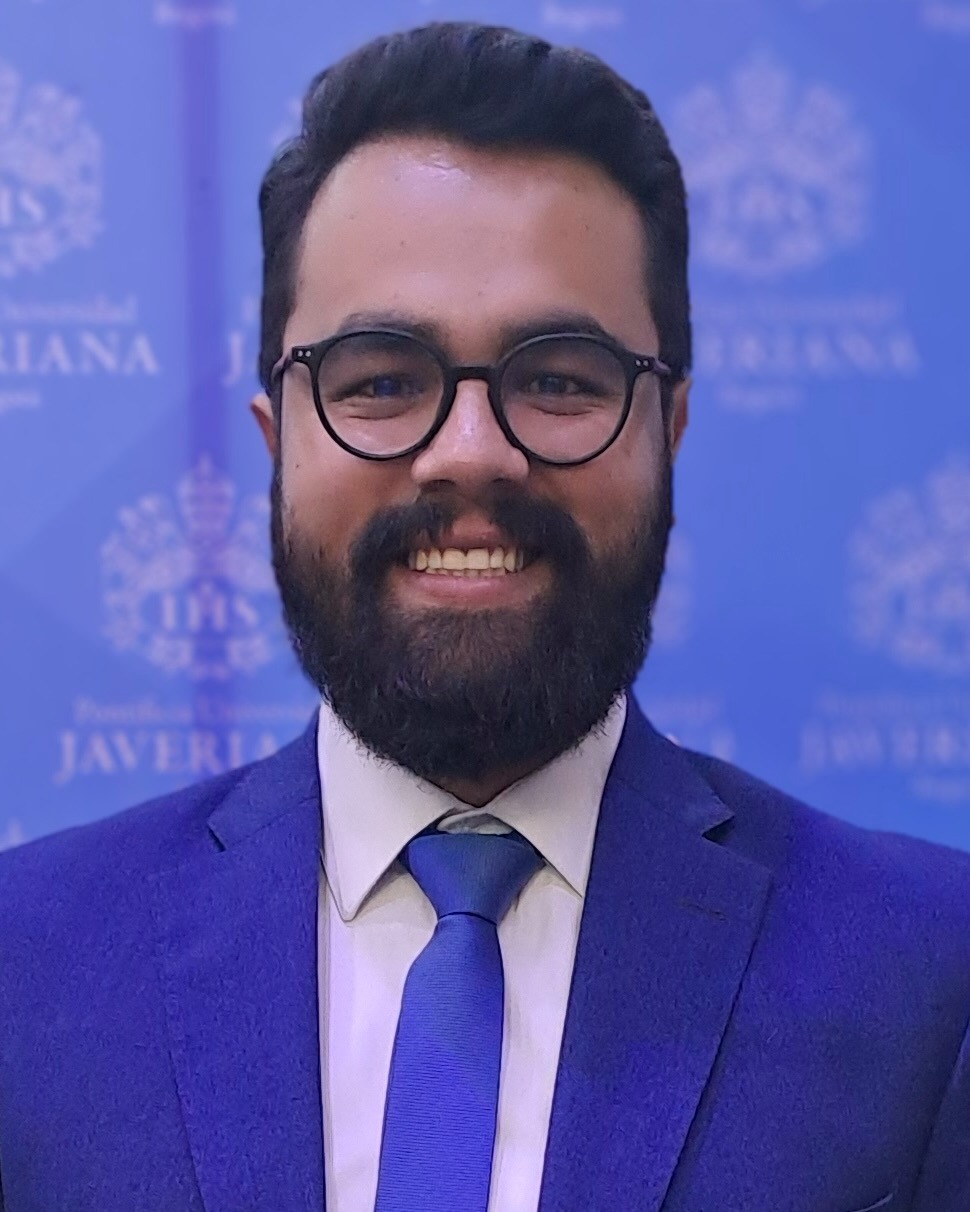}}]{V\'ictor De La Hoz}
received the B.S.\ degree in mathematics from Universidad Nacional de Colombia, Bogot\'a, Colombia, in 2021, and the M.S.\ degree (cum laude) in artificial intelligence from Pontificia Universidad Javeriana, Bogot\'a, Colombia, in 2024.
He is currently a part-time lecturer at Pontificia Universidad Javeriana, where he teaches artificial intelligence fundamentals and mathematics for AI. He also works as a Data Scientist at MercadoLibre, Colombia, focusing on large language model training, evaluation, and deployment. His research interests include continued pre-training 

of large language models, model merging, and interpretability of deep neural networks.
\end{IEEEbiography}

\begin{IEEEbiography}[{\includegraphics[width=1in,height=1.25in,clip,keepaspectratio]{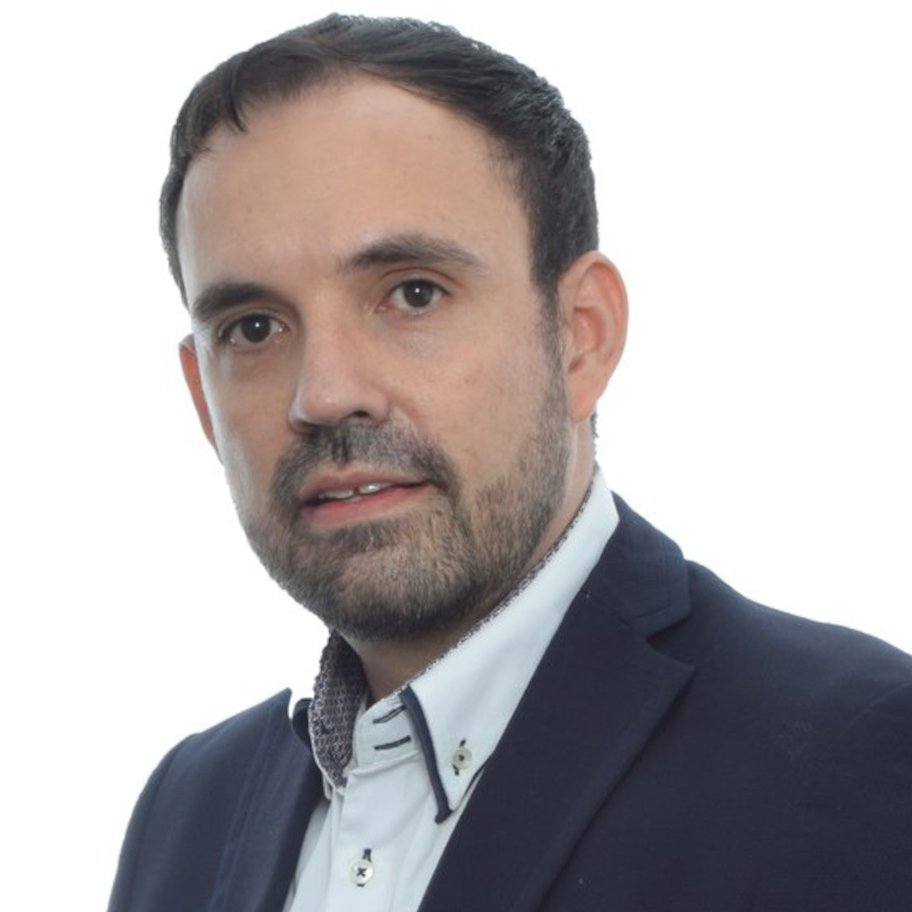}}]{Santiago Alf\'erez}
  holds B.S. degrees in Physics and in Electronics Engineering and an
  M.Sc. in Electronics Engineering from the Universidad Industrial de
  Santander, Colombia, and a Ph.D. in Biomedical Engineering from the
  Universitat Polit\`ecnica de Catalunya (UPC), Barcelona, Spain, where
  he is an Assistant Professor with the Department of Mathematics at the
  East Barcelona School of Engineering (EEBE). His research combines
  computer vision and deep learning for biomedical applications,
  particularly digital pathology and medical imaging. He develops
  explainable AI models for the automatic recognition of cellular
  abnormalities, together with generative modeling and evaluation
  methodologies that make deep learning systems clinically meaningful
  and trustworthy. He is Principal Investigator of a national project on
  explainable deep learning in medical image analysis, and will head the
  UPC team in the recently awarded Marie Sk\l{}odowska-Curie Doctoral
  Network BlueOcean on AI-assisted monitoring of marine ecosystems.
\end{IEEEbiography}


\end{document}


\title{Supplementary Material for\\ NObSP: Functional Decomposition of Neural Networks via Oblique Subspace Projections}

\author{Alexander~Caicedo, V\'ictor~De~La~Hoz, and~Santiago~Alf\'erez}

\markboth{Supplementary Material}{Caicedo \MakeLowercase{\textit{et al.}}: NObSP Supplementary Material}

\maketitle

Numbered references to theorems, algorithms, sections, equations, and citations of the form (3.2), (4.1), and so on refer to the main text.



\section*{Supplementary Material A: Full Proofs}

This supplementary provides complete proofs for the theorems in Section~3. Theorem numbers follow the main text: Theorem~3.1 (Generalized ANOVA), Theorem~3.2 (Prediction Decomposition), Theorem~3.3 (Uniqueness), and Theorem~4.1 (Partial Regression Equivalence).

\subsection*{A.1 Preliminaries and Notation}

Let $\mat{A}_k \in \R^{N \times r_k}$ span subspace $\mathcal{V}_k$, and $\mat{A}_{(k)} \in \R^{N \times r_{(k)}}$ span $\mathcal{V}_{(k)}$, with $\mathcal{V}_k \cap \mathcal{V}_{(k)} = \{\vect{0}\}$.

The orthogonal complement projector is:
\begin{equation}
\mat{Q}_{(k)} = \mat{I} - \mat{A}_{(k)}(\mat{A}_{(k)}^T\mat{A}_{(k)})^{\dagger}\mat{A}_{(k)}^T.
\tag{A.1}
\end{equation}

The oblique projector onto $\mathcal{V}_k$ along $\mathcal{V}_{(k)}$ is:
\begin{equation}
\mat{P}_{k/(k)} = \mat{A}_k (\mat{A}_k^T \mat{Q}_{(k)} \mat{A}_k)^{\dagger} \mat{A}_k^T \mat{Q}_{(k)}.
\tag{A.2}
\end{equation}

\subsection*{A.2 Proof of Theorem~3.1 (Generalized ANOVA)}

\textbf{Theorem~3.1.} \textit{The oblique projection decomposition}
\begin{equation}
\vect{y} = \sum_{k=1}^{d} \mat{P}_{k/(k)}\vect{y} + \vect{r}
\tag{A.3}
\end{equation}
\textit{provides an ANOVA-like decomposition that (i) reduces to classical ANOVA when subspaces are orthogonal, (ii) handles non-orthogonal subspaces, and (iii) provides unique main effects.}

\begin{proof}
We prove each property and characterize $g_k(x_k)$, $g_{k,m}(x_k, x_m)$, and higher-order terms in $\vect{r}$.

\textbf{Property (i): Orthogonal case.}

When $\mathcal{V}_k \perp \mathcal{V}_{(k)}$, any $\vect{v} \in \mathcal{V}_k$ satisfies $\mat{Q}_{(k)} \vect{v} = \vect{v}$. Substituting into (A.2):
\begin{align}
\mat{P}_{k/(k)} &= \mat{A}_k (\mat{A}_k^T \mat{A}_k)^{\dagger} \mat{A}_k^T \notag \\
&= \mat{P}_k^{\perp},
\tag{A.4}
\end{align}
the orthogonal projector used in classical ANOVA.

\textbf{Property (ii): Non-orthogonal case.}

The matrix $\mat{Q}_{(k)}$ first removes the component in $\mathcal{V}_{(k)}$:
\begin{equation}
\vect{y}' = \mat{Q}_{(k)} \vect{y} = \vect{y} - \mat{P}_{(k)} \vect{y}.
\tag{A.5}
\end{equation}

Then $\mat{A}_k (\mat{A}_k^T \mat{Q}_{(k)} \mat{A}_k)^{\dagger} \mat{A}_k^T$ projects onto $\mathcal{V}_k$. For any $\vect{v} \in \mathcal{V}_{(k)}$:
\begin{equation}
\mat{P}_{k/(k)} \vect{v} = \vect{0},
\tag{A.6}
\end{equation}
confirming vectors in the reference subspace are annihilated.

\textbf{Property (iii): Uniqueness.}

The main effect for feature $k$ is:
\begin{equation}
\hat{\vect{y}}_k = \mat{P}_{k/(k)} \vect{y},
\tag{A.7}
\end{equation}
containing $[g_k(x_k^{(1)}), \ldots, g_k(x_k^{(N)})]^T$. Uniqueness follows from uniqueness of the oblique projector given the direct sum decomposition.

\textbf{Functional forms and interactions.}

The full ANOVA decomposition is:
\begin{align}
f(\vect{x}) &= f_0 + \sum_{k=1}^{d} g_k(x_k) \notag \\
&\quad + \sum_{k<m} g_{k,m}(x_k, x_m) + \ldots + G(\vect{x}),
\tag{A.8}
\end{align}
where $g_k$ are main effects, $g_{k,m}$ are second-order interactions, and $G$ captures higher orders.

The oblique projection extracts main effects:
\begin{equation}
\mat{P}_{k/(k)}\vect{y} = \vect{g}_k.
\tag{A.9}
\end{equation}

The residual contains all interaction terms:
\begin{align}
\vect{r} &= \vect{y} - \sum_{k=1}^{d} \mat{P}_{k/(k)}\vect{y} \notag \\
&= \vect{f}_0 + \sum_{k<m} \vect{g}_{k,m} + \text{(higher order)}.
\tag{A.10}
\end{align}

\textbf{Computing second-order interactions.}

For features $k$ and $m$, construct the joint input:
\begin{equation}
\mat{X}_{k,m} = [\vect{0}, \ldots, \vect{x}_k, \ldots, \vect{x}_m, \ldots, \vect{0}],
\tag{A.11}
\end{equation}
with only features $k$ and $m$ active. The representations are:
\begin{equation}
\mat{Z}_{k,m} = f_{NN \to L-1}(\mat{X}_{k,m}).
\tag{A.12}
\end{equation}

The joint projection yields:
\begin{equation}
\mat{P}_{(k,m)/(k,m)}\vect{y} = \vect{g}_k + \vect{g}_m + \vect{g}_{k,m}.
\tag{A.13}
\end{equation}

The pure interaction is obtained by subtraction:
\begin{align}
\vect{g}_{k,m} &= \mat{P}_{(k,m)/(k,m)}\vect{y} \notag \\
&\quad - \mat{P}_{k/(k)}\vect{y} - \mat{P}_{m/(m)}\vect{y}.
\tag{A.14}
\end{align}

\textbf{Higher-order interactions.}

For $r$-th order interactions among $\{k_1, \ldots, k_r\}$:
\begin{align}
\vect{g}_{k_1,\ldots,k_r} &= \mat{P}_{(k_1,\ldots,k_r)}\vect{y} \notag \\
&\quad - \sum_{\text{lower-order terms}}.
\tag{A.15}
\end{align}

The limitation is combinatorial ($\binom{d}{r}$ terms), not mathematical.

\textbf{Complete decomposition.}

Combining the above:
\begin{align}
\vect{y} &= \underbrace{\sum_{k=1}^{d} \vect{g}_k}_{\text{main effects}} + \underbrace{\sum_{k<m} \vect{g}_{k,m}}_{\text{2nd-order}} \notag \\
&\quad + \underbrace{\vect{G} + \vect{f}_0}_{\text{higher-order in } \vect{r}}.
\tag{A.16}
\end{align}

When only main effects are computed:
\begin{equation}
\vect{r} = \vect{f}_0 + \sum_{k<m} \vect{g}_{k,m} + \ldots
\tag{A.17}
\end{equation}
\end{proof}

\subsection*{A.3 Proof of Theorem~3.2 (Prediction Decomposition)}

\textbf{Theorem~3.2.} \textit{For a network with linear final layer:}
\begin{equation}
\hat{y}_j = \sum_{k=1}^{d} g_{k,j}(x_k) + r_j + b_j,
\tag{A.18}
\end{equation}
\textit{where $g_{k,j}(x_k) = \tilde{\vect{z}}_k^{\top} \bm{\Omega}_k \vect{w}_j$ is the contribution of feature $k$ to output $j$, and $\bm{\Omega}_k = \bigl(\tilde{\mat{Z}}_k^T \mat{Q}_{(k)} \tilde{\mat{Z}}_k\bigr)^{\dagger} \tilde{\mat{Z}}_k^T \mat{Q}_{(k)} \tilde{\mat{Z}} \in \R^{d_z \times d_z}$ is the transfer matrix of Theorem~3.2.}

\begin{proof}
The final layer is linear:
\begin{equation}
\hat{y}_j = \vect{w}_j^T \vect{z} + b_j.
\tag{A.19}
\end{equation}

For $N$ samples with $\mat{Z} \in \R^{N \times d_z}$:
\begin{equation}
\hat{\vect{y}}_j = \mat{Z}\vect{w}_j + b_j\vect{1}.
\tag{A.20}
\end{equation}

After centering with $\mat{M} = \mat{I} - \frac{1}{N}\vect{1}\vect{1}^T$:
\begin{equation}
\tilde{\vect{y}}_j = \mat{M}\mat{Z}\vect{w}_j.
\tag{A.21}
\end{equation}

Applying Theorem~3.1:
\begin{equation}
\tilde{\vect{y}}_j = \sum_{k=1}^{d} \mat{P}_{k/(k)} \tilde{\vect{y}}_j + \tilde{\vect{r}}_j.
\tag{A.22}
\end{equation}

The contribution of feature $k$ to output $j$:
\begin{equation}
\vect{g}_{k,j} = \mat{P}_{k/(k)} \tilde{\vect{y}}_j.
\tag{A.23}
\end{equation}

By the partial regression equivalence (Theorem~4.1), the contribution vector factorizes through the isolated representations, $\vect{g}_{k,j} = \tilde{\mat{Z}}_k \bm{\Omega}_k \vect{w}_j$. Its $i$-th entry gives the per sample contribution:
\begin{equation}
g_{k,j}(x_k^{(i)}) = [\vect{g}_{k,j}]_i = \tilde{\vect{z}}_k^{(i)\top} \bm{\Omega}_k \vect{w}_j.
\tag{A.24}
\end{equation}
The dependence on sample $i$ enters only through the isolated representation $\tilde{\vect{z}}_k^{(i)}$, obtained by a forward pass on the isolated input. Note that the projector $\mat{P}_{k/(k)} \in \R^{N \times N}$ acts on vectors in sample space and is never applied to a single latent vector $\vect{z} \in \R^{d_z}$.

The residual contains interactions:
\begin{align}
r_j &= \sum_{k<m} g_{k,m,j}(x_k, x_m) \notag \\
&\quad + \text{(higher-order)} + (b_j - \bar{y}_j).
\tag{A.25}
\end{align}
\end{proof}

\subsection*{A.4 Proof of Theorem~3.3 (Uniqueness)}

\textbf{Theorem~3.3.} \textit{Given trained network $f_{NN}$, data $\mat{X}$, baseline choice, and $\mathcal{V}_k \cap \mathcal{V}_{(k)} = \{\vect{0}\}$ for all $k$, the \nobsp{} decomposition is unique.}

\begin{proof}
\textbf{Step 1: Unique subspaces.}

The matrices $\mat{Z}_k = f_{NN \to L-1}(\mat{X}_k)$ are uniquely determined by the fixed network and data. Thus $\mathcal{V}_k = \text{col}(\tilde{\mat{Z}}_k) \subseteq \R^N$ is unique.

\textbf{Step 2: Unique projectors.}

Given $\mathcal{V}_k$ and $\mathcal{V}_{(k)}$ with $\mathcal{V}_k \cap \mathcal{V}_{(k)} = \{\vect{0}\}$, the oblique projector $\mat{P}_{k/(k)}$ is unique. When subspaces intersect, the pseudoinverse provides the unique minimum-norm solution.

\textbf{Step 3: Unique contributions.}

With unique $\mat{P}_{k/(k)}$ and fixed weights $\vect{w}_j$, the coefficient vector $\bm{\beta}_{k,j} = (\tilde{\mat{Z}}_k^T \mat{Q}_{(k)} \tilde{\mat{Z}}_k)^{\dagger} \tilde{\mat{Z}}_k^T \mat{Q}_{(k)} \tilde{\vect{y}}_j$ is uniquely determined, and thus the per-sample contribution
\begin{equation}
g_{k,j}(x_k^{(i)}) = \tilde{\vect{z}}_k^{(i)\top} \bm{\beta}_{k,j}
\tag{A.26}
\end{equation}
is uniquely determined.

\textbf{Remark.} Different methods (SHAP, LIME, \nobsp{}) yield different attributions because they define ``contribution'' differently, a methodological choice, not non-uniqueness within \nobsp{}.
\end{proof}

\subsection*{A.5 Proof of Theorem~4.1 (Equivalence of Partial Regression Formulation)}

\textbf{Theorem~4.1.} \textit{The contribution $\tilde{\mat{Z}}_k \bm{\beta}_{k,j}$ computed via Algorithm~1 coincides with the oblique projection of $\hat{\vect{y}}_j$ onto $\mathcal{V}_k$ along $\mathcal{V}_{(k)}$: $\tilde{\mat{Z}}_k \bm{\beta}_{k,j} = \mat{P}_{k/(k)} \hat{\vect{y}}_j$.}

\begin{proof}
Consider Algorithm~1 in the limit $\lambda \to 0$, where the regularized inverses converge to Moore--Penrose pseudoinverses.

\textbf{Step 1: The residual steps implement $\mat{Q}_{(k)}$.}
With $\mat{A} = (\tilde{\mat{Z}}_{(k)}^T \tilde{\mat{Z}}_{(k)})^{\dagger} \tilde{\mat{Z}}_{(k)}^T$, Steps 2 and 3 compute
\begin{align}
\hat{\vect{y}}_{res} &= \bigl(\mat{I} - \tilde{\mat{Z}}_{(k)} (\tilde{\mat{Z}}_{(k)}^T \tilde{\mat{Z}}_{(k)})^{\dagger} \tilde{\mat{Z}}_{(k)}^T\bigr) \hat{\vect{y}}_j = \mat{Q}_{(k)} \hat{\vect{y}}_j,
\tag{A.27} \\
\tilde{\mat{Z}}_{res} &= \mat{Q}_{(k)} \tilde{\mat{Z}}_k,
\tag{A.28}
\end{align}
where $\mat{Q}_{(k)}$ is the orthogonal complement projector of (A.1) with $\mat{A}_{(k)} = \tilde{\mat{Z}}_{(k)}$.

\textbf{Step 2: Coefficient formula.}
Step 4 solves the least squares problem on the residuals:
\begin{align}
\bm{\beta}_{k,j}
&= (\tilde{\mat{Z}}_{res}^T \tilde{\mat{Z}}_{res})^{\dagger} \tilde{\mat{Z}}_{res}^T \hat{\vect{y}}_{res} \notag \\
&= \bigl(\tilde{\mat{Z}}_k^T \mat{Q}_{(k)}^T \mat{Q}_{(k)} \tilde{\mat{Z}}_k\bigr)^{\dagger} \tilde{\mat{Z}}_k^T \mat{Q}_{(k)}^T \mat{Q}_{(k)} \hat{\vect{y}}_j.
\tag{A.29}
\end{align}
Since $\mat{Q}_{(k)}$ is an orthogonal projector, it is symmetric ($\mat{Q}_{(k)}^T = \mat{Q}_{(k)}$) and idempotent ($\mat{Q}_{(k)}^2 = \mat{Q}_{(k)}$), hence
\begin{equation}
\bm{\beta}_{k,j} = \bigl(\tilde{\mat{Z}}_k^T \mat{Q}_{(k)} \tilde{\mat{Z}}_k\bigr)^{\dagger} \tilde{\mat{Z}}_k^T \mat{Q}_{(k)} \hat{\vect{y}}_j.
\tag{A.30}
\end{equation}

\textbf{Step 3: Identification with the oblique projection.}
Multiplying (A.30) by $\tilde{\mat{Z}}_k$ and comparing with the projector definition (A.2) evaluated at $\mat{A}_k = \tilde{\mat{Z}}_k$:
\begin{equation}
\tilde{\mat{Z}}_k \bm{\beta}_{k,j}
= \tilde{\mat{Z}}_k \bigl(\tilde{\mat{Z}}_k^T \mat{Q}_{(k)} \tilde{\mat{Z}}_k\bigr)^{\dagger} \tilde{\mat{Z}}_k^T \mat{Q}_{(k)} \hat{\vect{y}}_j
= \mat{P}_{k/(k)} \hat{\vect{y}}_j.
\tag{A.31}
\end{equation}

\textbf{Step 4: Transfer matrix form.}
Applying (A.30) to the centered predictions $\tilde{\vect{y}}_j = \tilde{\mat{Z}} \vect{w}_j$ gives $\bm{\beta}_{k,j} = \bm{\Omega}_k \vect{w}_j$ with $\bm{\Omega}_k = (\tilde{\mat{Z}}_k^T \mat{Q}_{(k)} \tilde{\mat{Z}}_k)^{\dagger} \tilde{\mat{Z}}_k^T \mat{Q}_{(k)} \tilde{\mat{Z}}$, the transfer matrix of Theorem~3.2. We emphasize that $\bm{\Omega}_k$ maps final layer weights to contribution coefficients; it is in general neither symmetric nor idempotent, and it is not a projector on the latent space.

For $\lambda > 0$, Algorithm~1 returns a ridge regularized version of (A.30); the equivalence holds exactly as $\lambda \to 0$.
\end{proof}


\section*{Supplementary B: Algorithm Details}

This supplementary section provides complete algorithmic details for the \nobsp{} framework, including the direct oblique projection method and the detailed subspace construction procedure. These materials complement the efficient partial regression formulation presented in the main text.

\subsection*{B.1 Detailed Subspace Construction}

The construction of feature subspaces proceeds through five steps. We assume inputs have been normalized to zero mean and unit variance; for non-normalized data, replace zeros with feature means throughout.

\textbf{Step 1: Input Normalization.} If not already performed during training, normalize inputs to zero mean and unit variance:
\begin{equation}
x_k^{(i)} \leftarrow \frac{x_k^{(i)} - \bar{x}_k}{\sigma_k},
\tag{B.1}
\end{equation}
where $\bar{x}_k = \frac{1}{N}\sum_{i=1}^{N} x_k^{(i)}$ and $\sigma_k^2 = \frac{1}{N}\sum_{i=1}^{N}(x_k^{(i)} - \bar{x}_k)^2$. This ensures consistency between the zero baseline and the training distribution.

\textbf{Step 2: Isolated Input Construction.} For each feature $k \in \{1, \ldots, d\}$, construct the isolated input matrix where only feature $k$ is active:
\begin{equation}
\mat{X}_k = [\vect{0}, \ldots, \vect{0}, \vect{x}_k, \vect{0}, \ldots, \vect{0}] \in \R^{N \times d},
\tag{B.2}
\end{equation}
where only the $k$-th column contains the observed feature values $\vect{x}_k = [x_k^{(1)}, \ldots, x_k^{(N)}]^T$ and all remaining columns are zero. Explicitly, the $(i, j)$-th entry is:
\begin{equation}
[\mat{X}_k]_{ij} = \begin{cases} x_k^{(i)} & \text{if } j = k \\ 0 & \text{otherwise} \end{cases}.
\tag{B.3}
\end{equation}

\textbf{Step 3: Reference Input Construction.} Define the reference input matrix containing all features except $k$:
\begin{equation}
\mat{X}_{(k)} = [\vect{x}_1, \ldots, \vect{x}_{k-1}, \vect{0}, \vect{x}_{k+1}, \ldots, \vect{x}_d] \in \R^{N \times d}.
\tag{B.4}
\end{equation}
Note that $\mat{X}_k + \mat{X}_{(k)} = \mat{X}$, the original data matrix.

\textbf{Step 4: Representation Extraction.} Compute the penultimate layer representations by passing the isolated and reference inputs through the trained network:
\begin{align}
\mat{Z}_k &= f_{NN \to L-1}(\mat{X}_k) \in \R^{N \times d_z}, \tag{B.5}\\
\mat{Z}_{(k)} &= f_{NN \to L-1}(\mat{X}_{(k)}) \in \R^{N \times d_z}, \tag{B.6}
\end{align}
where $f_{NN \to L-1}: \R^d \to \R^{d_z}$ denotes the network mapping from input to the penultimate layer of dimension $d_z$. Each row of $\mat{Z}_k$ contains the $d_z$-dimensional representation of the corresponding isolated input.

\textbf{Step 5: Mean Centering.} Apply mean-centering to the representation matrices:
\begin{equation}
\tilde{\mat{Z}}_k = \mat{M}\mat{Z}_k, \quad \tilde{\mat{Z}}_{(k)} = \mat{M}\mat{Z}_{(k)},
\tag{B.7}
\end{equation}
where the centering matrix is
\begin{equation}
\mat{M} = \mat{I}_N - \frac{1}{N}\vect{1}_N\vect{1}_N^T \in \R^{N \times N},
\tag{B.8}
\end{equation}
with $\mat{I}_N$ the identity matrix and $\vect{1}_N$ a vector of ones. Centering ensures that the subspaces pass through the origin, required for the oblique projection to have the correct interpretation as deviation from the mean.

The column spaces of $\tilde{\mat{Z}}_k$ and $\tilde{\mat{Z}}_{(k)}$ define the feature subspace $\mathcal{V}_k = \text{col}(\tilde{\mat{Z}}_k) \subseteq \R^N$ and reference subspace $\mathcal{V}_{(k)} = \text{col}(\tilde{\mat{Z}}_{(k)}) \subseteq \R^N$, respectively.

\subsection*{B.2 Direct Oblique Projection Method}

Given the centered subspace bases $\tilde{\mat{Z}}_k$ and $\tilde{\mat{Z}}_{(k)}$ constructed above, the contribution of feature $k$ to output $j$ can be computed via direct oblique projection. Throughout, $\dagger$ denotes the Moore-Penrose pseudoinverse.

\textbf{Orthogonal Complement Projector.} First, compute the orthogonal projector onto the complement of the reference subspace $\mathcal{V}_{(k)}$:
\begin{equation}
\mat{Q}_{(k)} = \mat{I}_N - \tilde{\mat{Z}}_{(k)} \left( \tilde{\mat{Z}}_{(k)}^T \tilde{\mat{Z}}_{(k)} \right)^{\dagger} \tilde{\mat{Z}}_{(k)}^T.
\tag{B.9}
\end{equation}
The matrix $\mat{Q}_{(k)}$ projects any vector onto the orthogonal complement of $\mathcal{V}_{(k)}$, effectively removing components explainable by features other than $k$.

\textbf{Oblique Projector.} The oblique projector onto $\mathcal{V}_k$ along $\mathcal{V}_{(k)}$ is:
\begin{equation}
\mat{P}_{k/(k)} = \tilde{\mat{Z}}_k \mat{G}_k^{\dagger} \tilde{\mat{Z}}_k^T \mat{Q}_{(k)},
\tag{B.10}
\end{equation}
where $\mat{G}_k = \tilde{\mat{Z}}_k^T \mat{Q}_{(k)} \tilde{\mat{Z}}_k$ is the Gram matrix of the target basis projected onto the complement of the reference subspace. This projector satisfies three key properties:
\begin{enumerate}
\item \textit{Range}: $\mat{P}_{k/(k)} \vect{v} \in \mathcal{V}_k$ for all $\vect{v} \in \R^N$
\item \textit{Identity on target}: $\mat{P}_{k/(k)} \vect{v} = \vect{v}$ for all $\vect{v} \in \mathcal{V}_k$
\item \textit{Null on reference}: $\mat{P}_{k/(k)} \vect{v} = \vect{0}$ for all $\vect{v} \in \mathcal{V}_{(k)}$
\end{enumerate}

\textbf{Contribution Computation.} The contribution vector for feature $k$ to output $j$ is:
\begin{equation}
\hat{\vect{y}}_{k,j} = \mat{P}_{k/(k)} \hat{\vect{y}}_j,
\tag{B.11}
\end{equation}
where $\hat{\vect{y}}_j = [\hat{y}_j^{(1)}, \ldots, \hat{y}_j^{(N)}]^T \in \R^N$ contains the predictions for output $j$ across all $N$ samples. Each entry $[\hat{\vect{y}}_{k,j}]_i$ represents the contribution of feature $k$ to the prediction for sample $i$.

Algorithm~\ref{alg:direct_supp} summarizes the complete direct projection procedure.

\begin{algorithm}[H]
\caption{Direct Oblique Projection for \nobsp{}}
\label{alg:direct_supp}
\begin{algorithmic}[1]
\REQUIRE Trained network $f_{NN}$, dataset $\mat{X} \in \R^{N \times d}$, predictions $\hat{\vect{y}}_j \in \R^N$
\ENSURE Contribution vectors $\{\hat{\vect{y}}_{k,j}\}_{k=1}^{d}$
\STATE Normalize $\mat{X}$ to zero mean, unit variance
\FOR{$k = 1$ to $d$}
    \STATE Construct $\mat{X}_k$ (isolated) and $\mat{X}_{(k)}$ (reference)
    \STATE $\mat{Z}_k \gets f_{NN \to L-1}(\mat{X}_k)$
    \STATE $\mat{Z}_{(k)} \gets f_{NN \to L-1}(\mat{X}_{(k)})$
    \STATE $\tilde{\mat{Z}}_k \gets \mat{M}\mat{Z}_k$; \quad $\tilde{\mat{Z}}_{(k)} \gets \mat{M}\mat{Z}_{(k)}$
    \STATE $\mat{Q}_{(k)} \gets \mat{I} - \tilde{\mat{Z}}_{(k)} ( \tilde{\mat{Z}}_{(k)}^T \tilde{\mat{Z}}_{(k)} )^{\dagger} \tilde{\mat{Z}}_{(k)}^T$
    \STATE $\mat{G}_k \gets \tilde{\mat{Z}}_k^T \mat{Q}_{(k)} \tilde{\mat{Z}}_k$
    \STATE $\mat{P}_{k/(k)} \gets \tilde{\mat{Z}}_k \mat{G}_k^{\dagger} \tilde{\mat{Z}}_k^T \mat{Q}_{(k)}$
    \STATE $\hat{\vect{y}}_{k,j} \gets \mat{P}_{k/(k)} \hat{\vect{y}}_j$
\ENDFOR
\RETURN $\{\hat{\vect{y}}_{k,j}\}_{k=1}^{d}$
\end{algorithmic}
\end{algorithm}

\subsection*{B.3 Computational Complexity Analysis}

The direct method has the following costs per feature $k$:

\begin{itemize}
\item \textbf{Representation extraction}: $O(T_{NN})$ for $N$-sample forward pass
\item \textbf{Centering}: $O(N d_z)$
\item \textbf{Computing $\mat{Q}_{(k)}$}: $O(N d_z^2 + d_z^3)$
\item \textbf{Computing $\mat{P}_{k/(k)}$}: $O(N^2 d_z)$ --- \textit{bottleneck}
\item \textbf{Contribution}: $O(N^2)$ matrix-vector product
\end{itemize}

The dominant costs are $O(N^2 d_z)$ computation and $O(N^2)$ storage for $\mat{P}_{k/(k)}$. For $d$ features, total complexity is $O(d N^2 d_z)$.

This motivates the efficient partial regression formulation (Algorithm~1 of the main text), which achieves $O(N d_z^2 + d_z^3)$ computation and $O(d_z)$ storage per feature by avoiding explicit construction of $N \times N$ projection matrices.

\subsection*{B.4 Numerical Considerations}

Several practical considerations arise in implementation:

\textbf{Regularization.} When subspace bases are rank-deficient or ill-conditioned, the pseudoinverse can be unstable. Tikhonov regularization improves stability:
\begin{equation}
\left( \mat{A}^T \mat{A} \right)^{\dagger} \approx \left( \mat{A}^T \mat{A} + \lambda \mat{I} \right)^{-1}
\tag{B.12}
\end{equation}
with small $\lambda > 0$ (typically $10^{-4}$ to $10^{-6}$). The efficient partial regression method in the main text incorporates this regularization directly.

\textbf{Subspace Overlap.} When $\mathcal{V}_k \cap \mathcal{V}_{(k)} \neq \{\vect{0}\}$, the oblique projector is not uniquely defined on the intersection. The pseudoinverse provides the minimum-norm solution, projecting onto the portion of $\mathcal{V}_k$ linearly independent from $\mathcal{V}_{(k)}$.

\textbf{High-Dimensional Inputs.} For $d \gg 100$ features, constructing all isolated inputs simultaneously is memory-intensive. Batch processing is recommended: construct and process features in groups.

\textbf{Sparse Activations.} ReLU networks may produce many zeros in $\mat{Z}_k$, particularly for isolated inputs. This sparsity can be exploited computationally but may cause rank deficiency in subspace bases.


\section*{Supplementary Material C: Extended Vision Results}

\subsection*{C.1 Region Deletion AUC on the Full Profile}

Table~\ref{tab:vision_region_deletion_full} reports Region Deletion AUC for all six datasets and four backbones (24 configurations, 200 synchronized evaluation samples each). The pattern matches the main text subset: GradCAM is strongest overall (mean 0.606 over 4{,}784 image and model evaluations, against 0.597 for GradCAM++, 0.594 for \nobsp{}-CAM, and 0.572 for Eigen-CAM), with \nobsp{}-CAM close throughout and attaining the row maximum on MobileNetV2 with CIFAR-10 and EfficientNet-B0 with CIFAR-100.

\begin{table}[t]
\centering
\caption{Region Deletion AUC on the full vision profile (higher is better). Values are means over 200 synchronized image samples per row. Bold marks the numerical maximum per row, not statistical significance.}
\label{tab:vision_region_deletion_full}
\scriptsize
\setlength{\tabcolsep}{2pt}
\begin{tabular}{@{}llcccc@{}}
\toprule
Model & Dataset & GradCAM & GradCAM++ & Eigen-CAM & \nobsp{} \\
\midrule
ResNet-18 & Imagenette & \textbf{0.538} & 0.534 & 0.474 & 0.533 \\
ResNet-18 & CIFAR-10 & 0.629 & 0.610 & \textbf{0.647} & 0.611 \\
ResNet-18 & CIFAR-100 & 0.738 & 0.742 & \textbf{0.768} & 0.746 \\
ResNet-18 & TinyImageNet & \textbf{0.718} & 0.699 & 0.632 & 0.697 \\
ResNet-18 & STL10 & \textbf{0.611} & 0.604 & 0.600 & 0.601 \\
ResNet-18 & DermaMNIST & \textbf{0.582} & 0.552 & 0.534 & 0.549 \\
MobileNetV2 & Imagenette & \textbf{0.453} & 0.439 & 0.403 & 0.410 \\
MobileNetV2 & CIFAR-10 & 0.601 & 0.604 & 0.580 & \textbf{0.614} \\
MobileNetV2 & CIFAR-100 & \textbf{0.765} & 0.754 & 0.740 & 0.758 \\
MobileNetV2 & TinyImageNet & \textbf{0.659} & \textbf{0.659} & 0.635 & 0.656 \\
MobileNetV2 & STL10 & \textbf{0.527} & 0.515 & 0.492 & 0.513 \\
MobileNetV2 & DermaMNIST & \textbf{0.553} & 0.550 & 0.533 & 0.535 \\
EfficientNet-B0 & Imagenette & \textbf{0.444} & 0.432 & 0.372 & 0.424 \\
EfficientNet-B0 & CIFAR-10 & \textbf{0.551} & 0.549 & 0.521 & 0.540 \\
EfficientNet-B0 & CIFAR-100 & 0.721 & 0.719 & 0.693 & \textbf{0.723} \\
EfficientNet-B0 & TinyImageNet & \textbf{0.720} & 0.709 & 0.659 & 0.703 \\
EfficientNet-B0 & STL10 & \textbf{0.567} & 0.556 & 0.499 & 0.556 \\
EfficientNet-B0 & DermaMNIST & 0.561 & \textbf{0.563} & 0.559 & 0.552 \\
RegNetY-400MF & Imagenette & \textbf{0.454} & 0.438 & 0.400 & 0.447 \\
RegNetY-400MF & CIFAR-10 & 0.593 & \textbf{0.597} & 0.587 & 0.595 \\
RegNetY-400MF & CIFAR-100 & \textbf{0.718} & 0.703 & 0.702 & 0.702 \\
RegNetY-400MF & TinyImageNet & \textbf{0.685} & 0.674 & 0.640 & 0.673 \\
RegNetY-400MF & STL10 & \textbf{0.653} & 0.645 & 0.608 & 0.642 \\
RegNetY-400MF & DermaMNIST & \textbf{0.516} & 0.478 & 0.460 & 0.478 \\
\bottomrule
\end{tabular}
\end{table}

\subsection*{C.2 Legacy Pixel Level Faithfulness Correlation}

Earlier versions of this evaluation used the Quantus Faithfulness Correlation metric, which perturbs individual pixels and correlates attribution with the resulting prediction change. Table~\ref{tab:vision_faithfulness} reproduces those results for transparency. All methods score near zero on this metric because pixel level perturbation is mismatched to the spatial granularity of class activation maps, which are produced at the resolution of the final convolutional feature map and upsampled. This mismatch motivated the region level protocol adopted in the main text: Region Deletion AUC perturbs contiguous patches at a scale commensurate with CAM resolution.

\begin{table}[t]
\centering
\caption{Legacy Faithfulness Correlation on vision benchmarks (synchronized subsets of 20--50 images per row). All methods score near zero; see the text of this section. Model 1 is ResNet-18, model 2 is EfficientNet-B0, and model 3 is RegNetY-400MF.}
\label{tab:vision_faithfulness}
\footnotesize
\setlength{\tabcolsep}{2pt}
\begin{tabular}{@{}llcccc@{}}
\toprule
Model & Dataset & GradCAM & GradCAM++ & Eigen-CAM & \nobsp{} \\
\midrule
\multirow{3}{*}{1}
  & CIFAR-10 & 0.041 & 0.048 & 0.021 & 0.011 \\
  & TinyImageNet & -0.004 & -0.002 & -0.035 & 0.023 \\
  & DermaMNIST & 0.040 & 0.032 & 0.010 & 0.023 \\
\midrule
\multirow{3}{*}{2}
  & CIFAR-10 & 0.037 & 0.039 & 0.030 & 0.010 \\
  & TinyImageNet & 0.058 & 0.069 & 0.025 & 0.060 \\
  & DermaMNIST & -0.016 & -0.035 & -0.020 & -0.036 \\
\midrule
\multirow{3}{*}{3}
  & CIFAR-10 & 0.007 & 0.012 & 0.026 & 0.050 \\
  & TinyImageNet & -0.018 & 0.014 & 0.022 & -0.028 \\
  & DermaMNIST & 0.057 & 0.025 & 0.019 & 0.058 \\
\bottomrule
\end{tabular}
\end{table}


\section*{Supplementary Material D: Extended Tabular Results}

\subsection*{D.1 Per Feature Function Reproduction Scores}

Table~\ref{tab:frs_per_feature} reports the direct FRS R$^2$ of each method on each nonzero component of the synthetic additive benchmark, together with the mean absolute attribution on the null feature. \nobsp{} attains the highest recovery on every component, with the largest margins on the cubic and sinusoidal components; Integrated Gradients degrades most on the cubic component.

\begin{table}[t]
\centering
\caption{Per feature direct FRS R$^2$ (higher is better) and null feature mean absolute attribution (lower is better) on the synthetic additive benchmark, mean $\pm$ SD over seeds 0--2. Bold marks the numerical best per row.}
\label{tab:frs_per_feature}
\footnotesize
\setlength{\tabcolsep}{3pt}
\begin{tabular}{@{}lccc@{}}
\toprule
Component & \nobsp{} & KernelSHAP & IG \\
\midrule
$|x_1|$ & \textbf{0.985 $\pm$ 0.003} & 0.974 $\pm$ 0.004 & 0.933 $\pm$ 0.006 \\
$x_2^3$ & \textbf{0.986 $\pm$ 0.002} & 0.952 $\pm$ 0.004 & 0.839 $\pm$ 0.036 \\
$e^{x_3}$ & \textbf{0.999 $\pm$ 0.001} & 0.995 $\pm$ 0.001 & 0.984 $\pm$ 0.003 \\
$\sin(2 x_4)$ & \textbf{0.987 $\pm$ 0.005} & 0.941 $\pm$ 0.007 & 0.932 $\pm$ 0.005 \\
Null $x_5$ (abs.) & \textbf{0.009 $\pm$ 0.003} & 0.014 $\pm$ 0.001 & 0.022 $\pm$ 0.003 \\
\bottomrule
\end{tabular}
\end{table}

\subsection*{D.2 Local Conditional Pairwise Delta Agreement}

For real datasets, ground truth component functions are unavailable. As a diagnostic companion, we introduce a local conditional pairwise delta agreement (LCPDA) analysis, inspired by local fidelity and infidelity style evaluation. For pairs of test samples that share the predicted class (classification only), differ in one feature by at least the median nonzero delta, and are close in all other features (15th percentile distance threshold, at most 800 pairs per feature), we compare the model output delta with the explanation delta assigned to that feature, and report the Pearson correlation and normalized RMSE between the two across pairs. \nobsp{} is calibrated on the training split (up to 1{,}000 samples); Integrated Gradients uses a mean baseline with 32 steps; KernelSHAP uses 40 background samples and 100 coalition samples. Values are mean $\pm$ SD over model seeds 0--2.

LCPDA is not a standard benchmark metric, and we do not use it as evidence of superiority. Results are mixed (Table~\ref{tab:lcpda}): by Pearson agreement, \nobsp{} is strongest on Iris and Breast Cancer, Integrated Gradients ties KernelSHAP on Adult, and KernelSHAP is strongest on Wine, German Credit, Bank Marketing, Covertype, Diabetes, and California Housing. We report it alongside the standard metrics and the synthetic ground truth FRS for transparency.

\begin{table*}[t]
\centering
\caption{LCPDA on real tabular datasets: Pearson correlation between model output deltas and explanation deltas (higher is better) and normalized RMSE (lower is better), mean $\pm$ SD over seeds 0--2. Bold marks the numerical best per row and metric.}
\label{tab:lcpda}
\scriptsize
\setlength{\tabcolsep}{2pt}
\begin{tabular}{@{}lcccccc@{}}
\toprule
& \multicolumn{3}{c}{Pearson $\uparrow$} & \multicolumn{3}{c}{NRMSE $\downarrow$} \\
\cmidrule(lr){2-4} \cmidrule(lr){5-7}
Dataset & \nobsp{} & IG & KSHAP & \nobsp{} & IG & KSHAP \\
\midrule
Iris & \textbf{0.503 $\pm$ 0.020} & 0.429 $\pm$ 0.025 & 0.484 $\pm$ 0.024 & \textbf{0.867 $\pm$ 0.013} & 0.935 $\pm$ 0.033 & 0.881 $\pm$ 0.021 \\
Wine & 0.254 $\pm$ 0.016 & 0.228 $\pm$ 0.012 & \textbf{0.275 $\pm$ 0.009} & 0.976 $\pm$ 0.002 & 0.992 $\pm$ 0.009 & \textbf{0.970 $\pm$ 0.004} \\
Breast Cancer & \textbf{0.434 $\pm$ 0.017} & 0.401 $\pm$ 0.014 & 0.396 $\pm$ 0.008 & 0.938 $\pm$ 0.004 & \textbf{0.933 $\pm$ 0.008} & 0.947 $\pm$ 0.007 \\
Adult & 0.629 $\pm$ 0.035 & \textbf{0.862 $\pm$ 0.027} & \textbf{0.862 $\pm$ 0.026} & 0.787 $\pm$ 0.027 & \textbf{0.509 $\pm$ 0.041} & 0.522 $\pm$ 0.037 \\
German Credit & 0.204 $\pm$ 0.012 & 0.216 $\pm$ 0.013 & \textbf{0.259 $\pm$ 0.006} & 0.983 $\pm$ 0.003 & 0.999 $\pm$ 0.004 & \textbf{0.968 $\pm$ 0.001} \\
Bank Marketing & 0.417 $\pm$ 0.020 & 0.434 $\pm$ 0.031 & \textbf{0.522 $\pm$ 0.023} & 0.938 $\pm$ 0.027 & 0.945 $\pm$ 0.029 & \textbf{0.869 $\pm$ 0.013} \\
Covertype & 0.297 $\pm$ 0.015 & 0.313 $\pm$ 0.029 & \textbf{0.384 $\pm$ 0.029} & 1.032 $\pm$ 0.032 & 1.088 $\pm$ 0.031 & \textbf{0.947 $\pm$ 0.018} \\
Diabetes & 0.519 $\pm$ 0.165 & 0.630 $\pm$ 0.002 & \textbf{0.658 $\pm$ 0.004} & 0.951 $\pm$ 0.266 & 0.777 $\pm$ 0.002 & \textbf{0.765 $\pm$ 0.002} \\
California Housing & 0.668 $\pm$ 0.081 & 0.600 $\pm$ 0.010 & \textbf{0.745 $\pm$ 0.038} & 0.772 $\pm$ 0.082 & 0.948 $\pm$ 0.006 & \textbf{0.700 $\pm$ 0.038} \\
\bottomrule
\end{tabular}
\end{table*}